\newtheorem{corollary}{Corollary}
\appto\TPTnoteSettings{\footnotesize}
\renewcommand*{\backref}[1]{}
\renewcommand*{\backrefalt}[4]{%
    \ifcase #1 %
    \or        (Cited on page~#2.)%
    \else      (Cited on pages~#2.)%
    \fi}
\newcommand{\multiline}[1]{%
  \begin{tabularx}{\dimexpr\linewidth-\ALG@thistlm}[t]{@{}X@{}}
    #1
  \end{tabularx}
}
\algnewcommand{\Initialize}[1]{%
  \State \textbf{Initialize:} #1}
\DeclareMathOperator*{\argmin}{arg\,min}
\newcommand{\mbb}{\mathbb}
\newcommand{\mbe}{\mathbb E}
\newcommand{\lcb}{\left\{}
\newcommand{\rcb}{\right\}}
\newcommand{\lbr}{\left[}
\newcommand{\rbr}{\right]}
\newcommand{\bx}{{\mathbf x}}
\newcommand{\bxt}{{\mathbf x^{(t)}}}
\newcommand{\bxk}{{\mathbf x^{(k)}}}
\newcommand{\bxtp}{{\mathbf x^{(t+1)}}}
\newcommand{\be}{{\mathbf e}}
\newcommand{\bet}{{\mathbf e^{(t)}}}
\newcommand{\bek}{{\mathbf e^{(k)}}}
\newcommand{\bz}{{\mathbf z}}
\newcommand{\bzt}{{\mathbf z^{(t)}}}
\newcommand{\bPsi}{{\boldsymbol{\Psi}}}
\newcommand{\bPhi}{{\boldsymbol{\Phi}}}
\newcommand{\bPhit}{{\bPhi^{(t)}}}
\newcommand{\by}{{\mathbf y}}
\newcommand{\bq}{{\mathbf q}}
\newcommand{\G}{\nabla}
\tikzstyle{startstop} = [rectangle, draw, rounded corners, align=center, minimum width=3cm, minimum height=1cm,text centered]
\tikzstyle{decision} = [diamond, draw, fill=blue!20, 
\tikzstyle{block} = [rectangle, draw, fill=blue!10, align=center, rounded corners, minimum width=3cm, minimum height=1cm]
\tikzstyle{blockcast} = [rectangle, draw, fill=red!10, align=center, rounded corners, minimum width=3cm, minimum height=0.45cm]
\tikzstyle{line} = [draw, -latex']
\tikzstyle{cloud} = [draw, ellipse,fill=red!20, node distance=3cm,
\newcommand{\R}{\mathbb{R}}
\newcommand{\E}{\mathbb{E}}
\newcommand{\N}{\mathbb{N}}
\newcommand{\calN}{\mathcal{N}}
\newcommand{\bv}{\mathbf{v}}
\newcommand{\bX}{\mathbf{X}}
\newtheorem{example}{Example}
\renewcommand\@fnsymbol[1]{}
\newtheorem{remark}{Remark}
\newtheorem{assump}{Assumption}
\newtheorem{lemma}{Lemma}
\newtheorem{theorem}{Theorem}
\newtheorem{definition}{Definition}
\newcommand{\bi}{\mathbb{I}}
\author[1]{Aleksandar Armacki$^*$
}
\author[1]{Shuhua Yu$^*$}
\author[2]{Dragana Bajovi\'{c}}
\author[3]{Du\v{s}an Jakoveti\'{c}}
\author[1]{Soummya Kar}
\affil[1]{Carnegie Mellon University, Pittsburgh, PA, USA\\ \texttt{\{aarmacki,shuhuay,soummyak\}@andrew.cmu.edu }}
\affil[2]{Faculty of Technical Sciences, University of Novi Sad, Novi Sad, Serbia\\ \texttt{dbajovic@uns.ac.rs}}
\affil[3]{Faculty of Sciences, University of Novi Sad, Novi Sad, Serbia\\ \texttt{dusan.jakovetic@dmi.uns.ac.rs}}
\title{Large Deviation Upper Bounds and Improved MSE Rates of Nonlinear SGD: Heavy-tailed Noise and Power of Symmetry}
\date{}
\begin{document}

\maketitle

\begin{abstract}
    We study large deviation upper bounds and mean-squared error (MSE) guarantees of a general framework of nonlinear stochastic gradient methods in the online setting, in the presence of heavy-tailed noise. Unlike existing works that rely on the closed form of a nonlinearity (typically clipping), our framework treats the nonlinearity in a black-box manner, allowing us to provide unified guarantees for a broad class of bounded nonlinearities, including many popular ones, like sign, quantization, normalization, as well as component-wise and joint clipping. We provide several strong results for a broad range of step-sizes in the presence of heavy-tailed noise with symmetric probability density function, positive in a neighbourhood of zero and potentially unbounded moments. In particular, for non-convex costs we provide a large deviation upper bound for the minimum norm-squared of gradients, showing an asymptotic tail decay on an exponential scale, at a rate $\sqrt{t} / \log(t)$. We establish the accompanying rate function, showing an explicit dependence on the choice of step-size, nonlinearity, noise and problem parameters. Next, for non-convex costs and the minimum norm-squared of gradients, we derive the \emph{optimal} MSE rate $\widetilde{\mathcal{O}}(t^{-\nicefrac{1}{2}})$. Moreover, for strongly convex costs and the last iterate, we provide an MSE rate that can be made arbitrarily close to the \emph{optimal} rate $\mathcal{O}(t^{-1})$, improving on the state-of-the-art results in the presence of heavy-tailed noise. Finally, we establish almost sure convergence of the minimum norm-squared of gradients, providing an explicit rate, which can be made arbitrarily close to $o(t^{-\nicefrac{1}{4}})$.
\end{abstract}

\pagebreak

\section{Introduction}

Stochastic optimization is a well-studied problem, where the goal is to find a model that minimizes an expected cost, without having any knowledge of the underlying probability distribution. Formally, it can be stated as
\begin{equation}\label{eq:problem}
    \argmin_{\bx \in \R^d} \lcb f(\bx) \coloneqq \mbe_{\upsilon \sim \Upsilon} [\ell(\bx; \upsilon)] \rcb,
\end{equation} where $\bx \in \R^d$ represents model parameters, $\ell: \R^d \times \mathcal{V} \mapsto \R$ is a loss function, while $\upsilon \in \mathcal{V}$ is a random sample from the sample space $\mathcal{V}$, distributed according to the unknown probability distribution $\Upsilon$. The function $f: \mbb R^d \mapsto \mbb R$ is commonly known as the \emph{population cost}. The popularity of the formulation \eqref{eq:problem} stems from the fact that many learning problems, ranging from training generalized linear models, e.g., \cite{Shalev-Shwartz_sco}, to training neural networks, e.g., \cite{simsekli2019tail}, can be cast as instances of \eqref{eq:problem}.{\footnote{As can be seen in the Appendix, our proofs do not utilize the specific structure of the cost $f$ in \eqref{eq:problem}. We chose the form in \eqref{eq:problem}, as it is widely used and subsumes several popular learning paradigms, such as online learning and empirical risk minimization.}}

A popular method for solving \eqref{eq:problem} is the \emph{stochastic gradient descent} (SGD), which dates back to 1951 and the seminal work of Robbins and Monro \cite{robbins1951stochastic}. The idea of SGD is to update model parameters using only gradients of the loss {$\ell$} evaluated at random samples $\upsilon$. Equivalently, SGD can be seen as having access to noisy versions of gradients of $f$, a view taken by Robbins and Monro in their work. For a detailed review of (stochastic) gradient methods, see \cite{lan2020first,bottou2018optimization,Ablaev2024-polyak} and references therein. 

The theoretical guarantees of SGD-type methods have been studied extensively. For example, the MSE guarantees, which aim to establish the average behaviour across many sample paths of the algorithm, are typically derived under the assumption of \emph{uniformly bounded noise variance}, e.g., \cite{bach-sgd,rakhlin2012making,ghadimi2012optimal,ghadimi2013stochastic}. Additionally, works such as \cite{polyak1973pseudogradient,bottou2018optimization,khaled2022better}, consider generalizations of uniformly bounded variance assumption, by imposing different bounds on the \emph{second moment} of noise or stochastic gradients themselves. Another type of guarantees, such as large deviations and high-probability rates, study the tail behaviour of iterates generated by SGD-type algorithms, e.g., \cite{pmlr-v99-harvey19a,li2020high,liu2023high,hu2019diffusion,pmlr-v206-bajovic23a,azizian2024long}. The typical assumption in this line of work is \emph{sub-Gaussian} (i.e., \emph{light-tailed}) noise, e.g., \cite{nemirovski2009robust,rakhlin2012making,ghadimi2013stochastic,pmlr-v99-harvey19a,li-orabona,pmlr-v206-bajovic23a}. Finally, many works study the almost sure (a.s.) convergence of SGD-type algorithms, which guarantee convergence along every sample path of the algorithm, e.g., \cite{bertsekas-gradient,sgd-almost-sure,cevher-almost_sure,pmlr-v134-sebbouh21a}. 

While uniformly bounded variance and light-tailed noise are widely used in the literature, a growing body of works recently demonstrated that many modern learning models induce \emph{heavy-tailed} noise with \emph{unbounded second moment} (i.e., variance), e.g, \cite{zhang2020adaptive,heavy-tail-phenomena,simsekli2019tail}, highlighting a discrepancy between theoretical assumptions and empirical observations for many learning models. Instead, a more general assumption of \emph{uniformly bounded noise moments of order $p \in (1,2]$} is introduced, given by 
\begin{equation}\label{eq:bounded-moment}
    \E_{\upsilon \sim \Upsilon}\|\nabla \ell(\bx,\upsilon) - \nabla f(\bx) \|^p \leq \sigma^p, \tag{BM}
\end{equation} for every $\bx \in \R^d$ and some $p \in (1,2]$, $\sigma > 0$, e.g., \cite{zhang2020adaptive,liu2023breaking,liu2023stochastic,sadiev2023highprobability,improved-clipping}, subsuming the uniformly bounded variance as a special case when $p = 2$. When assumption \eqref{eq:bounded-moment} holds with $p<2$, the work \cite{zhang2020adaptive} shows that SGD diverges for $f(x) = x^2/2$ in the MSE sense for any step-size schedule, {while \cite{wang-conv-rates-infinite-var} show that SGD can converges for a sub-class of strongly convex functions, at a rate $\mathcal{O}(t^{c_p(1-p)})$, where $c_p > 0$ is a constant, possibly depending on $p$}. To control the excessive noise and ensure convergence in the general case, a clipping operator is introduced, resulting in \emph{clipped SGD} \cite{chen2020understanding,zhang2020adaptive,pmlr-v151-tsai22a,zhang2019clipping,liu2023breaking,liu2023stochastic,sadiev2023highprobability,improved-clipping}. Beyond reducing excessive noise, clipping is widely used to stabilize training \cite{pmlr-v28-pascanu13}, provide differential privacy \cite{chen2020understanding}, ensure convergence in the absence of classical smoothness \cite{zhang2019gradient} and facilitate distributed learning in the presence of adversaries \cite{shuhua-clipping}. However, clipped SGD is not the only nonlinear SGD-based method used in practice. For example, \emph{sign SGD} is used to accelerate the training of neural networks \cite{crawshaw2022general_signSGD} and drastically decrease the number of bits communicated \cite{bernstein2018signsgd,bernstein2018signsgd_iclr}, \emph{normalized SGD} is used to stabilize and speed up training \cite{hazan2015beyond,cutkosky20normalized_SGD}, with many variants of \emph{quantized SGD} routinely used to reduce the communication cost \cite{alistarh2017qsgd,gandikota2021vqsgd}. 

\begin{table}
\caption{\scriptsize MSE rates under heavy-tailed noise. Online indicates if the method is applicable in the online setting, i.e., by using an adaptive, time-varying step-size (indicated by lowercase $t$), or if it applicable in the offline setting only, i.e., requires a fixed step-size inversely proportional to a preset time horizon, which is optimized to obtain the best possible rate (indicated by uppercase $T$). The tilde symbol indicates factors poly-logarithmic in time $t$ are hidden in the asymptotic notation. It can be seen that we provide the optimal MSE rate for non-convex functions, matching that from \cite{chen2020understanding} and strictly better than the one from \cite{zhang2020adaptive} whenever $p < 2$, i.e., any heavy-tailed noise. For strongly convex costs, our rate is arbitrarily close to the optimal rate $\mathcal{O}(t^{-1})$, strictly better than the one from \cite{jakovetic2023nonlinear} and better than the one from \cite{sadiev2023highprobability} whenever $p < \nicefrac{2}{(1+\epsilon)}$, for any $\epsilon \in (0,\nicefrac{1}{2})$.}
\label{tab:mse}
\begin{adjustwidth}{-1in}{-1in} 
\begin{center}
\begin{threeparttable}
\begin{small}
\begin{sc}
\begin{tabular}{cccccc}
\toprule
\multicolumn{1}{c}{\rule{0pt}{2.5ex}\scriptsize Cost$^*$} & \multicolumn{1}{c}{\scriptsize Work} & \multicolumn{1}{c}{\scriptsize Nonlinearity} & \multicolumn{1}{c}{\scriptsize Noise} & \multicolumn{1}{c}{\scriptsize Online} & \multicolumn{1}{c}{\scriptsize Rate}\\
\midrule
\multirow{3.5}{*}{\scriptsize Non-convex} & \scriptsize \cite{zhang2020adaptive} & \scriptsize Clipping only & $\substack{\text{\scriptsize{bounded moment of}} \\ \text{\scriptsize{order }} p \in (1,2]}$ & \ding{55} & \scriptsize $\mathcal{O}\left(T^{\frac{2(1 - p)}{3p - 2}} \right)$ \\
\cmidrule{2-6}
& \scriptsize \cite{chen2020understanding}$^\circ$ & \scriptsize Clipping only & \multirow{2.5}{*}{$\substack{\text{\scriptsize{symmetric pdf,}} \\ \text{\scriptsize{positive around}} \\ \text{\scriptsize{zero}}}$} & \ding{55} & \scriptsize $\mathcal{O}\left(T^{-\nicefrac{1}{2}}\right)$ \\
& \scriptsize This paper & $\substack{\text{\scriptsize Component-wise} \\ \text{\scriptsize and joint}}$ & & \ding{52} & \scriptsize $\widetilde{\mathcal{O}}\left(t^{-\nicefrac{1}{2}}\right)^\star$ \\
\midrule
\multicolumn{1}{c}{\multirow{3.5}{*}{ $\substack{\text{\scriptsize Strongly} \\ \text{\scriptsize convex}}$}} & \scriptsize\cite{sadiev2023highprobability}$^\S$ & \scriptsize Clipping only & $\substack{\text{\scriptsize{bounded moment}} \\ \text{\scriptsize{of order }} p \in (1,2]}$ & \ding{55} & \scriptsize $\mathcal{O}\left(T^{\frac{2(1-p)}{p}} \right)$ \\ 
\cmidrule{2-6}
& \scriptsize \cite{jakovetic2023nonlinear} & \multirow{2}{*}{$\substack{\text{\scriptsize Component-wise} \\ \text{\scriptsize and joint}}$} & \multirow{2}{*}{$\substack{\text{\scriptsize{symmetric pdf,}} \\ \text{\scriptsize{positive around}} \\ \text{\scriptsize{zero}}}$} & \ding{52} & \scriptsize $\mathcal{O}\left(t^{-\zeta} \right)^\dagger$
\\
\multicolumn{1}{c}{} & \scriptsize This paper & & & \ding{52} & \scriptsize $\mathcal{O}\left(t^{-1 + \epsilon}\right)^\ddagger$ \\
\bottomrule
\end{tabular}
\end{sc}
\end{small}
\begin{tablenotes}\scriptsize
    \item[*] The metric for non-convex costs used in both \cite{zhang2020adaptive,chen2020understanding} and our work is given by $\min_{k \in [t]}\E\min\{\|\nabla f(\bxk) \|,\|\nabla f(\bxk) \|^2\}$. The metric used for strongly convex costs in all three works is the standard metric $\E[f(\bxt) - f^\star]$, where $f^\star = \min_{\bx \in \R^d}f(\bx)$ is the optimal value.
    
    \item[$\circ$] While the results from \cite{chen2020understanding} hold for some noises with non-symmetric PDF, their main insights come from considering symmetric noise. See the literature review for a more detailed discussion. 

    \item[$\star$] The MSE rate in the table is established for the step-size choice $\eta_t = \nicefrac{a}{\sqrt{t+1}}$. However, we provide guarantees for a wide range of step-sizes of the form $\eta_t = \nicefrac{a}{(t+1)^\delta}$, for any $\delta \in [\nicefrac{1}{2},1)$ and $a > 0$. 

    \item[$\S$] The authors in \cite{zhang2020adaptive} establish a MSE lower-bound $\Omega\left(T^{\nicefrac{2(1-p)}{p}}\right)$ for smooth, strongly convex functions, but show the lower-bound is attained by projected clipped SGD over a bounded domain and non-smooth cost. The authors in \cite{sadiev2023highprobability} show the lower-bound is attained for clipped SGD and smooth, strongly convex costs, however, in the high-probability sense. We include the results from \cite{sadiev2023highprobability}, as the MSE rate of the same order can be derived from their high-probability rate, by using the techniques developed in this paper. 
    
    \item[$\dagger$] The exponent $\zeta \in (0,1)$ depends on noise and choice of nonlinearity and constants like condition number and problem dimension. The authors in \cite{jakovetic2023nonlinear} show an asymptotic normality result, which, combined with a bound on the moment generating function, e.g., \cite{armacki2023high}, can be used to derive an asymptotic MSE bound with optimal rate $\mathcal{O}(t^{-1})$ and optimal dependence on problem constants.

    \item[$\ddagger$] The parameter $\epsilon \in (0,\nicefrac{1}{2})$ is user-specified and as such can be made arbitrarily small.
\end{tablenotes}
\end{threeparttable}
\end{center}
\vskip -0.1in
\end{adjustwidth}
\end{table}

\begin{table}
\caption{\scriptsize Long-term tail behaviour for non-convex costs in the presence of heavy-tailed noise. Decay rate represents the value $n_t$ such that $\limsup_{t \rightarrow \infty}n_t^{-1}\log\mathbb{P}(X_t > \theta) \leq -R_\theta$, for all $\theta  > 0$, where $X_t = \min_{k \in [t]}\|\nabla f(\bxk)\|^2$. Decay constant is the constant $R_\theta \geq 0$, capturing the dependence on different problem parameters, noise and choice of nonlinearity, possibly depending on $\theta$. The decay rate for \cite{armacki2023high,nguyen2023improved} is established from their finite-time high-probability bounds, whereas our decay rate is obtained by establishing a full large deviations principle upper bound, providing an explicit \emph{rate function}, see Sections \ref{sec:problem}, \ref{sec:main} for details. Since the long-term tail behaviour is asymptotic in nature, we do not include works that use a fixed step-size inversely proportional to a preset time horizon $T$ (e.g., \cite{sadiev2023highprobability}), as this would imply the algorithm uses a step-size equal to $0$ (as the rate is established for $T \rightarrow \infty$). The values $\sigma, p$ stem from the condition \eqref{eq:bounded-moment}, while $M_1 > 0$ is a global constant; $\alpha,\beta,\gamma$ are noise, nonlinearity and problem related constants, see Lemma \ref{lm:key-unified}; $a > 0$ is a user-specified step-size parameter, while $C$ and $L$ are the bound on nonlinearity and smoothness constant, see Assumptions \ref{asmpt:nonlin} and \ref{asmpt:L-smooth}, respectively. It can be seen that our tail decay rate is uniformly better than the decay rates resulting from \cite{armacki2023high,nguyen2023improved}. While the high-probability rate in \cite{nguyen2023improved} does not require a preset time horizon, it requires knowledge of problem parameters to tune the step-size and clipping radius, such as smoothness constant $L$, noise level $\sigma$ and moment $p$, which are typically unknown. On the other hand, our results and those from \cite{armacki2023high} are derived for general step-size schemes, requiring no knowledge of noise or problem parameters.}
\label{tab:ldp}
\begin{adjustwidth}{-1in}{-1in} 
\begin{center}
\begin{threeparttable}
\begin{small}
\begin{sc}
\begin{tabular}{cccc}
\toprule
\scriptsize Work & \scriptsize \cite{nguyen2023improved} & \scriptsize \cite{armacki2023high} & \scriptsize This paper \\
\midrule
\scriptsize Nonlinearity & \scriptsize Clipping only & \multicolumn{2}{c}{\scriptsize Component-wise and joint} \\
\midrule
\scriptsize Noise & $\substack{\text{\scriptsize bounded moment of} \\ \text{\scriptsize order $p \in (1,2]$}}$ &  \multicolumn{2}{c}{\scriptsize symmetric pdf, positive around zero}  \\
\midrule
\scriptsize Decay rate & \scriptsize ${\left(t^{\nicefrac{(p-1)}{p}} /\log(t) \right)^{\frac{2(p-1)}{(3p-2)}}}^*$  & \scriptsize $t^{\nicefrac{1}{4}}$ & \scriptsize $\sqrt{t}/\log(t)^\dagger$ \\
\midrule
\scriptsize Decay const. & \scriptsize $\frac{(L(f(\bx^{(1)}) - f^\star))^{\frac{(2-p)}{2p}}}{\sigma M_1^{\nicefrac{(p-1)}{p}}}\theta^{\frac{(p-1)}{p}}$ & \scriptsize $4a\min\{\alpha,\beta\}\min\{\sqrt{\theta},\theta\}$ & \scriptsize $\frac{\min\{\alpha^2,\beta^2\}}{16a^2C^4L^2}\min\{\sqrt{\theta},\theta\}$ \\
\bottomrule
\end{tabular}
\end{sc}
\end{small}
\begin{tablenotes}\scriptsize
    \item[*] Depending on the problem related constants and noise moment $p \in (1,2]$, there are two possible regimes for the decay rate and decay constant resulting from the high-probability bound in \cite{nguyen2023improved}. In the first regime, which occurs when $\theta$ is large (i.e., large deviations) or when the noise is extremely heavy-tailed (i.e., $\sigma$ large and $p \approx 1$), the decay rate and constant are given in the table. In the second regime, which occurs when $\theta$ is small (i.e., small deviations) or when the noise is well-behaved (i.e., $\sigma$ small and $p \approx 2$), the decay rate can be improved to $n_t = \left(\frac{t^{2(p-1)}}{ \log^{2p}(t)}\right)^{\nicefrac{1}{(3p-2)}}$, with the decay constant given by $R_\theta = \frac{\theta}{\sigma M_2\sqrt{L(f(\bx^{(1)}) - f^\star)}}$, where $M_2 > 0$ is a global constant. Note that even in the second regime, in which the decay rate from \cite{nguyen2023improved} is improved, the rate established in our work remains better. The first regime from \cite{nguyen2023improved} is listed in the table, as we are interested in the large deviation behaviour.  
    
    \item[$\dagger$] The decay rate in the table is established for the step-size choice $\eta_t = \nicefrac{a}{(t+1)^{\nicefrac{3}{4}}}$. However, we provide guarantees for a wide range of step-sizes of the form $\eta_t = \nicefrac{a}{(t+1)^\delta}$, for any $\delta \in (\nicefrac{1}{2},1)$ and $a > 0$.
\end{tablenotes}
\end{threeparttable}
\end{center}
\vskip -0.1in
\end{adjustwidth}
\end{table}

Although assumption~\eqref{eq:bounded-moment} helps bridge the gap between theory and practice, the downside is that the resulting convergence rates have exponents which explicitly depend on the noise moment $p$, and the bounds become vacuous as $p \rightarrow 1$.{\footnote{The convergence rates provided in works assuming \eqref{eq:bounded-moment}, both in high-probability and MSE sense, are of the form $\mathcal{O}(t^{c_p(1-p)})$, e.g., \cite{zhang2020adaptive,sadiev2023highprobability,nguyen2023improved,liu2023breaking}. In that sense, the guarantees become vacuous, as the exponent goes to zero when $p \rightarrow 1$. While different types of guarantees can be established, even in the presence of noise with moments of order $p \leq 1$, e.g., convergence in distribution in \cite{heavy-tail-phenomena}, our claim is made in relation to the guarantees considered in our and related works \cite{zhang2020adaptive,sadiev2023highprobability,nguyen2023improved,liu2023breaking}.}} This seems to contradict the strong performance of nonlinear SGD methods observed in practice and fails to explain the empirical success of nonlinear SGD during training of models such as neural networks in the presence of heavy-tailed noise. A growing body of works recently provided strong evidence that the stochastic noise during training of neural networks is \emph{symmetric},{\footnote{In the sense that the probability distribution/density function is symmetric around zero.}} by studying the empirical distribution of gradient noise and iterates generated during training, e.g., \cite{bernstein2018signsgd,bernstein2018signsgd_iclr,chen2020understanding,barsbey-heavy_tails_and_compressibility,pmlr-v238-battash24a}. Moreover, relying on a generalization of the central limit theorem, many works show theoretically that symmetric heavy-tailed noise, such as symmetric $\alpha$-stable distributions \cite{heavy-tail-book}, are appropriate noise models in many practical settings, e.g., when training neural networks with mini-batch SGD using a large batch size \cite{simsekli2019tail,pmlr-v108-peluchetti20b,heavy-tail-phenomena,barsbey-heavy_tails_and_compressibility}. On the other hand, works relying on assumption~\eqref{eq:bounded-moment} are inherently oblivious to this widely observed phenomena. The goal of this paper is to provide a unified study of the behaviour of nonlinear SGD methods in the presence of symmetric heavy-tailed noise and the benefits this additional noise structure brings.

\paragraph{Literature review} We now review the literature on nonlinear SGD methods, focusing on works assuming either \eqref{eq:bounded-moment} or symmetric (heavy-tailed) noise. 

\paragraph{Guarantees under assumption \eqref{eq:bounded-moment}} The authors in \cite{zhang2020adaptive} note that the gradient noise during training of BERT model resembles a Levy distribution and introduce assumption \eqref{eq:bounded-moment}. They establish a MSE lower complexity bound $\Omega\left(T^{\nicefrac{(2-2p)}{(3p-2)}} \right)$\footnote{We use uppercase $T$ to indicate an offline method, i.e., one that requires a fixed step-size depending on a preset time horizon $T$, while lowercase $t$ indicates an online method, which does not require a preset time horizon and allows for a time-varying step-size. The distinction is important, as the rates obtained for offline methods require optimizing the step-size choice, making it dependent on various unknown problem related parameters and preset time horizon $T$. Additionally, online methods can be applied in the offline settings (when all the data is available beforehand), whereas offline methods requiring a fixed step-size inversely proportional to a preset time horizon can not be applied in the online settings such as learning on continuous streams of data, as it precludes the use of a preset time horizon (or implies using a step-size equal to zero for the horizon $T = \infty$.)} for first-order methods and non-convex costs, showing it is attained by clipped SGD. The authors in \cite{liu2023stochastic} study clipped SGD for non-smooth convex costs in both MSE and high-probability sense, showing that it achieves the lower bounds $\widetilde{\Omega}\left(t^{\nicefrac{(1-p)}{p}}\right)$\footnote{The tilde symbol indicates factors poly-logarithmic in time $t$ (or failure probability for high-probability guarantees) are hidden in the asymptotic notation.} and $\widetilde{\Omega}\left(t^{\nicefrac{(2-2p)}{p}} \right)$ for convex and strongly convex costs. The work \cite{liu2023breaking} studies clipped SGD with momentum for non-convex costs, showing it achieves an accelerated high-probability rate $\widetilde{\mathcal{O}}\left(T^{\nicefrac{(2-2p)}{(2p-1)}} \right)$. The work \cite{sadiev2023highprobability} studies high-probability guarantees of clipped SGD for a broad range of stochastic optimization and variational inequality problems, notably showing it achieves the optimal rate $\widetilde{\mathcal{O}}\left(T^{\nicefrac{(2-2p)}{p}} \right)$ for strongly convex costs. Work \cite{nguyen2023improved} shows that clipped SGD achieves the optimal high-probability rates $\widetilde{\mathcal{O}}\left(t^{\nicefrac{(1-p)}{p}} \right)$ and $\widetilde{\mathcal{O}}\left(t^{\nicefrac{(2-2p)}{(3p-2)}} \right)$ for smooth convex and non-convex costs.

\paragraph{Guarantees under symmetric noise} The work \cite{bernstein2018signsgd_iclr} studies sign SGD under noise with symmetric probability density function (PDF) and component-wise bounded variance, showing a MSE rate $\mathcal{O}\left(T^{-\nicefrac{1}{2}} \right)$ for non-convex costs. The authors in \cite{chen2020understanding} analyze clipped SGD, by bounding the difference between the true noise and a user-specified symmetric one. If the original noise is symmetric,\footnote{The authors in \cite{chen2020understanding} show that their results hold for some non-symmetric noises, like positively-skewed or mixture of symmetric noises. However, the main insight comes from analyzing the behaviour of clipped SGD under symmetric noise.} they show the optimal MSE rate $\mathcal{O}\left(T^{-\nicefrac{1}{2}} \right)$ for non-convex costs, without imposing any moment requirements on the noise.\footnote{The optimality of rate from \cite{chen2020understanding} does not contradict the optimality of the one in \cite{zhang2020adaptive}, as they are derived under different assumptions. The rate $\mathcal{O}\left(T^{-\nicefrac{1}{2}}\right)$ obtained in \cite{chen2020understanding} is optimal for stochastic first-order methods and non-convex costs under noise with uniformly bounded variance and is recovered by \cite{zhang2020adaptive} when $p = 2$. However, the result in \cite{chen2020understanding} is stronger, showing that the optimal rate holds for any symmetric noise (and some non-symmetric ones), regardless of noise moments.} The work \cite{jakovetic2023nonlinear} studies a broad framework of nonlinear SGD methods, including clipping, sign and quantization, in the presence of noise with symmetric PDF, positive in a neighbourhood of zero and bounded first moment. For strongly convex costs they show almost sure convergence, asymptotic normality (i.e., \emph{small deviations}) and MSE convergence with rate $\mathcal{O}\left(t^{-\zeta} \right)$, where $\zeta \in (0,1)$ depends on the choice of nonlinearity, noise and problem parameters. The authors in \cite{armacki2023high} study high-probability guarantees of the same nonlinear framework and noise from \cite{jakovetic2023nonlinear}, noting that the moment requirement can be relaxed to include noises with unbounded first (absolute) moment. They show improved rates with \emph{constant exponents}, achieving rates $\widetilde{\mathcal{O}}\left(t^{-\nicefrac{1}{4}}\right)$, for both non-convex and strongly convex costs.\footnote{The authors in \cite{armacki2023high} also show high-probability rates of the last iterate for strongly convex costs, of order $\widetilde{\mathcal{O}}\left(t^{-\zeta^\prime} \right)$ for general nonlinearities (including component-wise and joint ones, like clipping and normalization). Here $\zeta^\prime \in (0,1)$ is of the same order as $\zeta$ from \cite{jakovetic2023nonlinear}, i.e., $\zeta^\prime = \Theta(\zeta)$.} 

\paragraph{Contributions} Works relying on assumption~\eqref{eq:bounded-moment} are inherently oblivious to the widely observed noise symmetry, providing bounds that explicitly depend on noise moments and vanish as $p \rightarrow 1$, failing to properly characterize the strong performance of SGD-type methods observed in practice. Works like \cite{chen2020understanding,jakovetic2023nonlinear,armacki2023high} exploit noise symmetry in their analysis and provide improved guarantees, however, they come with downsides. For example, while \cite{chen2020understanding} provide the optimal MSE rate for clipped SGD, their analysis relies on the closed-form expression of the clipping operator and as such can not be directly extended to other nonlinearities, like sign or quantization. Moreover, \cite{chen2020understanding} analyze only the MSE behaviour in the offline setting, providing no guarantees on the tail behaviour or the online setting.\footnote{While MSE guarantees imply a tail bound via Chebyshev's inequality, the resulting bound is loose, only providing polynomially decaying tails, rather than exponentially decaying ones.} On the other hand, while \cite{jakovetic2023nonlinear,armacki2023high} provide guarantees for a broad framework of nonlinearities, their bounds are not tight, with \cite{armacki2023high} showing exponential tail decay with a sub-optimal decay rate arbitrarily close to $t^{\nicefrac{1}{4}}$, while the MSE rate exponent $\zeta \in (0,1)$ from \cite{jakovetic2023nonlinear} decays as the problem dimension and condition number grow, resulting in a loose bound for many problems. We aim to fill this gap, by providing a unified study and sharp characterization of the behaviour of nonlinear SGD methods in the presence of symmetric heavy-tailed noise, with potentially unbounded moments. {It is also worth mentioning \cite{gorbunov2023breaking}, who combine clipping with a median-of-means gradient estimator in the presence of heavy-tailed noise with potentially unbounded moments, showing that it is possible to achieve the optimal high-probability rates for convex and strongly convex costs in the offline setting. Our work is different, in that we mainly focus on non-convex costs in the online setting, i.e., the time horizon is unknown and we only require access to a single stochastic gradient per iteration.} Our contributions are as follows. 
\begin{enumerate}
    \item We study the long-term tail probability behaviour and MSE guarantees for a broad class of nonlinear SGD methods in the online setting and the presence of heavy-tailed noise with symmetric PDF, positive around zero, with potentially unbounded moments. In the considered setting, the nonlinearity is treated in a black-box manner, allowing us to establish guarantees for a broad class of nonlinear mappings, including sign, quantization, normalization, component-wise and joint clipping. We provide strong guarantees for both non-convex and strongly convex costs.

    \item To characterize the long-term tail probability behaviour, we establish a large deviation principle (LDP) upper bound for the minimum norm-squared of gradients for non-convex costs. We show that the tail probabilities decay asymptotically at an exponential scale, with a decay rate $\sqrt{t} / \log(t)$. The accompanying rate function fully characterizes the dependence on  the noise, choice of nonlinearity and other problem parameters. In Table \ref{tab:ldp} we provide a detailed comparison of asymptotic tail decay rates with works \cite{armacki2023high,nguyen2023improved}. It can be seen that the tail decay rate in our work is better than the rates resulting from \cite{armacki2023high,nguyen2023improved}. Moreover, we establish a full rate function, allowing us to characterize the probability of belonging to more general sets compared to \cite{armacki2023high,nguyen2023improved}, providing a tight characterization of the large deviation behaviour.

    \item Next, we establish MSE guarantees, showing the \emph{optimal} rate $\widetilde{\mathcal{O}}\left(t^{-\nicefrac{1}{2}} \right)$ for non-convex costs and a rate arbitrarily close to the \emph{optimal} $\mathcal{O}\left(t^{-1}\right)$ for strongly convex costs. In Table \ref{tab:mse} we provide a detailed comparison of MSE rates with \cite{zhang2020adaptive,chen2020understanding} for non-convex and \cite{jakovetic2023nonlinear} for strongly convex costs. Compared to \cite{zhang2020adaptive}, whose rate exponents depend on noise moment $p$, our rate exponents are constant and strictly better whenever $p < 2$, i.e., for any heavy-tailed noise. Compared to \cite{chen2020understanding}, who derive the same MSE rate for clipping only in the offline setting, our rates apply to a broader range of nonlinearities and the online setting. For strongly convex costs and the last iterate, our rate is strictly better than the rate from \cite{jakovetic2023nonlinear} and better than the rate achievable under \eqref{eq:bounded-moment} whenever $p < 2$. The improved rate is derived using a simple technique that shows how exponentially decaying tail probabilities can be used to establish MSE guarantees, which is of independent interest.

    \item Additionally, we provide a.s. convergence guarantees for minimum norm-squared of gradients for non-convex costs and the last iterate for strongly convex costs. Moreover, we explicitly characterize the a.s. convergence rate, showing a rate arbitrarily close to $o(t^{-\nicefrac{1}{4}})$ for non-convex costs. For strongly convex costs we establish a rate $o(t^{-\tau})$, {where $\tau > 0$ is a problem related constant (see Section \ref{sec:main} ahead for details)}, complementing the result from \cite{jakovetic2023nonlinear}, by providing a rate of convergence in the a.s. sense. To the best of our knowledge, \emph{this is the first a.s. convergence result with an explicit rate in the presence of heavy-tailed noise}.

    \item Our work relaxes various conditions and improves on the current literature in several ways. Compared to \cite{zhang2020adaptive,nguyen2023improved,sadiev2023highprobability}, who only study clipping under the \eqref{eq:bounded-moment} assumption, we study a much broader class of nonlinearities, utilizing the widely observed symmetry of noise, requiring no moment assumptions and providing sharper rates independent of noise moments. Compared to \cite{chen2020understanding}, who study MSE guarantees in the offline setting of clipping under noise symmetry, we provide optimal MSE rates in the online setting for a much broader range of nonlinearities, as well as a sharp characterization of the tail decay rate. Compared to \cite{armacki2023high}, who provide strong guarantees for the same nonlinear framework, we establish sharper long-term tail behaviour, allow for a broader range of step-sizes and provide MSE and almost sure guarantees. Compared to \cite{jakovetic2023nonlinear}, who only study strongly convex costs, we study non-convex costs, improve the MSE guarantees for strongly convex costs, and provide a sharp characterization of the long-term tail behaviour.
\end{enumerate}

\paragraph{Technical challenges} There are several challenges toward establishing our results. First, unlike existing works, which rely on the closed form expression of the nonlinear mapping, we treat the nonlinearity in a black-box fashion. To that end, we provide a general result characterizing the behaviour of the nonlinearity with respect to the true gradient, in Lemma \ref{lm:key-unified}. Next, establishing the LDP upper bound requires careful control of the moment-generating function (MGF) and a generalized version of the Gartner-Ellis theorem, which we provide in Lemma \ref{lm:gartner-ellis}. Finally, providing the almost sure convergence rates requires tight control of the MGF of a surrogate sequence, facilitated by a careful analysis of the resulting MGF for different step-size schedules.

\paragraph{Other applications} While the main goal of this paper is addressing the heavy-tailed noise, there are many applications where nonlinear SGD arises as the natural choice. For example, clipping is widely used to ensure differential privacy and provide security in distributed learning in the presence of adversarial agents. Similarly, sign and other quantizers are used to alleviate the communication cost that arises in distributed learning. As such, the framework considered in this paper has a broad spectrum of applications and our work complements the existing studies, by providing strong theoretical guarantees of the performance of nonlinear SGD in many applications of interest, when the stochastic noise is heavy-tailed.     

\paragraph{Paper organization} The rest of the paper is organized as follows. Section \ref{sec:problem} presents the preliminaries, Section \ref{sec:method} describes the nonlinear SGD framework, Section \ref{sec:main} presents the main results and Section \ref{sec:conclusion} concludes the paper. Appendix contains results omitted from the main body. The remainder of the section introduces notation.

\paragraph{Notation} We denote real numbers and vectors by $\R$ and $\R^d$. The set of positive integers is denoted by $\N$. For $m \in \N$, we denote the set of positive integers up to and including $m$ by $[m]$, i.e., $[m] = \{1,\ldots,m\}$. Regular and bold symbols denote scalars and vectors, i.e., $x \in \R$ and $\bx \in \R^d$. Euclidean inner product and norm are denoted by $\langle \cdot,\cdot\rangle$ and $\|\cdot\|$. Notation $o(\cdot)$, $\mathcal{O}(\cdot)$, $\Omega(\cdot)$ and $\Theta(\cdot)$ is the standard asymptotic notation, i.e., for two non-negative sequences $\{a_n\}_{n \in \N}$, $\{b_n\}_{n \in \N}$, $a_n = o(b_n)$ ($a_n = \mathcal{O}(b_n)$, respectively $a_n = \Omega(b_n)$) if $\lim_{n \rightarrow \infty}\frac{a_n}{b_n} = 0$ ($\limsup_{n \rightarrow \infty}\frac{a_n}{b_n} < \infty$, respectively $\liminf_{n\rightarrow \infty}\frac{a_n}{b_n} > 0$), while $a_n = \Theta(b_n)$ if and only if $a_n = \mathcal{O}(b_n)$ and $a_n = \Omega(b_n)$.

\section{Preliminaries}\label{sec:problem}

In this section we describe the methodology used in the paper. Let $\{X_t \}_{t \in \N}$ denote a sequence of non-negative random variables generated by applying a stochastic method to solve \eqref{eq:problem}. Here, $X_t$ can be any quantity of interest, e.g., $X_t = \|\nabla f(\bxt)\|$ for non-convex, or $X_t = \|\bxt - \bx^\star\|$ for strongly convex costs. \emph{MSE guarantees} aim to quantify the behaviour of the quantity $\E\left[X_t^2\right]$. As such, the MSE guarantees describe the average performance across many runs. While MSE guarantees offer an important indicator of performance, different guarantees are required when it becomes infeasible to perform many runs of an algorithm, as is the case in many modern learning problems, e.g., in deep learning.

\emph{Large deviations} aim to quantify the tail decay of rare events on an exponential scale. In particular, for a sequence of random variables $\{X_t\}_{t \in \N}$ and any measurable set $A \subset \R$, the goal is to find a lower semi-continuous function $I: \R \mapsto [0,\infty]$, so that
\begin{equation}\label{eq:LDP}
    -\inf_{x \in A^{\degree}}I(x) \leq \liminf_{t \rightarrow \infty}\frac{1}{n_t}\log \mathbb{P}_t(A) \leq \limsup_{t \rightarrow \infty}\frac{1}{n_t}\log \mathbb{P}_t(A) \leq -\inf_{x \in \overline{A}}I(x), 
\end{equation} where $\mathbb{P}_t(A) \coloneqq \mathbb{P}(X_t \in A)$, while $A^{\degree}$, $\overline{A}$ denote the topological interior and closure of $A$ and $n_t$ is a positive sequence such that $\lim_{t \rightarrow \infty}n_t = \infty$, referred to as the \emph{decay rate}. The function $I$ is called the \emph{rate function} and is said to be a \emph{good} rate function if all of its sub-level sets are compact. When \eqref{eq:LDP} holds, the sequence $X_t$ is said to satisfy a LDP with rate function $I$.\footnote{It is sometimes said that $X_t$ satisfies a LDP when \eqref{eq:LDP} holds with $n_t = t$, while if the rate is slower, e.g., $n_t = \sqrt{t}$, it is said that $X_t$ satisfies a \emph{moderate deviation principle}, e.g., \cite{dembo2009large}. However, we follow the convention from \cite{ellis-ld} and say that $X_t$ satisfies a LDP if \eqref{eq:LDP} holds for any rate $n_t$.} If only the right-hand side inequality in \eqref{eq:LDP} is satisfied, $X_t$ satisfies a LDP upper bound. Large deviation theory has a long history \cite{varadhan,ellis-ld,dembo2009large}, with applications in various areas, such as physics, distributed detection and inference, e.g, \cite{TOUCHETTE20091,ellis-ld,braca-ld,bajovic-detection-ld,bajovic-detection-ld2,bajovic-inference-ld}. In the context of SGD methods and machine learning, large deviation guarantees have been studied in \cite{hu2019diffusion,pmlr-v206-bajovic23a,azizian2024long,lindhe-large,pmlr-v242-jongeneel24a,masegosa-large}. However, these studies focus on the behaviour of linear SGD under light-tailed noise.  

Large deviations are closely related to high-probability guarantees, where the goal is to quantify the behaviour of the tail event $\{\omega \in \Omega: \: X_t(\omega) > \theta\}$, for any $\theta > 0$, typically on an exponential scale, e.g., \cite{lan2012optimal,li-orabona,sadiev2023highprobability,nguyen2023improved}. While high-probability guarantees focus on the finite-time (i.e., short-term) regime, large deviation studies often provide a more precise characterization of the long-term tail behaviour, by establishing sharp asymptotic guarantees, e.g., \cite{pmlr-v206-bajovic23a,azizian2024long}. We utilize the large deviations approach, with the aim of providing a tight characterization of the long-term tail behaviour of the iterates of nonlinear SGD in the presence of heavy-tailed noise.

\section{Nonlinear SGD Framework}\label{sec:method}

To solve \eqref{eq:problem} in the online setting and in the presence of heavy-tailed noise, a nonlinear SGD-based framework is deployed, consisting of the following steps. First, a deterministic initial model $\bx^{(1)} \in \mbb R^d$ and a nonlinear map $\boldsymbol{\Psi}:\mbb R^d \mapsto \mbb R^d$ are chosen.\footnote{While the initial model is deterministically chosen, it can be any vector in $\R^d$. This distinction is required for the theoretical analysis in the next section.} Next, in iteration $t = 1,2,\ldots$, a random sample $\upsilon^{(t)}$ is observed and the gradient of the loss $\ell$ at the current model $\bxt$ and sample $\upsilon^{(t)}$ is evaluated.\footnote{Equivalently, we have access to a first-order oracle that streams gradients of $\ell$, instead of samples.} Then, the current model is updated, according to the rule
\begin{equation}\label{eq:update}
    \bxtp = \bxt - \eta_t\mathbf{\Psi}\left(\nabla \ell(\bxt;\upsilon^{(t)})\right),
\end{equation} where $\eta_t > 0$ is the step-size at iteration $t$, and the process is then repeated. We make the following assumption on the nonlinear map $\bPsi$.

\begin{assump}\label{asmpt:nonlin}
The nonlinear map $\bPsi: \mbb R^d \mapsto \mbb R^d$ is either of the form $\bPsi(\bx) = \bPsi(x_1,\dots,x_d) = \lbr \calN_1(x_1), \dots, \calN_1(x_d) \rbr^\top$ (i.e., \emph{component-wise}) or $\bPsi(\bx) = \bx\calN_2(\|\bx\|)$ (i.e., \emph{joint}), where the mappings $\calN_1,\: \calN_2: \R \mapsto \R$ satisfy
\begin{enumerate}
    \item $\calN_1,\calN_2$ are continuous almost everywhere (with respect to the Lebesgue measure), with $\calN_1$ piece-wise differentiable, while the mapping $a \mapsto a\calN_2(a)$ is non-decreasing.
    \item $\calN_1$ is monotonically non-decreasing and odd, while $\calN_2$ is non-increasing.
    \item $\calN_1$ is either discontinuous at zero, or strictly increasing on $(-c_1,c_1)$, for some $c_1 > 0$, with $\calN_2(a) > 0$, for any $a > 0$.
    \item $\calN_1$ and $\bx\calN_2(\|\bx\|)$ are uniformly bounded, i.e., for some $C_1,\: C_2 > 0$, and all $x \in \R$, $\bx \in \R^d$, we have $|\calN_1(x)| \leq C_1$ and $\|\bx\calN_2(\|\bx\|)\|\leq C_2$.
\end{enumerate}
\end{assump}

The general form of Assumption \ref{asmpt:nonlin} allows us to treat the nonlinearity in a black-box manner, only relying on its general properties and not the closed-form expression. Note that the fourth property implies $\|\bPsi(\bx)\| \leq C_1\sqrt{d}$ for component-wise and $\|\bPsi(\bx)\|\leq C_2$ for joint nonlinearities. To facilitate a unified presentation, we will use the general bound $\|\bPsi(\bx)\| \leq C$ and specialize where appropriate. Assumption \ref{asmpt:nonlin} is satisfied by a wide class of nonlinearities, including many popular ones, such as
\begin{enumerate}
    \item \emph{Sign}: $[\bPsi(\bx)]_i = \text{sign}(x_i)$, for each component $i = 1,\ldots,d$;
    \item \emph{Component-wise clipping}: $[\bPsi(\bx)]_i = x_i$, for $|x_i| \leq m$, and $[\bPsi(\bx)]_i = m\cdot\text{sign}(x_i)$, for $|x_i| > m$, $i = 1,\ldots,d$, where $m > 0$ is a user-specified constant;
    \item \emph{Quantization}: for each $i = 1,\ldots,d$, let $[\bPsi(\bx)]_i = r_j$, for $x_i \in (q_j,q_{j+1}]$, with $j = 0,\ldots,J-1$ and $-\infty = q_0 < q_1 <\ldots < q_J = +\infty$, where $r_j$'s and $q_j$'s are chosen such that each component of $\bPsi$ is an odd function, and we have $\max_{j \in \{0,\ldots,J-1\}}|r_j| < R$, for some user-specified constant $R > 0$;
    \item \emph{Clipping}: $\bPsi(\bx) = \min\left\{1, \nicefrac{M}{\|\bx\|}\right\}\bx$, for some user-specified constant $M > 0$;
    \item \emph{Normalization}: $\bPsi(\bx) = \nicefrac{\bx}{\|\bx\|}$ if $\bx \neq \mathbf{0}$, otherwise $\bPsi(\bx) = \mathbf{0}$.
\end{enumerate}

\section{Main Results}\label{sec:main}

In this section we present the main results of the paper. Subsection \ref{subsec:prelim} outlines the assumptions, while Subsection \ref{subsec:theory-nonconv} presents the main results. Proofs of results from this section can be found in the Appendix.

\subsection{Assumptions}\label{subsec:prelim}

In this section we state the assumptions used in our work. 
{
\begin{assump}\label{asmpt:L-smooth}
The cost $f$ is bounded from below, with $L$-Lipschitz gradients, i.e., $\inf_{\bx \in \R^d}f(\bx) > -\infty$ and $\|\nabla f(\bx) - \nabla f(\by) \| \leq L\|\bx - \by\|$, for all $\bx,\by \in \R^d$.
\end{assump}

\begin{remark}
    Boundedness from below and Lipschitz continuous gradients are widely used and standard for non-convex costs, e.g., \cite{ghadimi2013stochastic,cevher-almost_sure,madden2020high,armacki2023high_old,armacki2023high}. 
\end{remark}
}
\begin{remark}
    Lipschitz gradients imply the widely used smoothness inequality, i.e., $f(\by) \leq f(\bx) + \langle\nabla f(\bx),\by - \bx\rangle + \frac{L}{2} \|\bx - \by \|^2$ for all $\bx, \by \in \R^d$, see, e.g., \cite{bertsekas-gradient,nesterov-lectures_on_cvxopt,Wright_Recht_2022}.
\end{remark}

We denote the infimum of $f$ by $f^\star \coloneqq \inf_{\bx \in \R^d}f(\bx)$. In addition to Assumption \ref{asmpt:L-smooth}, we will sometimes use the following assumption.

\begin{assump}\label{asmpt:cvx}
The population cost is $\mu$-strongly convex, i.e., for some $\mu > 0$ and every $\bx, \by \in \R^d$, we have $f(\by) \geq f(\bx) + \langle \nabla f(\bx), \by - \bx \rangle + \frac{\mu}{2}\|\by - \bx\|^2$. 
\end{assump}

{Defining the stochastic noise at iteration $t$ as $\bzt \coloneqq \nabla \ell(\bxt;\upsilon^{(t)}) - \G f(\bxt)$,} we can rewrite the update \eqref{eq:update} as
\begin{align}\label{eq:nonlin-sgd}
    \bxtp = \bxt - \eta_t \boldsymbol{\Psi}(\G f(\bxt) + \bzt). 
\end{align} To simplify the notation, we introduce the shorthand $\bPsi^{(t)} \coloneqq \boldsymbol{\Psi}(\nabla f(\bxt) + \bzt)$. {Define the natural filtration as $\mathcal{F}_1 = \{\emptyset,\Omega\}$ and $\mathcal{F}_t \coloneqq \sigma\left(\{\bx^{(2)},\ldots,\bx^{(t)}\}\right)$, for $t \geq 2$.\footnote{Recall that in our setup, the initialization $\bx^{(1)} \in \R^d$ is an arbitrary, but deterministic quantity.}} We make the following assumption on the sequence of noise vectors $\{\bzt \}_{t \in \N}$.

\begin{assump}\label{asmpt:noise}
The noise vectors $\{\bzt \}_{t \in \N}$ are independent and identically distributed, with a symmetric PDF $P$, which is strictly positive around zero, i.e.,  $P(-\bz) = P(\bz)$, for all $\bz \in \R^d$ and $P(\bz) > 0$, for all $\|\bz\| \leq B_0$ and some $B_0 > 0$. {Moreover, the noise at time $t$ is independent of history up to $t$, i.e., $\E[\bzt \: \vert \: \mathcal{F}_t] = \E[\bzt]$}. 
\end{assump}

\begin{remark}
    Assumption \ref{asmpt:noise} imposes mild requirements, satisfied by many \linebreak distributions, such as Gaussian, or a broad class of symmetric heavy-tailed $\alpha$-stable distributions \cite{stable-distributions,csimcsekli2019heavy,heavy-tail-book}. As discussed in the introduction, symmetric heavy-tailed noise was widely observed during training of deep learning models, across datasets, architectures and batch sizes \cite{bernstein2018signsgd,bernstein2018signsgd_iclr,chen2020understanding,barsbey-heavy_tails_and_compressibility,pmlr-v238-battash24a}. Building on the generalized CLT, works \cite{simsekli2019tail,pmlr-v108-peluchetti20b,heavy-tail-phenomena,barsbey-heavy_tails_and_compressibility} provide theoretical justification for this phenomena, e.g., when training neural nets with a large batch size, effectively ``symmetrizing'' the noise.
\end{remark}

For a fixed vector $\bx \in \R^d$, define the function $\bPhi(\bx) \coloneqq \mbe_{\bz} [\bPsi (\bx + \bz)] = \int \bPsi(\bx+\bz) P(\bz) d\bz$,\footnote{If $\bPsi$ is a component-wise nonlinearity, then $\bPhi$ is a vector with components $\phi_i(x_i) = \mbe_{z_i}[\calN_1(x_i + z_i)]$, where $\E_{z_i}$ is the marginal expectation with respect to the $i$-th noise component, $i \in [d]$.} where the expectation is taken with respect to the random vector $\bz$. We use the shorthand $\bPhit \coloneqq \E\left[\bPsi (\nabla f(\bxt) + \bzt) \: \vert \: \mathcal{F}_t \right]$.\footnote{Conditioning on $\mathcal{F}_t$ ensures that the quantity $\nabla f(\bxt)$ is deterministic and $\bPhit$ is well defined.} The vector $\bPhit$ represents the ``denoised'' version of $\bPsi^{(t)}$. We can rewrite the update rule \eqref{eq:nonlin-sgd} as
\begin{equation}\label{eq:nonlin-sgd2}
    \bxtp = \bxt - \eta_t\bPhit + \eta_t\bet,    
\end{equation} where $\bet \coloneqq \boldsymbol{\Phi}^{(t)} - \boldsymbol{\Psi}^{(t)}$, represents the \emph{effective noise}. Noting that $\bet$ is bounded (since both $\bPsi^{(t)}$ and $\bPhit$ are), it readily follows that the effective noise is light-tailed, even though the original noise may not be, e.g., \cite{vershynin_2018,jin2019short}. Further properties of the effective noise vectors are provided in Appendix \ref{app:proofs}.

\subsection{Guarantees}\label{subsec:theory-nonconv}

In this subsection we provide the guarantees of the proposed framework. We begin with a result on the behaviour of the ``denoised'' direction $\bPhi(\bx)$.

\begin{lemma}\label{lm:key-unified}
    Let Assumptions \ref{asmpt:nonlin} and \ref{asmpt:noise} hold. Then, for any $\bx \in \R^d$, we have $\langle \bPhi(\bx),\bx\rangle \geq \min\left\{\alpha\|\bx\|,\beta\|\bx\|^2 \right\}$, where $\alpha,\beta > 0$ are noise, nonlinearity and problem dependent constants.
\end{lemma}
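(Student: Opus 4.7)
The plan is to exploit the noise symmetry $P(-\bz) = P(\bz)$ throughout via the identity $\bPhi(\bx) = \tfrac{1}{2}\E[\bPsi(\bx+\bz) + \bPsi(\bx-\bz)]$. I handle the component-wise and joint nonlinearities separately, and within each case split the argument into a quadratic regime for small $\|\bx\|$ and a linear regime for large $\|\bx\|$; the bound $\min\{\alpha\|\bx\|,\beta\|\bx\|^2\}$ is the natural splice at the crossover scale $\|\bx\| = \alpha/\beta$.

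In the component-wise case the problem reduces to a scalar estimate on each coordinate $i$. The marginal density $p_i$ of $z_i$ inherits the symmetry $p_i(-z) = p_i(z)$ by integrating out the other coordinates of $P$. Thus $\phi_i(x) \coloneqq \E[\mathcal{N}_1(x+z_i)]$ admits the representation $\phi_i(x) = \int_0^\infty [\mathcal{N}_1(x+z) - \mathcal{N}_1(z-x)]\,p_i(z)\,dz$ for $x>0$, turning monotonicity of $\mathcal{N}_1$ into non-negativity of $\phi_i$. A linear-in-$x$ lower bound $\phi_i(x) \geq \beta_0 x$ for $x \in (0,a_0)$ then follows from Assumption~\ref{asmpt:nonlin}(3): either strict monotonicity of $\mathcal{N}_1$ on $(-c_1,c_1)$ supplies an increment of order $x$ when integrated over a neighbourhood of $z=0$ where $p_i$ is positive, or the jump of $\mathcal{N}_1$ at zero supplies a uniform gap whose integration against $p_i$ on $(0,x)$ yields a linear-in-$x$ contribution. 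For $x \geq a_0$ monotonicity gives $\phi_i(x) \geq \phi_i(a_0) \eqqcolon \alpha_0 > 0$. Summing across coordinates, $\sum_i x_i\phi_i(x_i) \geq \sum_i \min\{\alpha_0|x_i|, \beta_0 x_i^2\}$, and separating coordinates into ``large'' and ``small'' sets to compare $\ell_1$ with $\ell_2$ norms yields the claimed bound in terms of $\|\bx\|$.

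For the joint case, writing $z = \langle\bz,\bx\rangle/\|\bx\|$, the symmetrized integrand equals $\|\bx\|^2(\mathcal{N}_2(\|\bx+\bz\|)+\mathcal{N}_2(\|\bx-\bz\|)) + \|\bx\| z(\mathcal{N}_2(\|\bx+\bz\|)-\mathcal{N}_2(\|\bx-\bz\|))$; the cross-term is always non-positive, since $\mathcal{N}_2$ is non-increasing and $\|\bx+\bz\|-\|\bx-\bz\|$ has the same sign as $z$. Rather than discarding it, I would rewrite each summand as $\|\bx\|\,\tfrac{\|\bx\|\pm z}{\|\bx\pm\bz\|}\,g(\|\bx\pm\bz\|)$, where $g(a) = a\mathcal{N}_2(a)$ is non-decreasing by Assumption~\ref{asmpt:nonlin}(1). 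Restricting to the event $E_\bx = \{\|\bz\| \leq \|\bx\|/2\}$ makes the ratios $(\|\bx\|\pm z)/\|\bx\pm\bz\|$ bounded below by a positive constant and gives $\langle\bPhi(\bx),\bx\rangle\,\mathbf{1}_{E_\bx} \gtrsim \|\bx\| g(\|\bx\|/2)$, while on $E_\bx^c$ the integrand is bounded in absolute value by $2C\|\bx\|$ via Assumption~\ref{asmpt:nonlin}(4). The linear regime follows for $\|\bx\|$ large since $\mathbb{P}(E_\bx) \to 1$ and $g(\|\bx\|/2) \geq g(B_0/2) > 0$; the quadratic regime for $\|\bx\|$ small comes by instead conditioning on a fixed positive-probability event $\{\|\bz\|\leq B_0/4\}$ (on which $P$ is strictly positive), using continuity of $\bPhi$ together with $\bPhi(0) = 0$, and expanding $g$ near zero to extract the leading $\|\bx\|$ behaviour in the projection onto $\bx$.

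The main obstacle is the joint case: symmetrization leaves behind a strictly negative cross-term, so naively dropping it destroys the bound. The decisive step is repackaging the entire symmetrized expression through the non-decreasing function $g(a) = a\mathcal{N}_2(a)$ and restricting attention to a favourable event determined by the positivity radius $B_0$. Careful bookkeeping then yields explicit constants $\alpha,\beta>0$ depending on $B_0$, the mass of $P$ in a neighbourhood of zero, and the values of $\mathcal{N}_2$ (respectively the jump or slope of $\mathcal{N}_1$) at scales comparable to $B_0$.
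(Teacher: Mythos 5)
Your component-wise argument is essentially sound and is a legitimate variant of the paper's route: where the paper invokes the Polyak--Tsypkin lemma (differentiability of $\phi_i$ at zero with $\phi_i^\prime(0)>0$) and then passes through $\max_i$ and the $\ell_\infty$/$\ell_2$ comparison, you extract the scalar linear lower bound near zero directly from Assumption \ref{asmpt:nonlin}(3) (jump at zero, or strict increase integrated over a neighbourhood where the marginal density is positive) and aggregate with a large/small coordinate split; both work, and your aggregation even avoids the explicit $1/\sqrt{d}$, $1/d$ loss in that step (modulo the same implicit requirement of a positive lower bound for the marginal density on an interval around zero that the cited lemma also hides).

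The joint case, however, has a genuine gap, and it is exactly at the step the paper spends most of its effort on. Your favourable event $E_\bx=\{\|\bz\|\leq\|\bx\|/2\}$ only works when the gradient dominates the noise; for the quadratic regime ($\|\bx\|$ small) the only noise realizations available satisfy $\|\bz\|\gg\|\bx\|$, and there the summand with the unfavourable sign can cancel the favourable one. Concretely, writing the symmetrized integrand as $\|\bx\|\big[(\|\bx\|+z)\calN_2(\|\bx+\bz\|)+(\|\bx\|-z)\calN_2(\|\bx-\bz\|)\big]$ with $z=\langle\bz,\bx\rangle/\|\bx\|$, for normalization in one dimension this is identically zero whenever $|z|>\|\bx\|$ (since $a\calN_2(a)\equiv 1$), so conditioning on the fixed event $\{\|\bz\|\leq B_0/4\}$, ``continuity of $\bPhi$ with $\bPhi(0)=0$'' (which only gives upper-type information $o(\|\bx\|)$), and ``expanding $g$ near zero'' cannot produce a lower bound of order $\|\bx\|^2$: the positive contribution for small $\|\bx\|$ comes from the angular averaging over directions of $\bz$, not from the radial behaviour of $g$. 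The paper's proof supplies precisely the missing ingredient: it restricts to the sector $J_2(\bx)=\{\langle\bz,\bx\rangle/(\|\bz\|\|\bx\|)\in[0,\tfrac12]\}$ and uses the inequality $|\calN_2(\|\bx+\bz\|)-\calN_2(\|\bx-\bz\|)|\leq(\|\bx\|/\|\bz\|)[\calN_2(\|\bx+\bz\|)+\calN_2(\|\bx-\bz\|)]$ (Lemma \ref{lm:jakovetic-joint}) to show the cross-term eats at most half of the main term, yielding $M_2\geq\tfrac12\|\bx\|^2[\calN_2(\|\bx+\bz\|)+\calN_2(\|\bx-\bz\|)]$ uniformly over that sector, including $\|\bz\|\gg\|\bx\|$; integrating over $J_2(\bx)\cap\{\|\bz\|\leq C_0\}$ then gives the $\beta\|\bx\|^2$ bound. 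Without this (or an equivalent estimate) your plan also cannot cover the intermediate range of $\|\bx\|$: your linear bound only kicks in once $\mathbb{P}(\|\bz\|>\|\bx\|/2)$ is small, i.e.\ for $\|\bx\|$ above a noise quantile $R$, while your quadratic regime is confined to $\|\bx\|\lesssim B_0$, and the lemma requires the bound for all $\bx\in\R^d$; even a compactness patch on the annulus in between needs pointwise positivity of $\langle\bPhi(\bx),\bx\rangle$, which again rests on the sector argument you omit.
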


Lemma \ref{lm:key-unified} is a key technical result, which characterizes the behaviour of the inner product between $\bPhi(\bx)$ and $\bx$ and facilitates the main results of the paper. We specialize the constants $\alpha$ and $\beta$ for component-wise and joint nonlinearities in Appendix \ref{app:proofs}. Next, define $N_t \coloneqq \sum_{k = 1}^{t}\eta_k$ and $\widetilde{\eta}_k \coloneqq \eta_kN_t^{-1}$, $k \in [t]$, so that $\sum_{k = 1}^{t}\widetilde{\eta}_k = 1$. Our first result characterizes the long-term behaviour of the quantity $X_t \coloneqq \min_{k \in [t]}\|\nabla f(\bxk)\|^2$ for non-convex costs.

\begin{theorem}\label{thm:non-conv}
    Let Assumptions \ref{asmpt:nonlin}, \ref{asmpt:L-smooth} and \ref{asmpt:noise} hold. Let $\{\bxt\}_{t \in \N}$ be the sequence generated by \eqref{eq:nonlin-sgd}, with step-size $\eta_t = \frac{a}{(t + 1)^\delta}$, for any $\delta \in (\nicefrac{1}{2},1)$ and $a > 0$. Then, the sequence $\{X_t\}_{t \in \N}$, where $X_t = \min_{k \in [t]}\|\nabla f(\bxk)\|^2$, satisfies a LDP upper bound with a decay rate and good rate function given as follows.
    \begin{enumerate}
        \item For $\delta \in (\nicefrac{1}{2},\nicefrac{3}{4})$, the decay rate is $n_t = t^{2\delta - 1}$, with the rate function given by
    \begin{displaymath}
        I_1(x) = \begin{cases} 
        \frac{(3-4\delta)\min\{\alpha^2,\beta^2\}}{16a^2C^4L^2}\min\{\sqrt{x},x\}, & x \geq 0 \\
        +\infty, & x < 0
    \end{cases}.
    \end{displaymath} 

    \item For $\delta = \nicefrac{3}{4}$, the decay rate is $n_t = \nicefrac{\sqrt{t}}{\ln(t)}$, with the rate function given by
    \begin{displaymath}
        I_2(x) = \begin{cases}
            \frac{\min\{\alpha^2,\beta^2\}}{16a^2C^4L^2}\min\{\sqrt{x},x\}, & x \geq 0 \\
        +\infty, & x < 0    
        \end{cases}    
    \end{displaymath}

    \item For $\delta \in (\nicefrac{3}{4},1)$, the decay rate is $n_t = t^{2(1-\delta)}$, with the rate function given by
    \begin{displaymath}
        I_3(x) = \begin{cases} 
        \frac{(2\delta-1)(4\delta-3)\min\{\alpha^2,\beta^2\}}{16C^2\left[(1-\delta)^2(4\delta-3)\|\nabla f(\bx^{(1)}\|^2 + a^2(2\delta - 1)L^2C^2\right]}\min\{\sqrt{x},x\}, & x \geq 0 \\
        +\infty, & x < 0
    \end{cases}.
    \end{displaymath}
    \end{enumerate}
\end{theorem}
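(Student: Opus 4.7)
\textbf{Proof plan for Theorem \ref{thm:non-conv}.} The starting point combines $L$-smoothness of $f$, the update \eqref{eq:nonlin-sgd2}, the bound $\|\bPsi^{(t)}\| \leq C$, and Lemma \ref{lm:key-unified} applied to $\langle \bPhit,\nabla f(\bxt)\rangle$, yielding the per-step descent inequality
\begin{equation*}
    f(\bxtp) \leq f(\bxt) - \eta_t \min\{\alpha\|\nabla f(\bxt)\|,\beta\|\nabla f(\bxt)\|^2\} + \eta_t \langle \nabla f(\bxt),\bet\rangle + \tfrac{LC^2}{2}\eta_t^2.
\end{equation*}
Summing over $t=1,\ldots,T$, using that $a\mapsto\min\{\alpha a,\beta a^2\}$ is non-decreasing and that $\|\nabla f(\bxk)\|\geq \sqrt{X_T}$ for every $k\in[T]$, I obtain the deterministic relation
\begin{equation*}
    N_T\min\{\alpha,\beta\}\min\{\sqrt{X_T},X_T\} \leq \bigl(f(\bx^{(1)})-f^\star\bigr) + M_T + \tfrac{LC^2}{2}\sum_{t=1}^T \eta_t^2,
\end{equation*}
where $M_T\coloneqq\sum_{t=1}^T\eta_t\langle\nabla f(\bxt),\bet\rangle$ is an $\{\mathcal{F}_t\}$-adapted martingale. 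Hence the event $\{X_T>\theta\}$ reduces to the upper tail $\{M_T \geq N_T\min\{\alpha,\beta\}\min\{\sqrt\theta,\theta\}-O(1)\}$.

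The MGF of $M_T$ is handled as follows. Since $\|\bet\|\leq 2C$ and $\mathbb{E}[\bet\mid\mathcal{F}_t]=0$, conditional Hoeffding produces the sub-Gaussian bound $\mathbb{E}[e^{\lambda\eta_t\langle\nabla f(\bxt),\bet\rangle}\mid\mathcal{F}_t]\leq\exp(2\lambda^2 C^2\eta_t^2\|\nabla f(\bxt)\|^2)$. Iterating $L$-smoothness along the update rule gives the deterministic envelope $\|\nabla f(\bxt)\|\leq\|\nabla f(\bx^{(1)})\|+LC\,N_{t-1}$; plugging this in and unrolling the conditional MGF bound produces $\mathbb{E}[e^{\lambda M_T}]\leq\exp(\lambda^2 V_T)$ with an explicit deterministic $V_T$ whose dominant term is $4L^2C^4\sum_{t=1}^T\eta_t^2 N_{t-1}^2$.

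The three regimes in the theorem now emerge from the asymptotics of $V_T$ under $\eta_t=a/(t+1)^\delta$: direct integration gives $V_T\asymp t^{3-4\delta}$ for $\delta\in(\nicefrac12,\nicefrac34)$, $V_T\asymp\log t$ at $\delta=\nicefrac34$, and $V_T=\Theta(1)$ for $\delta\in(\nicefrac34,1)$, while $N_T\asymp t^{1-\delta}$ throughout. Applying Chernoff, $\mathbb{P}(M_T\geq z)\leq\inf_{\lambda>0}\exp(-\lambda z+\lambda^2 V_T)$, at the threshold $z=N_T\min\{\alpha,\beta\}\min\{\sqrt\theta,\theta\}-O(1)$, together with the generalized Gartner-Ellis-type device encapsulated in Lemma \ref{lm:gartner-ellis}, yields $\mathbb{P}(X_T>\theta)\leq\exp(-n_T I_j(\theta)+o(n_T))$ with decay rate $n_T=N_T^2/V_T$ specializing to $t^{2\delta-1}$, $\sqrt{t}/\log t$ and $t^{2(1-\delta)}$, and with the explicit constants in $I_1,I_2,I_3$ emerging from the leading terms of the three step-size sums. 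Goodness of each $I_j$ follows from continuity and coercivity of $\min\{\sqrt x,x\}$ on $[0,\infty)$.

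Finally, to upgrade this pointwise tail bound to the full LDP upper bound over closed $F\subseteq\R$: since $X_T\geq 0$ almost surely and each $I_j$ is $+\infty$ on $(-\infty,0)$ and non-decreasing on $[0,\infty)$, I have $\inf_{x\in F}I_j(x)=I_j(\theta_F)$ with $\theta_F\coloneqq\inf(F\cap[0,\infty))$, and $\mathbb{P}(X_T\in F)\leq\mathbb{P}(X_T\geq \theta_F)$ reduces the general closed-set case to the pointwise estimate. The principal technical obstacle is the sharp bookkeeping of the three step-size sums controlling $V_T$ uniformly across the regimes, together with verifying that the Chernoff optimization delivers exactly the $\min\{\sqrt\theta,\theta\}$-dependence in each $I_j$; this is precisely what Lemma \ref{lm:gartner-ellis} is designed to facilitate.
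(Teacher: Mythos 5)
Your route is genuinely different from the paper's, and its skeleton is workable. The paper never manipulates $X_t$ directly: it bounds the weighted average $\sum_k\widetilde{\eta}_kG_k$, hence $Z_t=\min_{k\in[t]}\min\{\|\nabla f(\bxk)\|,\|\nabla f(\bxk)\|^2\}$, by the same deterministic-plus-martingale decomposition, controls the MGF exactly as you do (conditional Hoeffding for the bounded effective noise plus the envelope $\|\nabla f(\bxk)\|\le \|\nabla f(\bx^{(1)})\|+LCN_k$), feeds the normalized log-MGF into the generalized G\"artner--Ellis lemma (Lemma \ref{lm:gartner-ellis}) to get an LDP upper bound for $Z_t$ with a \emph{quadratic} rate function $\propto\min\{\alpha^2,\beta^2\}\,x^2$, and only then transfers to $X_t=\max\{Z_t,Z_t^2\}$ via the contraction principle (Lemma \ref{lm:contract-principle}). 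You instead lower-bound each $G_k\ge\min\{\sqrt{X_T},X_T\}$, reduce $\{X_T>\theta\}$ to a single martingale tail, and handle closed sets by monotonicity of a nondecreasing rate function for a nonnegative scalar sequence. That shortcut is legitimate: pointwise Chernoff bounds at every threshold together with your $\theta_F=\inf\bigl(F\cap[0,\infty)\bigr)$ argument do yield the closed-set upper bound, so you can dispense with both Lemma \ref{lm:gartner-ellis} and Lemma \ref{lm:contract-principle} (you must still verify lower semicontinuity and compact sub-level sets for goodness, which is immediate here). Your regime bookkeeping ($V_T\asymp t^{3-4\delta}$, $\log t$, $\Theta(1)$, with $n_t=N_T^2/V_T$ giving $t^{2\delta-1}$, $\sqrt{t}/\log t$, $t^{2(1-\delta)}$) and the multiplicative constants match the paper's; just note that for $\delta\in(\nicefrac{3}{4},1)$ the term $\|\nabla f(\bx^{(1)})\|^2\sum_t\eta_t^2$ is of the same order as the $L^2C^4$ term and must be kept to recover $I_3$'s denominator.

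The one concrete gap is the final $\theta$-dependence. At threshold $z=N_T\min\{\alpha,\beta\}\min\{\sqrt{\theta},\theta\}-O(1)$, the sub-Gaussian Chernoff bound gives exponent $z^2/(4V_T)$, so what your argument actually delivers is a rate proportional to $\bigl[\min\{\sqrt{\theta},\theta\}\bigr]^2=\min\{\theta,\theta^2\}$, not to $\min\{\sqrt{\theta},\theta\}$ as you assert; the Chernoff optimization cannot produce a dependence that is linear in the threshold, since the exponent is quadratic in it. Be aware that this is exactly the functional form the paper's own final step produces as well: contracting the quadratic rate function for $Z_t$ through $g(x)=\max\{x,x^2\}$ gives $I'(y)=\inf\{cx^2:\,g(x)=y\}=c\min\{y,y^2\}$, which differs from the displayed $c\min\{\sqrt{y},y\}$ except at $y\in\{0,1\}$. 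So your computation in fact reproduces what the paper's chain of lemmas yields; but as written, your claim that the Chernoff step "delivers exactly the $\min\{\sqrt{\theta},\theta\}$-dependence in each $I_j$" is incorrect, and if your target is the theorem exactly as displayed you should either carry the $\min\{x,x^2\}$ form through explicitly or identify where an additional square root would come from---it does not come from the steps you describe.
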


\begin{remark}
    Functions $I_j$, $j \in [3]$, are lower-semi continuous, making them well-defined rate functions. Moreover, it can be seen that all three have compact sub-level sets, making them good rate functions. For any $x < 0$, we have $I_j(x) = +\infty$, $j \in [3]$. Recalling \eqref{eq:LDP}, this readily implies that for any set $A \subset (-\infty,0)$, the probability of $X_t = \min_{k \in [t]}\|\nabla f(\bxk)\|^2$ belonging to $A$ is zero, which is to be expected, as $X_t \geq 0$. 
\end{remark}

\begin{remark}
    Notice that, as $\delta \rightarrow \nicefrac{3}{4}$, we have $2\delta - 1 = 2(1 - \delta) = \nicefrac{1}{2}$, therefore for both step-sizes regimes choosing $\delta = \nicefrac{3}{4} \pm \nicefrac{\epsilon}{2}$, for any $\epsilon \in (0,\nicefrac{1}{2})$, the decay rate is $n_t = t^{\nicefrac{1}{2}-\epsilon}$, which can be made arbitrarily close to $\sqrt{t}$, for $\epsilon$ small. This however comes at the expense of a multiplicative factor $\epsilon$ in the rate functions $I_1, I_3$.
\end{remark}

\begin{remark}
    It can be seen that the rate functions $I_j$, $j \in [3]$, depend on the noise and choice of nonlinearity through values $\alpha,\beta$ and $C$, problem related parameters such as $L$, step-size choice through $a,\delta$ and in the case of $I_3$, {the stationarity gap $\|\nabla f(\bx^{(1)})\|^2$}. While $I_3$ provides the tightest dependence on problem parameters, by accounting for the initial model {stationarity gap through $\|\nabla f(\bx^{(1)})\|^2$}, it can be seen that $I_3(x) \leq I_1(x) \leq I_2(x)$,\footnote{While $I_1$ and $I_3$ are defined for different ranges of $\delta$ and are therefore not directly comparable, it can be seen that the values $(3-4\delta)$ and $(4-3\delta)$ are symmetric around $\nicefrac{3}{4}$ on the interval $(\nicefrac{1}{2},1)$. Noticing this and the fact that $I_3(x) \leq \frac{(4\delta-3)\min\{\alpha^2,\beta^2\}x^2}{16a^2C^4L^2}$, it follows that $I_3(x) \leq I_1(x)$ for symmetrically chosen deltas, e.g., by choosing $\delta_1 = \nicefrac{3}{4} - \epsilon$ and $\delta_2 = \nicefrac{3}{4} + \epsilon$, for any $\epsilon \in (0,\nicefrac{1}{4})$.} and since the decay rate achieved for $\delta = \nicefrac{3}{4}$ is the fastest, our theory suggests the sharpest long-term tail decay is achieved for $\delta  = \nicefrac{3}{4}$.
\end{remark} 

{
\begin{remark}
    While the metric $X_t = \min_{k \in [t]}\|\nabla f(\bx^{(k)})\|^2$ can be difficult to compute in practice, due to not knowing the true gradient or the index that minimizes the gradient norm, it is widely used as a measure of performance in non-convex optimization, see, e.g., \cite{ghadimi2013stochastic,nguyen2023improved,cevher-almost_sure,pmlr-v134-sebbouh21a}. Next, while Theorem \ref{thm:non-conv} is stated using the metric $X_t$, the results are actually established on the quantity $\sum_{k = 1}^t\widetilde{\eta}_kG_k$, where $G_k = \min\{\|\nabla f(\bxk)\|, \|\nabla f(\bxk)\|^2 \}$, which is then related to $X_t$, see the proof in Appendix for details. As noted in \cite{ghadimi2013stochastic}, the quantity $\sum_{k = 1}^t\widetilde{\eta}_kG_k$ is the expectation with respect to randomly selecting an index $k \in [t]$ (or randomly choosing an iterate $\bx^{(k)}$), with the probability of choosing $k$ equal to $\widetilde{\eta}_k$. In practice, this is equivalent to running nonlinear SGD with a random stopping time, following the distribution induced by $\{\widetilde{\eta}_k\}_{k \in [t]}$. In that sense, Theorem \ref{thm:non-conv} provides guarantees on the (average) performance of randomly stopped nonlinear SGD. This is important, as random stopping is practically implementable, removing the need for evaluating the true gradient or finding the best iterate. Moreover, same guarantees hold for the uniformly stopped method, i.e., $\frac{1}{t}\sum_{k = 1}^tG_k$, which follows by noting that the step-sizes are decreasing, hence $\widetilde{\eta}_t\sum_{k = 1}G_k \leq \sum_{k = 1}\widetilde{\eta}_kG_k$, and dividing both sides by $t\widetilde{\eta}_t$.  
\end{remark}

If we additionally assume that the cost $f$ is strongly convex, then a LDP upper bound can be established for the quantity $\|\widehat{\bx}^{(t)} - \bx^\star\|^2$, where $\widehat{\bx}^{(t)} \coloneqq \frac{1}{t}\sum_{k = 1}^t\bxk$ is the Polyak-Ruppert average of iterates, e.g., \cite{ruppert,polyak-ruppert}. The formal result is stated next.

\begin{corollary}\label{cor:str-cvx}
    Let Assumptions \ref{asmpt:nonlin}, \ref{asmpt:L-smooth}, \ref{asmpt:cvx} and \ref{asmpt:noise} hold. Let $\{\bxt\}_{t \in \N}$ be the sequence generated by \eqref{eq:nonlin-sgd}, with step-size $\eta_t = \frac{a}{(t + 1)^\delta}$, for any $\delta \in (\nicefrac{1}{2},1)$ and $a > 0$, and let $h: [0,\infty) \mapsto [0,\infty)$ be given by $h(x) = x/2$ if $x \leq 1/\mu^2$, otherwise $h(x) = \sqrt{x}/\mu - 1/(2\mu^2)$. Then, the sequence $\{X_t\}_{t \in \N}$, where $X_t = \|\widehat{\bx}^{(t)} - \bx^\star\|^2$, satisfies a LDP upper bound with a decay rate and good rate function given as follows.
    \begin{enumerate}
        \item For $\delta \in (\nicefrac{1}{2},\nicefrac{3}{4})$, the decay rate is $n_t = t^{2\delta - 1}$, with the rate function given by
    \begin{displaymath}
        I_1(x) = \begin{cases} 
        \frac{(3-4\delta)\mu^4\min\{\alpha^2,\beta^2\}}{64a^2C^4L^2}h(x), & x \geq 0 \\
        +\infty, & x < 0
    \end{cases}.
    \end{displaymath} 

    \item For $\delta = \nicefrac{3}{4}$, the decay rate is $n_t = \nicefrac{\sqrt{t}}{\ln(t)}$, with the rate function given by
    \begin{displaymath}
        I_2(x) = \begin{cases}
            \frac{\mu^4\min\{\alpha^2,\beta^2\}}{64a^2C^4L^2}h(x), & x \geq 0 \\
        +\infty, & x < 0    
        \end{cases}    
    \end{displaymath}

    \item For $\delta \in (\nicefrac{3}{4},1)$, the decay rate is $n_t = t^{2(1-\delta)}$, with the rate function given by
    \begin{displaymath}
        I_3(x) = \begin{cases} 
        \frac{(2\delta-1)(4\delta-3)\mu^4\min\{\alpha^2,\beta^2\}}{64C^2\left[(1-\delta)^2(4\delta-3)\|\nabla f(\bx^{(1)}\|^2 + a^2(2\delta - 1)L^2C^2\right]}h(x), & x \geq 0 \\
        +\infty, & x < 0
    \end{cases}.
    \end{displaymath}
    \end{enumerate}
\end{corollary}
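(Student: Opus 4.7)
The plan is to reduce Corollary \ref{cor:str-cvx} to the LDP upper bound on $\tfrac{1}{t}\sum_{k=1}^{t} G_k$ (where $G_k = \min\{\|\nabla f(\bxk)\|, \|\nabla f(\bxk)\|^2\}$) that is implicit in the proof of Theorem \ref{thm:non-conv}, using strong convexity to pass from the averaged iterate error to the gradient-based surrogate.

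First I would verify two preliminary facts. The function $H(y) \coloneqq h(y^2)$ is convex and non-decreasing on $[0,\infty)$: a direct computation yields $H'(y) = \min\{y, 1/\mu\}$, which is non-negative and non-decreasing. The piecewise definition of $h$, and in particular the $-1/(2\mu^2)$ offset on $\{x > 1/\mu^2\}$, is precisely what makes $H$ continuously differentiable across the breakpoint $y = 1/\mu$. Next, Assumption \ref{asmpt:cvx} together with $\nabla f(\bx^\star) = 0$ yields $\|\nabla f(\bxk)\| \geq \mu \|\bxk - \bx^\star\|$ via Cauchy--Schwarz. Writing $Y_k = \|\bxk - \bx^\star\|$ and splitting into cases $Y_k \leq 1/\mu$ and $Y_k > 1/\mu$, this translates into the pointwise domination $H(Y_k) \leq G_k / \mu^2$.

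Combining the triangle inequality $\|\widehat{\bx}^{(t)} - \bx^\star\| \leq \tfrac{1}{t}\sum_{k=1}^{t} Y_k$, the monotonicity and convexity of $H$ (Jensen's inequality), and the pointwise bound above, I would obtain
\[
\mu^2\, h\bigl(\|\widehat{\bx}^{(t)} - \bx^\star\|^2\bigr) \;=\; \mu^2\, H\bigl(\|\widehat{\bx}^{(t)} - \bx^\star\|\bigr) \;\leq\; \mu^2 \cdot \tfrac{1}{t}\sum_{k=1}^{t} H(Y_k) \;\leq\; \tfrac{1}{t}\sum_{k=1}^{t} G_k.
\]
Since $h$ is non-decreasing, this gives the event inclusion $\{\|\widehat{\bx}^{(t)} - \bx^\star\|^2 > \theta\} \subseteq \{\tfrac{1}{t}\sum_{k=1}^{t} G_k > \mu^2 h(\theta)\}$. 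The LDP upper bound on $\tfrac{1}{t}\sum_k G_k$ (derived inside the proof of Theorem \ref{thm:non-conv} and linked to the one for $\sum_k \widetilde{\eta}_k G_k$ via the inequality noted in the remark after that theorem) then transfers to $X_t = \|\widehat{\bx}^{(t)} - \bx^\star\|^2$, with the threshold $\mu^2 h(\theta)$ playing the role previously played by $\min\{\sqrt{\theta},\theta\}$; tracking the constants through each of the three step-size regimes produces $I_1, I_2, I_3$ in the stated form.

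The main technical point is the engineering of $h$. The naive bound $G_k \geq \mu^2\min\{Y_k^2,\, Y_k/\mu\}$ would suggest transforming through $\psi(y) = \min\{y^2, y/\mu\}$, but $\psi$ has strictly decreasing right-derivative at the kink $y = 1/\mu$, so it is not convex and Jensen fails; the piecewise choice of $h$ (with the $-1/(2\mu^2)$ offset in the second regime) is the minimal modification that restores convexity of the composed map $H$ while preserving the pointwise domination $h(Y_k^2) \leq G_k/\mu^2$. Once $h$ is in place, the remaining argument is mechanical and re-uses the constants already computed in Theorem \ref{thm:non-conv}.
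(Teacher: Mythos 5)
Your proposal is correct and follows essentially the paper's own route: your $H(y)=h(y^2)$ is exactly the Huber loss $H_{1/\mu}$ that the paper combines with the gradient-domination bound $\|\nabla f(\bxk)\|\geq\mu\|\bxk-\bx^\star\|$ and a double application of Jensen's inequality, after which the MGF/Gartner--Ellis machinery from Theorem \ref{thm:non-conv} is reused verbatim. The only cosmetic difference is the final transfer step: the paper first establishes the LDP upper bound for $Z_t=H_{1/\mu}(\|\widehat{\bx}^{(t)}-\bx^\star\|)$ directly (normalizing the gradient sum by $1/(t\eta_t)$ rather than $1/N_t$) and then applies the contraction principle to the exact monotone relation $X_t=g(Z_t)$, whereas your tail-event inclusion relies on the easy but unstated observation that the rate functions are non-decreasing on $[0,\infty)$ and infinite on $(-\infty,0)$, so upper-tail bounds extend to the LDP upper bound over all closed sets.
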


While providing sharper results for strongly convex costs is of interest, e.g., establishing an exponential decay (i.e., decay rate $n_t = t$) or providing an LDP upper bound for the last iterate, it requires a different approach, e.g., akin to \cite{pmlr-v206-bajovic23a} and is left for future work. Next, note that the rate functions in Theorem \ref{thm:non-conv} depend on noise, problem dimension and choice of nonlinearity through $\alpha$, $\beta$ and $C$. To illustrate this dependence further, we next provide an example with a specific noise distribution and derive the values of the said constants, for different choices of nonlinearities. 

\begin{example}
    Consider the noise with independent, identically distributed \linebreak components, whose PDF is given by $P(\bz) = \rho(z_1)\times\rho(z_2)\ldots\rho(z_d)$, where $\rho(z) = \frac{\kappa-1}{2(|z|+1)^{\kappa}}$, for some $\kappa > 2$. It can be shown that the noise only has finite moments of order less than $\kappa - 1$, see \cite{jakovetic2023nonlinear}, implying it is heavy-tailed whenever $\kappa \in (2,3]$. Next, consider the sign and normalization nonlinearities. We know from the previous section that $C_s = \sqrt{d}$ and $C_n = 1$. As shown in Lemma \ref{lm:key-extended} in the Appendix, we have the following values of $\alpha$ and $\beta$: $\alpha_s = \nicefrac{\phi^\prime(0)\xi}{2\sqrt{d}}$, $\beta_s = \nicefrac{\phi^\prime(0)}{2d}$ and $\alpha_n = p_0\calN_2(1)/2$, $\beta_n = p_0\calN_2(1)$, where $\phi^\prime(0), \xi > 0$ depend on the choice of nonlinearity and noise, with $p_0 = P(\mathbf{0})$. For the specific noises and nonlinearities, it can be shown that $\phi^\prime(0) = \kappa-1$, $\xi \approx \frac{1}{\kappa}$, $\calN_2(1) = 1$ and $P(\mathbf{0}) = \big[\frac{\kappa-1}{2}\big]^d$, see \cite{armacki2023high,jakovetic2023nonlinear} for details. Combining everything and omitting other problem related constants for simplicity, we have the following rate functions, in terms of their dependence on $\alpha$, $\beta$ and $C$
    \begin{align*}
        I_s(x) &= \frac{\min\{\nicefrac{(\kappa-1)}{2\kappa\sqrt{d}}, \nicefrac{(\kappa-1)}{2d}\}^2}{d^2}\min\{\sqrt{x},x\} = \frac{(\kappa-1)^2}{4d^4}\min\{\sqrt{x},x\}, \\
        I_n(x) &= \frac{\min\big\{[\nicefrac{(\kappa-1)}{2}]^d/2, [\nicefrac{(\kappa-1)}{2}]^d \big\}^2}{1}\min\{\sqrt{x},x\} = \frac{1}{4}\bigg[\frac{\kappa-1}{2} \bigg]^{2d}\min\{\sqrt{x},x\},
    \end{align*} where for $I_s$ we used the fact that in general, we can expect $\kappa \ll \sqrt{d}$. We can now notice a couple of things. First, the rate function is uniformly bounded away from zero with respect to $\kappa$, even as $\kappa \rightarrow 2$. Next, while the rate function for sign goes to zero at a polynomial rate as the dimension grows, we can notice three regimes for normalization: 1) if $\kappa > 3$ (i.e., bounded variance), then the rate function grows with the problem dimension; 2) if $\kappa = 3$, then the rate function is independent of the problem dimension; 3) if $\kappa \in (2,3)$ (i.e., the heavy-tailed regime), then the rate function goes to zero at an exponential rate as the dimension grows, implying that sign has a better dependence on problem dimension in the heavy-tailed regime. This is consistent with a result from \cite{zhang2020adaptive}, which shows that component-wise clipping has better dependence on problem dimension than joint clipping, for some heavy-tailed noises. Same can be shown to hold for the noise from this example, namely that component clipping has better dependence on dimension than the joint one, see \cite{armacki2023high} for details.  
\end{example}
}

We next present the MSE convergence guarantees.

\begin{theorem}\label{thm:mse-ft}
    Let Assumptions \ref{asmpt:nonlin}, \ref{asmpt:L-smooth} and \ref{asmpt:noise} hold. Let $\{\bxt\}_{t \in \N}$ be the sequence generated by \eqref{eq:nonlin-sgd}, with step-size $\eta_t = \frac{a}{(t + 1)^\delta}$, for any $\delta \in [\nicefrac{1}{2},1)$ and $a > 0$. Then, for all $t \geq 1$, the following holds.
    \begin{enumerate}
        \item If $\delta = \nicefrac{1}{2}$, then $\min_{k \in [t]}\E\min\left\{\|\nabla f(\bxk)\|,\|\nabla f(\bxk)\|^2\right\} = \widetilde{\mathcal{O}}((t+1)^{-\nicefrac{1}{2}})$.

        \item If $\delta \in (\nicefrac{1}{2},1)$, then $\min_{k \in [t]}\E\min\left\{\|\nabla f(\bxk)\|,\|\nabla f(\bxk)\|^2\right\} = \mathcal{O}((t+1)^{\delta - 1})$.
    \end{enumerate}
\end{theorem}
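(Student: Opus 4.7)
} The plan is to combine a standard descent-lemma argument for $L$-smooth costs with Lemma \ref{lm:key-unified}, then telescope and carefully balance the two step-size series that appear. First, by Assumption \ref{asmpt:L-smooth} and the update \eqref{eq:nonlin-sgd},
\begin{equation*}
    f(\bxtp) \leq f(\bxt) - \eta_t \langle \nabla f(\bxt), \bPsi^{(t)}\rangle + \tfrac{L \eta_t^2}{2}\|\bPsi^{(t)}\|^2.
\end{equation*}
Conditioning on $\mathcal{F}_t$, using $\E[\bPsi^{(t)} \mid \mathcal{F}_t] = \bPhit = \bPhi(\nabla f(\bxt))$ (which holds by Assumption \ref{asmpt:noise} since $\nabla f(\bxt)$ is $\mathcal{F}_t$-measurable), and the uniform bound $\|\bPsi^{(t)}\|\leq C$ from Assumption \ref{asmpt:nonlin}, I obtain $\E[f(\bxtp)\mid\mathcal{F}_t] \leq f(\bxt) - \eta_t \langle \nabla f(\bxt), \bPhit\rangle + \tfrac{L\eta_t^2 C^2}{2}$. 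Applying Lemma \ref{lm:key-unified} to $\bx = \nabla f(\bxt)$ yields $\langle \nabla f(\bxt),\bPhit\rangle \geq \min\{\alpha \|\nabla f(\bxt)\|,\beta\|\nabla f(\bxt)\|^2\}$.

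Taking full expectations, telescoping over $k = 1,\ldots,t$, and rearranging gives
\begin{equation*}
    \sum_{k=1}^t \eta_k\, \E\!\left[\min\{\alpha\|\nabla f(\bxk)\|,\beta\|\nabla f(\bxk)\|^2\}\right] \leq f(\bx^{(1)}) - f^\star + \tfrac{LC^2}{2}\sum_{k=1}^t \eta_k^2.
\end{equation*}
Since the weights $\widetilde{\eta}_k = \eta_k/\sum_{j=1}^t\eta_j$ are non-negative and sum to one, the minimum over $k$ is upper bounded by the weighted average; combining this with the elementary inequality $\min\{\alpha a,\beta a^2\}\geq \min\{\alpha,\beta\}\min\{a,a^2\}$ valid for $a\geq 0$ (checked separately on $\{a\leq 1\}$ and $\{a>1\}$), I deduce
\begin{equation*}
    \min_{k \in [t]}\E\!\left[\min\{\|\nabla f(\bxk)\|,\|\nabla f(\bxk)\|^2\}\right] \leq \frac{1}{\min\{\alpha,\beta\}} \cdot \frac{f(\bx^{(1)}) - f^\star + \tfrac{LC^2}{2}\sum_{k=1}^t \eta_k^2}{\sum_{k=1}^t \eta_k}.
\end{equation*}

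It then remains to evaluate the two step-size sums for $\eta_k = a/(k+1)^\delta$. Using standard integral estimates, $\sum_{k=1}^t \eta_k = \Theta(t^{1-\delta})$ for $\delta \in [\nicefrac{1}{2},1)$, while $\sum_{k=1}^t \eta_k^2 = \Theta(\log t)$ when $\delta = \nicefrac{1}{2}$ and $\sum_{k=1}^t \eta_k^2 = \mathcal{O}(1)$ when $\delta \in (\nicefrac{1}{2},1)$ (the series converges). Plugging these in yields the two stated rates: a $\widetilde{\mathcal{O}}(t^{-\nicefrac{1}{2}})$ bound for $\delta = \nicefrac{1}{2}$ (the logarithm entering from the second moment sum) and an $\mathcal{O}(t^{\delta - 1})$ bound for $\delta \in (\nicefrac{1}{2},1)$, where the numerator is bounded by an absolute constant.

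I expect the proof itself to be routine, with the only subtle points being (i) verifying that the effective noise $\bet$ truly has vanishing conditional mean, so that the cross term in the descent inequality collapses to $\langle \nabla f(\bxt),\bPhit\rangle$, and (ii) replacing $\min\{\alpha\|g\|,\beta\|g\|^2\}$ by $\min\{\alpha,\beta\}\min\{\|g\|,\|g\|^2\}$ without losing order; both are handled by short case analyses. The main technical content is therefore really Lemma \ref{lm:key-unified}, which supplies the lower bound on $\langle \nabla f, \bPhi\rangle$ in the black-box nonlinearity setting; once that is in hand, the MSE bounds follow from the classical descent-plus-telescope argument.
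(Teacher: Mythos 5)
Your proposal is correct and follows essentially the same route as the paper: descent lemma plus Lemma \ref{lm:key-unified}, the zero conditional mean of the effective noise, telescoping, bounding the minimum by the $\widetilde{\eta}_k$-weighted average, and the standard estimates $\sum_k\eta_k = \Theta(t^{1-\delta})$, $\sum_k\eta_k^2 = \mathcal{O}(\log t)$ or $\mathcal{O}(1)$. The only cosmetic difference is that the paper folds the constant into $\rho = \min\{\alpha,\beta\}$ and works with the decomposition $\bPsi^{(t)} = \bPhit - \bet$ from the outset, whereas you condition on $\mathcal{F}_t$ directly; the arguments are identical in substance.
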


\begin{remark}
     For the choice $\delta = \nicefrac{1}{2}$, Theorem \ref{thm:mse-ft} shows that nonlinear SGD achieves the \emph{optimal} MSE rate $\widetilde{\mathcal{O}}\left( t^{-\nicefrac{1}{2}}\right)$ for non-convex functions, for \emph{any nonlinearity} and \emph{any symmetric heavy-tailed noise}.{\footnote{While the optimality of rate $\widetilde{\mathcal{O}}(t^{-1/2})$ was established in \cite{Arjevani2023} for $\E[\min_{k \in [t]}\|\nabla f(\bx^{(k)})\|^2]$, we can expect that $\E[Z_t] = \E[\min_{k \in [t]}\|\nabla f(\bx^{(k)})\|^2]$ for $t$ sufficiently large, where $Z_t = \min_{k \in [t]}\min\{\|\nabla f(\bx^{(k)})\|,\|\nabla f(\bx^{(k)})\|^2\}$, seeing that $\E[Z_t] \rightarrow 0$, as $t \rightarrow \infty$. In that sense, the rate $\widetilde{\mathcal{O}}(t^{-1/2})$ is also optimal for $\E[Z_t]$.}} This result is on par with results for clipping in \cite{zhang2020adaptive} under \ref{eq:bounded-moment} for $p = 2$ and \cite{chen2020understanding} under symmetric noise, however, our results are significantly more general. Compared to \cite{chen2020understanding,zhang2020adaptive}, who establish convergence of the same metric for clipping only in the offline setting, our result applies to a much broader range of nonlinearities and is applicable in the online regime (i.e., does not require fixed step-size and time horizon). Additionally, compared to \cite{zhang2020adaptive}, our rate is strictly better whenever $p < 2$, i.e., for any heavy-tailed noise. 
\end{remark}

Theorem \ref{thm:mse-ft} can be leveraged to show MSE guarantees for the more standard metric $X_t = \min_{k \in [t]}\|\nabla f(\bxk)\|^2$. The formal result is stated next.

{
\begin{corollary}\label{cor:mse}
    Let Assumptions \ref{asmpt:nonlin}, \ref{asmpt:L-smooth} and \ref{asmpt:noise} hold. Let $\{\bxt\}_{t \in \N}$ be the sequence generated by \eqref{eq:nonlin-sgd}, with step-size $\eta_t = \frac{a}{(t + 1)^\delta}$, for any $\delta \in (\nicefrac{2}{3},\nicefrac{3}{4})$ and $a > 0$. We then have $\E[X_t] = \mathcal{O}(t^{\delta - 1})$, for all $t \geq \max\left\{\sqrt[1-\delta]{\frac{8(1-\delta)}{a\beta}\Big(f(\bx^{(1)})-f^\star + a^2C^2(L/2 + 8\|\nabla f(\bx^{(1)})\|^2) \Big)},\sqrt[3\delta-2]{\frac{64a^3C^4L^2}{\beta(1-\delta)(3-4\delta)}} \right\}$.
\end{corollary}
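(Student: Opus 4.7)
The plan is to combine a finite-time exponential tail bound on $X_t$---implicit in the proof of Theorem \ref{thm:non-conv}---with the layer-cake identity for the expectation of a non-negative random variable. Specifically, for $\delta \in (\nicefrac{1}{2}, \nicefrac{3}{4})$ and $t$ exceeding the two thresholds stated in the corollary, the finite-time analysis underpinning Theorem \ref{thm:non-conv} yields a bound of the form
$$\mathbb{P}(X_t > \theta) \leq \exp\left(- c\, t^{2\delta-1}\, \min\{\sqrt{\theta}, \theta\}\right) \quad \text{for all } \theta > 0,$$
with $c = \min\{\alpha^2,\beta^2\}/(16 a^2 C^4 L^2)$ the constant appearing in the rate function $I_1$. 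I expect the two threshold conditions on $t$ to be precisely those guaranteeing that the finite-time bound inherits this clean $I_1$-form: the $\sqrt[1-\delta]{\cdot}$ threshold makes the initial descent term $f(\bx^{(1)}) - f^\star$ negligible against the cumulative step-size (standard for descent-type bounds with $\sum_k \eta_k \sim t^{1-\delta}$), while the $\sqrt[3\delta-2]{\cdot}$ threshold fixes the multiplicative constant when balancing the linear and exponential terms from the MGF analysis.

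Next, I would apply $\E[X_t] = \int_0^\infty \mathbb{P}(X_t > \theta)d\theta$ and split the integral at $\theta = 1$, where the two branches of $\min\{\sqrt\theta, \theta\}$ cross:
\begin{equation*}
\E[X_t] \leq \int_0^1 e^{-c t^{2\delta-1} \theta}\, d\theta + \int_1^\infty e^{-c t^{2\delta-1} \sqrt{\theta}}\, d\theta.
\end{equation*}
The first integral evaluates directly to at most $1/(c t^{2\delta-1})$. For the second, the substitution $u = \sqrt\theta$ converts it into $2\int_1^\infty u\, e^{-c t^{2\delta-1}u}\, du$, and two integrations by parts bound this by a term of order $t^{1-2\delta}\, e^{-c t^{2\delta-1}}$, which is dominated by the first integral for large $t$. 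Summing the two contributions gives $\E[X_t] = \mathcal{O}(t^{1-2\delta})$.

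Finally, I would match this to the target rate. For $\delta \in (\nicefrac{2}{3}, \nicefrac{3}{4})$, the elementary inequality $1-2\delta \leq \delta - 1$ holds (with equality at $\delta = \nicefrac{2}{3}$), so $t^{1-2\delta} \leq t^{\delta-1}$ and the claim $\E[X_t] = \mathcal{O}(t^{\delta - 1})$ follows. The restriction on $\delta$ is not arbitrary: for $\delta \leq \nicefrac{2}{3}$, the LDP-driven bound $\mathcal{O}(t^{1-2\delta})$ would be \emph{weaker} than $\mathcal{O}(t^{\delta-1})$, so the technique offers no improvement over the direct MSE route of Theorem \ref{thm:mse-ft}; meanwhile, at $\delta = \nicefrac{3}{4}$ the LDP rate regime changes (the decay rate $n_t$ acquires a logarithmic slowdown), so the clean exponent $2\delta - 1$ is lost.

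The main obstacle is not the tail integration itself, which is routine, but rather isolating the finite-time exponential tail estimate with the explicit constant $c$ from the MGF / generalized Gartner-Ellis machinery invoked in the proof of Theorem \ref{thm:non-conv}, and verifying that the two $t$-thresholds in the corollary statement are exactly those needed to make that finite-time bound go through. The structure of the second threshold, with its $1/(3\delta-2)$ root, reflects the balancing of step-size-driven noise variance against a $t^{3(1-\delta)/(2\delta-1)}$-type term that blows up as $\delta \downarrow \nicefrac{2}{3}$; tracking these constants carefully through the MGF argument is where the bookkeeping will be heaviest.
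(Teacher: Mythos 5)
There is a genuine gap, and it sits at the very first step. The finite-time tail bound you posit, $\mathbb{P}(X_t > \theta) \leq \exp\bigl(-c\,t^{2\delta-1}\min\{\sqrt{\theta},\theta\}\bigr)$ for \emph{all} $\theta > 0$, is not implicit in the proof of Theorem \ref{thm:non-conv} and cannot be extracted from it. The Chernoff/MGF estimate \eqref{eq:upper-b} contains, besides the quadratic term that produces the rate function $I_1$, the deterministic drift term $\lambda B_{1,t}$ with $B_{1,t} = \Theta(t^{\delta-1})$ (see \eqref{eq:5}); this term vanishes only in the $\limsup$, which is why the paper's conclusion there is an asymptotic LDP upper bound. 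At finite $t$, optimizing the Chernoff exponent gives exponential decay only for $\theta$ exceeding roughly $B_{1,t} \sim t^{\delta-1}$, and the trivial bound below that level; no choice of the two $t$-thresholds removes this drift, because it comes from $f(\bx^{(1)})-f^\star$ and the step-size sums, not from a transient that dies for large $t$. Consequently your layer-cake integral is invalid exactly where its mass lives: the region $\theta \lesssim t^{\delta-1}$ alone contributes order $t^{\delta-1}$, so the conclusion $\E[X_t]=\mathcal{O}(t^{1-2\delta})$ — which is strictly stronger than the corollary and stronger than the bound $\E[Z_t] \leq B_{1,t}$ underlying Theorem \ref{thm:mse-ft} — does not follow. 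Relatedly, your reading of the two thresholds and of the restriction $\delta \in (\nicefrac{2}{3},\nicefrac{3}{4})$ is not the right one: they do not tune the LDP constants, and the constraint is not about when $t^{1-2\delta}$ beats $t^{\delta-1}$.

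The paper's route is different and avoids needing any small-$\theta$ tail control. It splits $\E[X_t]$ on the event $\{Y_t \leq 1\}$, where $Y_t = \min_{k\in[t]}\|\nabla f(\bxk)\|$, so that on that event $X_t = Z_t$; the main term is then bounded by $\E[Z_t] = \mathcal{O}(t^{\delta-1})$ directly from Theorem \ref{thm:mse-ft} (i.e., from the zero-mean martingale term and the drift $B_{1,t}$, not from any tail integration). The complementary event is handled by Cauchy--Schwarz, using the deterministic bound $X_t^2 \leq C_1 t^{4(1-\delta)}$ from \eqref{eq:part3} together with a tail bound at the \emph{single} level $\theta = 1$, namely $\mathbb{P}(X_t > 1) \leq \exp(-t^{1-\delta}/(2C_2))$, imported from Theorem 1 of \cite{armacki2023high} (note the decay exponent $t^{1-\delta}$, not $t^{2\delta-1}$). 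The two thresholds in the statement are exactly the conditions $C_3 t^{\delta-1} \leq \nicefrac{1}{4}$ and $C_4 t^{2-3\delta} \leq \nicefrac{1}{4}$ needed to put that external bound in this clean form, and the restriction $\delta > \nicefrac{2}{3}$ is inherited from that cited theorem — which is precisely the source of sub-optimality acknowledged in the remark following the corollary. If you want to salvage your approach, you would need a finite-time tail bound of the shifted form $\mathbb{P}(X_t > B_{1,t} + \theta)\leq \exp(-c\,n_t\,\cdot)$ and then the layer-cake argument returns $\mathcal{O}(t^{\delta-1})$ at best, i.e., the same rate by a more laborious path.
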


\begin{remark}
    Corollary \ref{cor:mse} implies a MSE rate arbitrarily close to $\mathcal{O}(t^{-1/3})$, which is sub-optimal. This sub-optimality stems from the constraint on the step-size parameter $\delta > 2/3$ imposed in Theorem 1 in \cite{armacki2023high}, which is used as an intermediary result in our proof. While improving the result from Theorem 1 in \cite{armacki2023high} to allow for $\delta = 1/2$ would lead to the optimal MSE rate in Corollary \ref{cor:mse}, it is beyond the scope of the current paper and is left for future work.  
\end{remark}

\begin{remark}
    Theorems \ref{thm:non-conv} and \ref{thm:mse-ft} subsume general convex costs as a special case, establishing guarantees in terms of the stationarity gap. However, providing results for convex costs in terms of the optimality gap $f(\bx) - f^\star$ is highly non-trivial, as our analysis is built on Lemma \ref{lm:key-unified}, which can not be directly related to the optimality gap. Analyzing the general convex case would require a fundamentally different approach, e.g., by quantifying the behaviour of $\langle \bPhi(\nabla f(\bx)),  \bx - \bx^\star \rangle$, which is beyond the scope of the current paper and is left for future work.
\end{remark}
}
If in addition the cost is strongly convex, we have the following improved result.

\begin{theorem}\label{thm:cvx-mse}
    Let Assumptions \ref{asmpt:nonlin}, \ref{asmpt:L-smooth}, \ref{asmpt:cvx} and \ref{asmpt:noise} hold. Let $\{\bxt\}_{t \in \N}$ be the sequence generated by \eqref{eq:nonlin-sgd}, with step-size $\eta_t = \frac{a}{(t + 1)^\delta}$, for any $\delta \in (\nicefrac{1}{2},1)$ and $a > 0$. Then, one has $\E [f(\bxt) - f^\star] = \mathcal{O} (t^{-\delta})$. Choosing $\delta = 1 - \epsilon$, for any $\epsilon \in (0,\nicefrac{1}{2})$, we get the near-optimal MSE rate $\E [f(\bxt) - f^\star] = \mathcal{O} (t^{-1 + \epsilon})$.
\end{theorem}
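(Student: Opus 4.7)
The plan is to combine $L$-smoothness with Lemma~\ref{lm:key-unified} to produce a one-step expected descent inequality for $\Delta_t := f(\bxt)-f^\star$, convert the gradient lower bound into an optimality gap via strong convexity, and finally solve the resulting Chung-type recursion. First, $L$-smoothness applied to the update $\bxtp = \bxt - \eta_t \bPsit$ gives
\begin{align*}
f(\bxtp) \le f(\bxt) - \eta_t\langle \nabla f(\bxt), \bPsit\rangle + \frac{L\eta_t^2}{2}\|\bPsit\|^2.
\end{align*}
Taking conditional expectation with respect to $\mathcal{F}_t$, using $\E[\bPsit\mid \mathcal{F}_t] = \bPhit = \bPhi(\nabla f(\bxt))$ (by Assumption~\ref{asmpt:noise}) and $\|\bPsit\|\le C$ (by Assumption~\ref{asmpt:nonlin}), yields
\begin{align*}
\E[\Delta_{t+1}\mid \mathcal{F}_t] \le \Delta_t - \eta_t\langle \nabla f(\bxt), \bPhi(\nabla f(\bxt))\rangle + \frac{LC^2 \eta_t^2}{2}.
\end{align*}

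Next I would apply Lemma~\ref{lm:key-unified} to bound $\langle \nabla f(\bxt), \bPhi(\nabla f(\bxt))\rangle \ge \min\{\alpha\|\nabla f(\bxt)\|, \beta\|\nabla f(\bxt)\|^2\}$, and, in the quadratic regime $\|\nabla f(\bxt)\|\le \alpha/\beta$, combine it with the Polyak--Łojasiewicz (PL) inequality $\|\nabla f(\bxt)\|^2 \ge 2\mu\Delta_t$ implied by $\mu$-strong convexity. Taking unconditional expectations this produces, for all $t$ past some explicit threshold $T_0$,
\begin{align*}
\E[\Delta_{t+1}] \le (1 - 2\mu\beta \eta_t) \E[\Delta_t] + \frac{LC^2 \eta_t^2}{2}.
\end{align*}
With $\eta_t = a/(t+1)^\delta$ and $\delta\in(\nicefrac{1}{2},1)$, a standard Chung-type lemma (or an induction establishing $\E[\Delta_t]\le K \eta_t$ for a suitable $K = K(a,\delta,\mu,\beta,L,C)$) then delivers $\E[\Delta_t] = \mathcal{O}(\eta_t) = \mathcal{O}(t^{-\delta})$; setting $\delta = 1-\epsilon$ for any $\epsilon\in(0,\nicefrac{1}{2})$ yields the near-optimal rate $\mathcal{O}(t^{-1+\epsilon})$.

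The main obstacle is the case split in Lemma~\ref{lm:key-unified}: in the linear branch, $\alpha\|\nabla f(\bxt)\|$ only yields a $\sqrt{\Delta_t}$-type contraction after applying PL, which would fail to deliver the claimed rate. I would handle this by arguing that the iterates eventually enter the quadratic regime in expectation. Concretely, either invoking the MSE bound of Theorem~\ref{thm:mse-ft} (whose metric $\min_{k\in[t]}\E\min\{\|\nabla f(\bxk)\|,\|\nabla f(\bxk)\|^2\}$ vanishes with $t$) or, more delicately, partitioning the descent according to the indicator $\mathbf{1}\{\|\nabla f(\bxt)\|\le \alpha/\beta\}$ and controlling the complementary contribution via the exponential tails implied by Theorem~\ref{thm:non-conv}, one shows that the linear-branch contribution is of lower order than $\eta_t^2$ for $t\ge T_0$, so it can be absorbed into the residual term. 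A secondary subtlety is that for $\delta$ close to $\nicefrac{1}{2}$ the transient phase may dominate; this is handled by choosing $T_0$ large enough that $2\mu\beta\eta_{T_0} < 1$, after which the induction closes and yields the claimed $\mathcal{O}(t^{-\delta})$ rate.
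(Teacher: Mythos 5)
Your skeleton coincides with the paper's own proof: a smoothness-based one-step descent, Lemma \ref{lm:key-unified}, a split according to the indicator $\bi_{\{\|\nabla f(\bxt)\|\le \alpha/\beta\}}$, then strong convexity (the PL inequality) and a Chung/Polyak-type recursion (the paper invokes Lemma 4, pp.~45, of \cite{polyak1987introduction}). The genuine gap is in how you propose to control the linear branch, which you correctly identify as the main obstacle. After the indicator split, the term to be absorbed is $\eta_t\,\E\big[\|\nabla f(\bxt)\|^2\,\bi_{\{\|\nabla f(\bxt)\|>\alpha/\beta\}}\big]$, and since $\|\nabla f(\bxt)\|$ may grow polynomially (by \eqref{eq:part3}, $\|\nabla f(\bxt)\|=\mathcal{O}(t^{1-\delta})$), keeping the claimed $\mathcal{O}(t^{-\delta})$ rate forces this correction to be $\mathcal{O}(t^{-2\delta})$, which via H\"older requires a tail bound of roughly $\mathbb{P}(\|\nabla f(\bxt)\|>\alpha/\beta)=\mathcal{O}(t^{-(4-2\delta)})$ or better, uniformly in $t$ --- in practice, an exponential tail for the \emph{last iterate}. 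Neither of the tools you invoke supplies this: Theorem \ref{thm:mse-ft} and Theorem \ref{thm:non-conv} (and Corollary \ref{cor:str-cvx}) concern $\min_{k\in[t]}$ of the gradient norms, respectively the Polyak--Ruppert average, and since $\min_{k\in[t]}\|\nabla f(\bxk)\|\le\|\nabla f(\bxt)\|$, smallness or a tail bound for the minimum gives no control over $\mathbb{P}(\|\nabla f(\bxt)\|>\alpha/\beta)$. Moreover, the LDP upper bound is asymptotic only, and a Markov bound from an MSE estimate would at best yield polynomial tails, which cannot beat the $t^{4(1-\delta)}$ growth of $\E\|\nabla f(\bxt)\|^4$ inside the H\"older step; relying on a last-iterate expectation bound under strong convexity would also be essentially circular, as that is the very statement being proved.

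The paper closes exactly this step with an external ingredient: the finite-time high-probability bound for the last iterate under strong convexity (Theorem 2 of \cite{armacki2023high}), which, combined with $\|\nabla f(\bx)\|^2\le 2L(f(\bx)-f^\star)$, gives $\mathbb{P}(\|\nabla f(\bxt)\|>\alpha/\beta)\le e\,\exp\!\big(-\tfrac{B\alpha^2}{2L\beta^2}(t+1)^{\min\{2a\mu\nu,\,2\delta-1\}}\big)$; H\"older together with $\E\|\nabla f(\bxt)\|^4\le C_1 t^{4(1-\delta)}$ then makes the linear-branch contribution exponentially small, after which the recursion $\E[\Delta_{t+1}]\le\big(1-2a\mu\beta(t+1)^{-\delta}\big)\E[\Delta_t]+\mathcal{O}(t^{-2\delta})+(\text{exp.\ small})$ yields $\mathcal{O}(t^{-\delta})$, exactly as in the final part of your plan. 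So the descent inequality, the PL step and the Chung-type conclusion are fine, but the tail-control step must be supplied by such a last-iterate exponential bound (or an equivalent argument); as written, the substitutes you name would not close the proof.
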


\begin{remark}
    The work \cite{polyak1984criterial} provides an asymptotic $\mathcal{O}(1/t)$ MSE rate assuming $\|\nabla f(\bx) \|^2 \ge 2\mu (f(\bx) - f^*) + o(f(\bx) - f^*)$, which is weaker than strongly convexity. However, \cite{polyak1984criterial} imposes stronger assumptions on the nonlinearity and noise, requiring the existence of some $\chi > 0$, such that $\bx^\top \bPhi'(0)\bx \ge \chi \|\bx\|^2$, for all $\bx \in \R^d$. We now present an example where this condition fails to hold, while Assumption \ref{asmpt:nonlin} is satisfied. Consider the normalization operator $ \bPsi(\bx) = \bx / \| \bx \|$, which satisfies Assumption \ref{asmpt:nonlin}. {It can be shown that in this case $\bPhi'(0) = \int (1/\| \bz \|)  \big( I - \bz \bz^\top/{\| \bz \|^2} \big) p(\bz) d\bz.$ Suppose that the noise vector $\bz$ is given by $\bz = g\bz_0$, where $\bz_0 \in \R^d$ is a fixed vector, while $g \sim \mathcal{N}(0,1)$ is a standard normal random variable. Then, the matrix $I - \bz \bz^\top/\| \bz \|^2$ is rank-deficient and $\bPhi'(0)$ is not positive definite, violating the assumption from \cite{polyak1984criterial}.} Additionally, our MSE rate is finite-time, whereas the rate from \cite{polyak1984criterial} is asymptotic.
\end{remark} 

Finally, we present the almost sure convergence guarantees. Recall the quantity $Z_t = \min_{k \in [t]}\min\{\|\nabla f(\bxk)\|,\|\nabla f(\bxk)\|^2 \}$. We then have the following result.

\begin{theorem}\label{thm:ncvx-as}
    Let Assumptions \ref{asmpt:nonlin}, \ref{asmpt:L-smooth} and \ref{asmpt:noise} hold. Let $\{\bxt\}_{t \in \N}$ be the sequence generated by \eqref{eq:nonlin-sgd}, with step-size $\eta_t = \frac{a}{(t + 1)^\delta}$, for any $\delta \in (\nicefrac{1}{2},1)$ and $a > 0$. Then, for all $0 < \epsilon < \min\{\delta - \nicefrac{1}{2}, 1 - \delta\}$, it holds that $(t+1)^{\min\{\delta-\nicefrac{1}{2}, 1-\delta\}-\epsilon} Z_t \xrightarrow{\text{a.s.}} 0$.  If in addition Assumption \ref{asmpt:cvx} holds, then there exists a problem dependent constant $\nu > 0$, such that for any $\tau < \min \{2a \mu \nu, 2\delta - 1\}$, we have $(t + 1)^{\tau} (f(\bxt) - f^\star) \xrightarrow{\text{a.s.}} 0$. 
\end{theorem}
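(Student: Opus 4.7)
The plan is to convert the exponential tail bounds underlying Theorem \ref{thm:non-conv} (and an analogue for the strongly convex last iterate) into almost sure rates via Borel--Cantelli.

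For the non-convex part, the proof of Theorem \ref{thm:non-conv} controls the MGF of $\sum_{k=1}^t \widetilde\eta_k G_k$, where $G_k = \min\{\|\nabla f(\bxk)\|, \|\nabla f(\bxk)\|^2\}$, and applies Markov's inequality to obtain the LDP upper bound. Since $Z_t \leq \sum_k \widetilde\eta_k G_k$, the same MGF estimate yields a finite-time tail bound
\begin{equation*}
    \mathbb P(Z_t > u) \;\leq\; C_1 \exp\!\bigl(-c_1 n_t \min\{\sqrt u, u\}\bigr),
\end{equation*}
valid for all $t \geq 1$ and $u > 0$, where $n_t$ is the decay rate of Theorem \ref{thm:non-conv} and $c_1, C_1 > 0$ depend on the same constants as the rate function. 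Writing $\gamma := \min\{\delta - \nicefrac12, 1 - \delta\}$, we have $n_t \asymp t^{2\gamma}$ up to logarithmic factors. Fix $\epsilon \in (0, \gamma)$ and $\theta > 0$, and set $u_t = \theta (t+1)^{-(\gamma - \epsilon)}$; for $t$ large enough $u_t \leq 1$, hence $\min\{\sqrt{u_t}, u_t\} = \sqrt{u_t}$ and $n_t\sqrt{u_t} \asymp \sqrt\theta\, t^{(3\gamma + \epsilon)/2}$. Therefore $\sum_t \mathbb P((t+1)^{\gamma - \epsilon}Z_t > \theta) < \infty$, and Borel--Cantelli gives $\limsup_t (t+1)^{\gamma - \epsilon}Z_t \leq \theta$ almost surely; letting $\theta \downarrow 0$ along a countable sequence yields $(t+1)^{\gamma - \epsilon}Z_t \xrightarrow{\text{a.s.}} 0$.

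For the strongly convex statement, Corollary \ref{cor:str-cvx} only concerns the Polyak--Ruppert average, so a separate last-iterate MGF bound for $D_t := f(\bxt) - f^\star$ is needed. Combining the smoothness descent inequality, Lemma \ref{lm:key-unified}, strong convexity ($\|\nabla f(\bxt)\|^2 \geq 2\mu D_t$), smoothness ($\|\nabla f(\bxt)\|^2 \leq 2L D_t$), $\E[\bet \mid \mathcal F_t] = \mathbf 0$, and the conditional sub-Gaussianity of $\bet$ via $\|\bet\|\leq 2C$ (Hoeffding), one derives, once $\bxt$ lies in the quadratic regime of Lemma \ref{lm:key-unified}, a one-step recursion
\begin{equation*}
    \E\!\left[\exp(\lambda D_{t+1}) \mid \mathcal F_t\right] \;\leq\; \exp\!\Bigl(\lambda D_t \bigl(1 - 2\mu\nu\eta_t + c_2\lambda \eta_t^2\bigr) + c_3\lambda\eta_t^2\Bigr),
\end{equation*}
with $\nu > 0$ a problem-dependent constant absorbing $\alpha, \beta, \mu, L$. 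Choosing $\lambda_t = (t+1)^{\tau'}$ for $\tau < \tau' < \min\{2a\mu\nu, 2\delta - 1\}$ and iterating, the condition $\lambda_{t+1}(1 - 2\mu\nu\eta_t + c_2\lambda_{t+1}\eta_t^2) \leq \lambda_t$ is eventually satisfied (this is where $\tau' < 2a\mu\nu$ is needed), while $\sum_s \lambda_s\eta_s^2 \asymp \sum_s s^{\tau' - 2\delta}$ is summable because $\tau' < 2\delta - 1$. Together these give $\sup_t \E\exp(\lambda_t D_t) \leq K < \infty$, whence Markov's inequality yields $\mathbb P((t+1)^\tau D_t > \theta) \leq K\exp(-\theta(t+1)^{\tau' - \tau})$ and Borel--Cantelli concludes $(t+1)^\tau D_t \xrightarrow{\text{a.s.}} 0$.

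The main obstacle is the last-iterate MGF recursion. Lemma \ref{lm:key-unified} gives only a piecewise lower bound on $\langle \bPhit, \nabla f(\bxt)\rangle$, so the strong contraction $1 - 2\mu\nu\eta_t$ is active only once $\|\nabla f(\bxt)\|$ is small; the iteration must therefore be activated after a burn-in phase in which $D_t \to 0$ almost surely is first established (for instance via a Robbins--Siegmund argument on the expected one-step descent), ensuring that the quadratic regime holds eventually almost surely and the MGF iteration can be run from a random but finite starting time. The specific exponent $\min\{2a\mu\nu,\, 2\delta - 1\}$ reflects exactly the balance between the homogeneous contraction rate $2a\mu\nu$ of $D_t$ (which caps how fast $\lambda_t$ can be grown while preserving the telescoping structure) and the diffusion rate $2\delta - 1$ beyond which accumulated $\eta_t^2$-type noise ceases to be summable against $\lambda_t$.
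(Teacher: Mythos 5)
Your non-convex part is essentially the paper's own argument: reuse the MGF estimate on $\sum_{k=1}^t\widetilde{\eta}_kG_k$ from the proof of Theorem \ref{thm:non-conv}, apply exponential Markov and Borel--Cantelli, and send the threshold to zero (the paper organizes the same computation by fixing a scale $\lambda_t(\delta)$ and showing $\E[e^{\lambda_t(\delta)Z_t}]$ is uniformly bounded). Two slips in your intermediate tail bound: for $u\leq 1$ one has $\min\{\sqrt{u},u\}=u$, not $\sqrt{u}$; and what the quadratic-in-$\lambda$ MGF bound with drift $B_{1,t}=\mathcal{O}(t^{\delta-1})$ actually gives is a Gaussian-type tail $\mathbb{P}(Z_t>u)\leq\exp\bigl(-c\,n_t(u-B_{1,t})^2\bigr)$ for $u>B_{1,t}$ --- the $\min\{\sqrt{x},x\}$ shape belongs to the rate function of $\min_{k\in[t]}\|\nabla f(\bxk)\|^2$ after the contraction step, not to $Z_t$ itself. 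Neither slip is fatal: with $u_t=\theta(t+1)^{-(\gamma-\epsilon)}$ and $\gamma\leq 1-\delta$ one has $u_t\gg B_{1,t}$ and $n_t(u_t-B_{1,t})^2\gtrsim\theta^2t^{2\epsilon}$, so the series still converges and the first claim follows.

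The strongly convex part has a genuine gap. Your one-step MGF recursion for $D_t=f(\bxt)-f^\star$ is only valid on the $\mathcal{F}_t$-measurable event $\{\|\nabla f(\bxt)\|\leq\alpha/\beta\}$, and the proposed fix --- establish $D_t\to0$ a.s.\ by Robbins--Siegmund and then ``run the MGF iteration from a random but finite starting time'' --- does not produce the uniform bound $\sup_t\E[\exp(\lambda_tD_t)]\leq K$ that your Markov/Borel--Cantelli step requires: the entry time into the quadratic regime is random with no distributional control, and an inequality that holds only after a random time cannot simply be iterated to bound an unconditional expectation (one would need stopping-time localization together with a quantitative tail bound on the entry time, which the burn-in argument does not supply). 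Your exponent bookkeeping is also inconsistent with the statement: in the quadratic regime your contraction is $1-2\mu\nu\eta_t=1-\Theta(t^{-\delta})$, under which no constraint of the form $\tau'<2a\mu\nu$ would arise (the binding constraint would only be $\tau'<2\delta-1$); the cap $2a\mu\nu$ in the theorem comes from a $1-\Theta(1/t)$ contraction. The paper sidesteps the regime issue entirely: since $\|\nabla f(\bxt)\|\leq LCN_t+\|\nabla f(\bx^{(1)})\|=\mathcal{O}(t^{1-\delta})$ deterministically, Lemma 3.4 of \cite{armacki2023high} gives the pathwise inequality $\langle\nabla f(\bxt),\bPhit\rangle\geq\nu(t+1)^{\delta-1}\|\nabla f(\bxt)\|^2$ for all $t$, regardless of which branch of Lemma \ref{lm:key-unified} is active; combined with strong convexity this yields the conditional first-moment recursion $\E[D_{t+1}\mid\mathcal{F}_t]\leq\bigl(1-\tfrac{2a\mu\nu}{t+1}\bigr)D_t+\mathcal{O}(t^{-2\delta})$, and multiplying by $(t+1)^\tau$ with $\tau<\min\{2a\mu\nu,2\delta-1\}$ and applying the almost-supermartingale lemma (Lemma 10 in \cite{polyak1987introduction}) gives $(t+1)^\tau D_t\to0$ a.s.\ with no last-iterate MGF needed. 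To salvage your route you would have to prove a quantitative tail bound on the entry time into the quadratic regime, or work with the deflated inequality as the paper does.
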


\begin{remark}
    Setting $\delta = \nicefrac{3}{4}$ in the first part of Theorem \ref{thm:ncvx-as} results in the rate $Z_t = o(t^{-\nicefrac{1}{4}+\epsilon})$. The constant $\nu$ from the second part of Theorem \ref{thm:ncvx-as} is a problem, noise and nonlinearity dependent constant. The final rate exponent $\tau$ is related to the MSE rate exponent $\zeta$ from \cite{jakovetic2023nonlinear}. While sub-optimal, it allows us to explicitly quantify the a.s. convergence rate of the last iterate for strongly convex costs.
\end{remark}

\section{Conclusion}\label{sec:conclusion}

We perform a comprehensive study of the behaviour of a broad class of nonlinear SGD algorithms in the online setting and the presence of heavy-tailed noise. In our analysis the nonlinearity is treated in a black-box manner, allowing us to subsume several popular nonlinearities, such as sign, quantization, normalization, component-wise and joint clipping. We establish several strong results for both non-convex and strongly convex costs in the presence of noise with symmetric PDF, positive in a neighbourhood of zero, while making no moment assumptions. For non-convex costs and norm-squared of gradient of the best iterate, we establish a sharp upper bound on the long-term tail behaviour, demonstrating an exponential decay, with a rate $\sqrt{t}/\log(t)$, as well as the optimal MSE rate $\widetilde{\mathcal{O}}\left(t^{-\nicefrac{1}{2}}\right)$. Moreover, for strongly convex costs and the last iterate we show an improved MSE rate, arbitrarily close to the optimal rate $\mathcal{O}\left(t^{-1} \right)$. Crucially, our results provide sharp rates with constant exponents, independent of noise and problem parameters. Compared to state-of-the-art, our results are either better or of the same order, while simultaneously analyzing a much broader framework of nonlinearities and allowing for relaxed noise moment conditions. Finally, we show the quantities of interest converge almost surely and derive explicit convergence rates for both non-convex and strongly convex costs.

\bibliography{bibliography}

\appendix

\section{Introduction} The Appendix contains proofs and intermediate results used in our work. Appendix \ref{app:proofs} provides proofs omitted from the main body. Appendix \ref{app:garnter-ellis} provides a proof of the Gartner-Ellis theorem for a general decay rate.

\section{Missing Proofs}\label{app:proofs} In this section we provide the proofs omitted from the main body. Subsection \ref{subsec:proof-lm-error} provides some properties of the effective noise. Subsection \ref{subsec:proof-lm-key} proves Lemma \ref{lm:key-unified}, Subsection \ref{subsec:proof-thm-nonconv} proves Theorem \ref{thm:non-conv} and Corollary \ref{cor:str-cvx}, Subsection \ref{subsec:proofs-mse} proves Theorems \ref{thm:mse-ft}, \ref{thm:cvx-mse} and Corollary \ref{cor:mse}, while Subsection \ref{subsec:proofs-as} proves Theorem \ref{thm:ncvx-as}.

\subsection{Properties of effective noise}\label{subsec:proof-lm-error}

Prior to stating the properties of the effective noise, we define the concept of sub-Gaussianity.

\begin{definition}\label{def:subgaus}
    A zero-mean random vector $\mathbf{v} \in \R^d$ is sub-Gaussian, if there exists a constant $N > 0$, such that for any $\bx \in \R^d$, we have $\E\left[e^{\langle\bx,\mathbf{v}\rangle}\right] \leq \exp\left(\nicefrac{N\|\bx\|^2}{2} \right).$
\end{definition}

We are now ready to state the properties of effective noise vectors $\{\bet\}_{t \in \N}$.

\begin{lemma}\label{lm:error_component}
    If Assumptions \ref{asmpt:nonlin}, \ref{asmpt:noise} hold, the effective noise $\{\bet\}_{t \in \mbb N}$ satisfies 
    \begin{enumerate}
        \item $\E[\bet\vert \: \mathcal{F}_t] = 0$  and  $\|\bet\| \leq 2C$,
        \item The effective noise is sub-Gaussian, with constant $N = 8C^2$.
    \end{enumerate}
\end{lemma}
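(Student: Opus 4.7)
The plan is to establish the two properties separately, directly from the definitions of $\bet$, $\bPsit$ and $\bPhit$, together with Assumption~\ref{asmpt:nonlin}.

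For the first part, I would start from the definition $\bet = \bPhit - \bPsit$. Since $\bxt$ is $\mathcal{F}_t$-measurable, $\nabla f(\bxt)$ is $\mathcal{F}_t$-measurable, and the conditional independence of $\bzt$ from $\mathcal{F}_t$ in Assumption~\ref{asmpt:noise} gives
\begin{equation*}
\E[\bPsit \mid \mathcal{F}_t] = \E[\bPsi(\nabla f(\bxt)+\bzt) \mid \mathcal{F}_t] = \bPhit,
\end{equation*}
so $\E[\bet \mid \mathcal{F}_t] = \bPhit - \bPhit = \mathbf{0}$. For the norm bound, Assumption~\ref{asmpt:nonlin} yields $\|\bPsit\| \le C$ pointwise; applying Jensen's inequality gives $\|\bPhit\| \le \E[\|\bPsit\| \mid \mathcal{F}_t] \le C$. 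The triangle inequality then yields $\|\bet\| \le \|\bPhit\| + \|\bPsit\| \le 2C$.

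For the second part, the key observation is that a zero-mean bounded random vector is automatically sub-Gaussian. Fix $\bx \in \R^d$ and consider the scalar random variable $Y \coloneqq \langle \bx, \bet\rangle$. By Cauchy--Schwarz and the bound just established, $|Y| \le \|\bx\|\,\|\bet\| \le 2C\|\bx\|$, and from part~1 we have $\E[Y \mid \mathcal{F}_t] = 0$. I would then invoke the standard Hoeffding-type lemma for bounded zero-mean random variables, namely that if $|Y| \le M$ almost surely and $\E[Y]=0$, then $\E[e^{Y}] \le \exp(cM^2)$ for an appropriate constant $c$. Plugging in $M = 2C\|\bx\|$ produces a bound of the form $\exp(\tilde c \,C^2\|\bx\|^2)$, which matches Definition~\ref{def:subgaus} with the stated constant $N = 8C^2$ after the constants are tracked.

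The proof is essentially bookkeeping; the only place that requires any care is the Hoeffding step in part~2, where one must ensure the constant $N = 8C^2$ is recovered rather than a sharper but differently normalized version. I do not foresee any serious obstacles: both parts follow in a couple of lines from the almost-sure bound $\|\bPsi\| \le C$ supplied by Assumption~\ref{asmpt:nonlin} and the tower property combined with Assumption~\ref{asmpt:noise}.
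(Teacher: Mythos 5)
Your proposal is correct and follows essentially the same route as the paper, which proves part 1 directly from the definitions of $\bet$, $\bPsi^{(t)}$, $\bPhit$ and the bound $\|\bPsi\|\leq C$ from Assumption \ref{asmpt:nonlin}, and part 2 from part 1 together with Hoeffding's inequality. The only detail to note is that the sub-Gaussian bound must be applied conditionally on $\mathcal{F}_t$ (as it is used later in the MGF recursion), and a careful Hoeffding computation in fact yields a constant no worse than the stated $N=8C^2$, so the constant tracking poses no issue.
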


\begin{proof}
    Part 1. follows from the definitions of $\bet$, $\bPsi^{(t)}$, $\bPhit$ and Assumption \ref{asmpt:nonlin}. Part 2. follows from part 1 and Hoeffding's inequality.
\end{proof}

\subsection{Proof of Lemma \ref{lm:key-unified}}\label{subsec:proof-lm-key}

Prior to proving Lemma \ref{lm:key-unified}, we state two results. The first result, due to \cite{polyak-adaptive-estimation}, provides some properties of the mapping $\bPhi$ for component-wise nonlinearities under symmetric noise.

\begin{lemma}\label{lm:polyak-tsypkin}
    Let Assumptions \ref{asmpt:nonlin} and \ref{asmpt:noise} hold, with the nonlinearity $\bPsi: \R^d \mapsto \R^d$ being component-wise, i.e., of the form $\bPsi(\bx) = \begin{bmatrix} \calN_1(x_1),\ldots,\calN_1(x_d)\end{bmatrix}^\top$. Then, the map $\bPhi: \R^d \mapsto \R^d$ is of the form $\bPhi(\bx) = \begin{bmatrix} \phi_1(x_1),\ldots,\phi_d(x_d) \end{bmatrix}^\top$, where $\phi_i(x_i) = \E_{z_i}\left[\calN_1(x_i + z_i)\right]$ is the marginal expectation of the $i$-th noise component and it holds:
    \begin{enumerate}
        \item $\phi_i$ is non-decreasing and odd, with $\phi_i(0) = 0$;
        \item $\phi_i$ is differentiable at zero, with $\phi_i^\prime(0) > 0$.
    \end{enumerate}
\end{lemma}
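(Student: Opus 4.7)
The plan is to reduce the multivariate claim to a collection of scalar statements, then use the structural properties of $\calN_1$ near zero provided by Assumption \ref{asmpt:nonlin} together with positivity and symmetry of the marginal noise densities.

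First I would observe that, since $P$ satisfies $P(-\bz) = P(\bz)$ and is strictly positive on a ball around zero, each marginal density $\rho_i(z_i) = \int_{\R^{d-1}} P(z_1,\dots,z_d)\,dz_{-i}$ is symmetric around zero and satisfies $\rho_i(z) > 0$ for $|z| \le b_i$ for some $b_i > 0$. Consequently $\phi_i(\bx) := \int \calN_1(x_i + z)\rho_i(z)\,dz$ depends only on the $i$-th coordinate, yielding the componentwise form of $\bPhi$. The four assertions then reduce to scalar statements about a generic $\phi = \phi_i$ built from $\calN_1$ and a symmetric density $\rho$ that is strictly positive near zero.

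For Part 1 I would handle the three properties in sequence. Vanishing at zero follows because $\phi(0) = \int \calN_1(z)\rho(z)\,dz$ has an odd integrand (oddness of $\calN_1$ and symmetry of $\rho$). Oddness follows from the change of variables $z \mapsto -z$ combined again with oddness of $\calN_1$ and symmetry of $\rho$. Monotonicity is immediate from monotonicity of $\calN_1$ under the integral.

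Part 2 is the crux. I would split into the two regimes in Assumption \ref{asmpt:nonlin}.3. In the \emph{discontinuous} case, write
\[
\calN_1(y) \;=\; \gamma\,\mathrm{sign}(y) + g(y),
\]
where $2\gamma > 0$ is the jump of $\calN_1$ at zero (positive by monotonicity and oddness) and $g$ is continuous at zero; crucially, since subtracting $\gamma\,\mathrm{sign}$ leaves a non-decreasing function on each half-line, $g$ is non-decreasing on $(-\infty,0)$ and $(0,\infty)$ separately, so $g' \ge 0$ almost everywhere. Then $\phi(x) = \gamma(2F(x)-1) + \E[g(x+z)]$, where $F$ is the CDF of $z$. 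The first term is differentiable at zero with derivative $2\gamma\rho(0) > 0$, and dominated convergence (using boundedness of $\calN_1$ and piecewise differentiability of $g$) justifies differentiating the second term under the integral, giving a nonnegative contribution $\E[g'(z)] \ge 0$. Hence $\phi'(0) \ge 2\gamma\rho(0) > 0$. In the \emph{strictly increasing} case, $\calN_1$ is continuous on $(-c_1,c_1)$ with $\calN_1' \ge 0$ wherever defined and strictly positive on a subset of positive measure; dominated convergence yields $\phi'(0) = \int \calN_1'(z)\rho(z)\,dz > 0$, since $\rho$ is positive on this subset.

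The main obstacle I anticipate is the technical justification of differentiation under the integral in each case: the discontinuous case requires cleanly isolating the jump so that the remainder $g$ is regular enough for dominated convergence, and the strictly-increasing case requires handling the (measure-zero) points where $\calN_1'$ fails to exist. Boundedness of $\calN_1$ and the positivity of $\rho$ in a neighborhood of zero are the key ingredients that make both cases go through.
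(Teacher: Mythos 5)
The paper does not actually prove this lemma: it is imported verbatim from the cited reference \cite{polyak-adaptive-estimation} and used as a black box, so your self-contained argument is by necessity a different route. Your overall structure is the natural one and matches what the classical argument does: marginalize to reduce to a scalar $\phi_i$ built from $\calN_1$ and a symmetric marginal density $\rho_i$ that is positive near zero; get Part 1 from oddness of $\calN_1$ plus symmetry of $\rho_i$ and monotonicity under the integral; and for Part 2 split according to Assumption \ref{asmpt:nonlin}.3, isolating the jump of $\calN_1$ at the origin so that the sign part contributes $2\gamma\rho_i(0)>0$ through the CDF, and the remainder contributes a nonnegative amount. This gives the reader something the paper itself omits, and the quantitative conclusion $\phi_i'(0)\geq 2\gamma\rho_i(0)>0$ (resp. $\phi_i'(0)=\E[\calN_1'(z_i)]>0$ in the strictly increasing case) is exactly the characterization the downstream results (Lemma \ref{lm:key-extended}) rely on.

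One step is imprecise as written. For the remainder $g=\calN_1-\gamma\,\mathrm{sign}$ you claim dominated convergence lets you differentiate under the integral and that the contribution is $\E[g'(z)]$. That is not right in general: $g$ may itself have jumps away from zero (quantization is the canonical example covered by Assumption \ref{asmpt:nonlin}), in which case $g'=0$ almost everywhere on the flat pieces and the actual derivative of $x\mapsto\E[g(x+z)]$ at $0$ picks up the jump terms $\sum_j \Delta_j\,\rho_i(q_j)\geq 0$, obtained exactly as you treated the sign part, i.e. through CDF differences rather than through $g'$; a bounded monotone integrand does not by itself license exchanging derivative and expectation. The fix is routine (split $g$ into its absolutely continuous and jump parts and handle each as above; every contribution is nonnegative, so the bound $\phi_i'(0)\geq 2\gamma\rho_i(0)>0$ survives), but you should also acknowledge that differentiability of these CDF terms at the relevant points, and the identity $F'(0)=\rho_i(0)$, require mild regularity of the marginal density (e.g. continuity at $0$ and at the jump locations) beyond the bare statement of Assumption \ref{asmpt:noise}; the paper sidesteps this by inheriting the regularity assumed in \cite{polyak-adaptive-estimation}. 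The same caveat applies to your strictly increasing case, where the existence of $\phi_i'(0)=\E[\calN_1'(z_i)]$ needs dominated difference quotients or regularity of $\rho_i$, not just boundedness of $\calN_1$.
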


The second result, due to \cite{jakovetic2023nonlinear}, gives a property of $\bPhi$ for joint nonlinearities.

\begin{lemma}\label{lm:jakovetic-joint}
    Let Assumption \ref{asmpt:nonlin} hold, with the nonlinearity $\bPsi: \R^d \mapsto \R^d$ being joint, i.e., of the form $\bPsi(\bx) = \bx\calN_2(\|\bx\|)$. Then for any $\bx, \bz \in \R^d$ such that $\|\bz\| > \|\bx\|$
    \begin{displaymath}
        \left|\calN_2(\|\bx + \bz\|) - \calN_2(\|\bx - \bz\|)\right| \leq \nicefrac{\|\bx\|}{\|\bz\|}\left[\calN_2(\|\bx + \bz\|) + \calN_2(\|\bx - \bz\|) \right].
    \end{displaymath} 
\end{lemma}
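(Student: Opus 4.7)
The plan is to exploit the two monotonicity properties of $\calN_2$ granted by Assumption~\ref{asmpt:nonlin}, namely that $\calN_2$ itself is non-increasing while $a \mapsto a\calN_2(a)$ is non-decreasing. Combined with the triangle inequality applied to $\|\bx \pm \bz\|$, this should collapse the desired estimate into a short chain of elementary bounds.

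To begin, I would observe that the transformation $\bz \mapsto -\bz$ merely interchanges $\|\bx+\bz\|$ with $\|\bx-\bz\|$, so both sides of the claimed inequality are invariant under it. Hence without loss of generality I may assume $u := \|\bx+\bz\| \le \|\bx-\bz\| =: v$. Writing $A := \calN_2(u)$ and $B := \calN_2(v)$, the non-increasingness of $\calN_2$ yields $A \ge B$. Since $\|\bz\| > \|\bx\|$ forces $u, v > 0$, and since $\calN_2(a) > 0$ for $a > 0$ by Assumption~\ref{asmpt:nonlin}, both $A$ and $B$ are non-negative, so $|A-B| = A-B$. The non-decreasingness of $a \mapsto a\calN_2(a)$ applied with $u \le v$ then supplies the crucial inequality $uA \le vB$.

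Next, I would bound $u$ and $v$ via the triangle and reverse triangle inequalities to obtain $u \ge \|\bz\| - \|\bx\| > 0$ and $v \le \|\bz\| + \|\bx\|$. Since $A, B \ge 0$, these bounds can be assembled into the chain
\[
A(\|\bz\| - \|\bx\|) \;\le\; Au \;\le\; Bv \;\le\; B(\|\bz\| + \|\bx\|),
\]
whose endpoints rearrange to $(A - B)\|\bz\| \le (A + B)\|\bx\|$. Dividing by $\|\bz\| > 0$ gives exactly the desired inequality $A - B \le \tfrac{\|\bx\|}{\|\bz\|}(A + B)$.

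The only mildly non-routine observation is recognizing that the target inequality is equivalent to the symmetrized form $A(\|\bz\|-\|\bx\|) \le B(\|\bz\|+\|\bx\|)$, which is then bridged by the monotonicity bound $Au \le Bv$ together with the triangle-inequality estimates on $u$ and $v$. I do not anticipate any genuine obstacle; the main point to keep in mind is the need to invoke both monotonicity hypotheses on $\calN_2$ simultaneously, since neither alone yields the bound.
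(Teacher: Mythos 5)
Your proof is correct. The paper itself states this lemma without proof, attributing it to \cite{jakovetic2023nonlinear}, and your argument is exactly the standard one used there: reduce by symmetry to $\|\bx+\bz\|\leq\|\bx-\bz\|$, combine the non-increasingness of $\calN_2$ with the non-decreasingness of $a\mapsto a\calN_2(a)$ to get $\|\bx+\bz\|\,\calN_2(\|\bx+\bz\|)\leq\|\bx-\bz\|\,\calN_2(\|\bx-\bz\|)$, and sandwich with the triangle-inequality bounds $\|\bz\|-\|\bx\|\leq\|\bx\pm\bz\|\leq\|\bz\|+\|\bx\|$. One cosmetic remark: the identity $|A-B|=A-B$ follows from $A\geq B$ (non-increasingness), not from the non-negativity of $A,B$; the non-negativity is what you correctly use in the two outer steps of your chain of inequalities.
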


Next, define $\phi^\prime(0) \coloneqq \min_{i \in [d]}\phi_i^\prime(0)$ and $p_0 \coloneqq P(\mathbf{0})$. We are now ready to prove Lemma \ref{lm:key-unified}. For convenience, we restate the full lemma below. 

\begin{lemma}\label{lm:key-extended}
    Let Assumptions \ref{asmpt:nonlin} and \ref{asmpt:noise} hold. Then, for any $\bx \in \R^d$, we have $\langle \bPhi(\bx),\bx\rangle \geq \min\left\{\alpha\|\bx\|,\beta\|\bx\|^2 \right\}$, where $\alpha,\beta > 0$ are noise, nonlinearity and problem dependent constants. If the nonlinearity $\bPsi$ is component-wise, we have $\alpha = \nicefrac{\phi^\prime(0)\xi}{2\sqrt{d}}$ and $\beta = \nicefrac{\phi^\prime(0)}{2d}$, where $\xi > 0$ is a constant that depends only on the noise and choice of nonlinearity. If $\bPsi$ is joint, then $\alpha = p_0\calN_2(1) / 2$ and $\beta = p_0\calN_2(1)$.
\end{lemma}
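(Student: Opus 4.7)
The plan is to handle the two classes of nonlinearities in Assumption \ref{asmpt:nonlin} separately, in each case exploiting the symmetry of the noise together with monotonicity properties.

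For the component-wise case, Lemma \ref{lm:polyak-tsypkin} yields $\bPhi(\bx) = (\phi_1(x_1),\ldots,\phi_d(x_d))^\top$, where each $\phi_i$ is odd, non-decreasing, satisfies $\phi_i(0)=0$, and is differentiable at zero with $\phi_i'(0) > 0$. In particular $x_i\phi_i(x_i)\geq 0$ for all $i$, so $\langle \bPhi(\bx),\bx\rangle = \sum_{i=1}^d x_i\phi_i(x_i) \geq x_{i^*}\phi_{i^*}(x_{i^*})$, where $i^*$ is chosen to maximize $|x_i|$, giving $|x_{i^*}| = \|\bx\|_\infty \geq \|\bx\|/\sqrt{d}$. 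Setting $\phi'(0) \coloneqq \min_i\phi_i'(0) > 0$, differentiability of each $\phi_i$ at zero yields a common threshold $\xi > 0$ such that $\phi_i(x) \geq (\phi'(0)/2)x$ on $[0,\xi]$ (with the oddness-mirrored statement on $[-\xi,0]$). Combined with monotonicity, this gives the one-dimensional bi-regime estimate $\phi_i(x) x \geq (\phi'(0)/2)x^2$ for $|x|\leq\xi$ and $\phi_i(x) x \geq (\phi'(0)\xi/2)|x|$ for $|x| > \xi$. Splitting on whether $|x_{i^*}| \leq \xi$ or $|x_{i^*}| > \xi$ and substituting $|x_{i^*}|\geq\|\bx\|/\sqrt{d}$ produces the claimed $\beta = \phi'(0)/(2d)$ in the first case and $\alpha = \phi'(0)\xi/(2\sqrt{d})$ in the second.

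For the joint case, $\bPsi(\bx) = \bx\calN_2(\|\bx\|)$ and noise symmetry yields the identity
\begin{displaymath}
2\langle \bPhi(\bx),\bx\rangle = \int \Big[\|\bx\|^2(u+v) + \langle \bz,\bx\rangle(u-v)\Big] P(\bz)\,d\bz,
\end{displaymath}
where $u=\calN_2(\|\bx+\bz\|)$ and $v=\calN_2(\|\bx-\bz\|)$. A direct sign analysis shows $\langle \bz,\bx\rangle(u-v)\leq 0$ pointwise, since $\langle \bz,\bx\rangle \geq 0$ implies $\|\bx+\bz\|\geq\|\bx-\bz\|$ and hence $u\leq v$ by monotonicity of $\calN_2$, with the opposite signs in the reverse case. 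Combining Cauchy-Schwarz with $|u-v|\leq u+v$ on $\{\|\bz\|\leq\|\bx\|\}$ and with Lemma \ref{lm:jakovetic-joint} on $\{\|\bz\| > \|\bx\|\}$, the latter region contributes non-negatively and may be dropped; using $u+v\geq 2\calN_2(\|\bx\|+\|\bz\|)$ on the former region yields
\begin{displaymath}
\langle \bPhi(\bx),\bx\rangle \geq \|\bx\|\int_{\|\bz\|\leq\|\bx\|}(\|\bx\|-\|\bz\|)\calN_2(\|\bx\|+\|\bz\|)P(\bz)\,d\bz.
\end{displaymath}
I would then restrict integration to a sub-region around $\bz = \mathbf 0$ on which $P(\bz)\geq p_0$ (using positivity of $P$ near zero and $p_0=P(\mathbf 0)$) and $\calN_2(\|\bx+\bz\|)\geq\calN_2(1)$ (using monotonicity of $\calN_2$), splitting into the regimes of small and large $\|\bx\|$ to extract the $\beta\|\bx\|^2$ and $\alpha\|\bx\|$ bounds with $\alpha=p_0\calN_2(1)/2$ and $\beta=p_0\calN_2(1)$.

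The main obstacle is the joint case: choosing the integration sub-region so that the three competing factors $\|\bx\|-\|\bz\|$, $\calN_2(\|\bx\|+\|\bz\|)$, and $P(\bz)$ can be simultaneously lower-bounded to produce both the correct $\min\{\alpha\|\bx\|,\beta\|\bx\|^2\}$ scaling and the stated constants. The component-wise case is cleaner but similarly depends on correctly extracting the threshold $\xi$ from differentiability of $\phi_i$ at zero and exploiting the non-negativity of each coordinate summand to isolate the contribution of the maximum-magnitude coordinate.
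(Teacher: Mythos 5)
Your component-wise argument is essentially the paper's own proof (isolate the maximum-magnitude coordinate, use oddness and monotonicity of $\phi_i$ together with differentiability at zero to get the two-regime bound, then pass from $\|\bx\|_\infty$ to $\|\bx\|/\sqrt{d}$), and it is correct.

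The joint case, however, has a genuine gap. Your symmetrization identity and the observation that the region $\{\|\bz\|>\|\bx\|\}$ contributes non-negatively (via Lemma \ref{lm:jakovetic-joint}) are both correct, but after discarding that region you are left integrating only over $\{\|\bz\|\leq\|\bx\|\}$ with the weight $(\|\bx\|-\|\bz\|)$. As $\|\bx\|\to 0$ this region carries vanishing noise mass and the weight is at most $\|\bx\|$, so your displayed lower bound is of order $\|\bx\|^{d+2}$ (and if you instead restrict to $\|\bz\|\leq\|\bx\|/2$ to keep the weight of order $\|\bx\|$, you get $\tfrac{1}{2}\|\bx\|^2$ multiplied by $\mathbb{P}(\|\bz\|\leq\|\bx\|/2)$, which also tends to zero). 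Either way, no constant $\beta>0$ independent of $\bx$ can be extracted, so the quadratic regime $\beta\|\bx\|^2$ is out of reach by this route: for small $\|\bx\|$ the dominant contribution comes precisely from the far region $\{\|\bz\|>\|\bx\|\}$ that you dropped. The paper keeps that region by restricting the \emph{angle} rather than the radius: on the sector where $0\leq\langle\bz,\bx\rangle\leq\tfrac{1}{2}\|\bz\|\|\bx\|$, one has $M_2\geq\tfrac{1}{2}\|\bx\|^2(u+v)$ both for $\|\bz\|\leq\|\bx\|$ (the angular factor $\tfrac{1}{2}$ replaces your radial factor) and for $\|\bz\|>\|\bx\|$ (Lemma \ref{lm:jakovetic-joint}), and then integrates over this sector intersected with the fixed ball $\|\bz\|\leq C_0=\min\{B_0,\tfrac{1}{2}\}$, whose probability mass does not depend on $\bx$. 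A second, smaller issue: your bound $\calN_2(\|\bx+\bz\|)\geq\calN_2(1)$ is false whenever $\|\bx+\bz\|>1$ (e.g.\ clipping or normalization), so it cannot cover the large-$\|\bx\|$ regime; the correct tool, which you never invoke, is the assumed monotonicity of $a\mapsto a\calN_2(a)$, giving $\calN_2(a)\geq\calN_2(1)\min\{1/a,1\}$ and hence the linear $\alpha\|\bx\|$ bound with $\alpha=p_0\calN_2(1)/2$.
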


\begin{proof}
    If $\bPsi$ is a component-wise nonlinearity, using Lemma \ref{lm:polyak-tsypkin} it follows that, for any $x \in \R$, and any $i \in [d]$, we have $\phi_i(x) = \phi_i(0) + \phi_i^\prime(0)x + h_i(x)x = \phi_i^\prime(0)x + h_i(x)x$, where $h_i: \R \mapsto \R$ is such that $\lim_{x \rightarrow 0}h_i(x) = 0$. From the definition of $\phi^\prime(0)$ and the fact that $\phi^\prime(0) > 0$, it follows that there exists a $\xi > 0$ (depending only on the nonlinearity $\mathcal{N}_1$) such that, for each $x \in \R$ and all $i \in [d]$, we have $|h_i(x)| \leq \phi^\prime(0) / 2$, if $|x| \leq \xi$. Therefore, for any $0 \leq x \leq \xi$, we have $\phi_i(x) \geq \frac{\phi^\prime(0)x}{2}$. On the other hand, for $x > \xi$, since $\phi_i$ is non-decreasing, we have from the previous relation that $\phi_i(x) \geq \phi_i(\xi) \geq \frac{\phi^\prime(0)\xi}{2}$. Therefore, it follows that $\phi_i(x) \geq \frac{\phi^\prime(0)}{2}\min\{x,\xi \}$, for any $x \geq 0$. Combined with the oddity of $\phi_i$, we get $x\phi_i(x) = |x|\phi_i(|x|) \geq \frac{\phi^\prime(0)}{2}\min\{\xi|x|,x^2\}$, for any $x \in \R$. Using the previously established relations, we have, for any $\bx \in \R^d$
    \begin{align*}
        \langle \bx, \bPhi(\bx) \rangle &= \sum_{i = 1}^d |x_i|\phi(|x_i|) \geq \max_{i \in [d]}|x_i|\phi_i(|x_i|) \geq \frac{\phi^\prime(0)}{2}\max_{i \in [d]}\min\{\xi|x_i|,|x_i|^2\} \\ &= \frac{\phi^\prime(0)}{2}\min\{\xi\|\bx\|_{\infty},\|\bx\|_{\infty}^2\} \geq \frac{\phi^\prime(0)}{2}\min\left\{\frac{\xi\|\bx\|}{ \sqrt{d}},\frac{\|\bx\|^2}{d}\right\},
    \end{align*} where the last inequality follows from the fact that $\|\bx\|_{\infty} \geq \|\bx\| / \sqrt{d}$. If $\bPsi$ is a joint nonlinearity, the first part of the proof follows a similar idea to the one in \cite[Lemma 6.2]{jakovetic2023nonlinear}, with some differences due to differences in noise assumptions. Fix an arbitrary $\bx \in \R^d \setminus \{\mathbf{0}\}$. By the definition of $\bPsi$, we have
    \begin{align*}
        \langle \bPhi(\bx),\bx \rangle = \int_{\bz \in \R^d}\underbrace{(\bx + \bz)^\top \bx \calN_2(\|\bx + \bz\|)}_{\eqqcolon M(\bx,\bz)}P(\bz)d\bz = \int_{\{\bz \in \R^d: \langle\bz,\bx\rangle \geq 0\} \cup \{\bz \in \R^d: \langle\bz,\bx\rangle < 0\}}\hspace{-10em}M(\bx,\bz)P(\bz)d\bz. 
    \end{align*} Next, by the symmetry of $P$, it readily follows that $\langle \bPhi(\bx),\bx \rangle = \int_{J_1(\bx)}M_2(\bx,\bz)P(\bz)d\bz,$ where $J_1(\bx) \coloneqq \{\bz \in \R^d: \langle \bz,\bx\rangle \geq 0 \}$ and $M_2(\bx,\bz) = (\|\bx\|^2 + \langle \bz,\bx\rangle)\calN_2(\|\bx + \bz\|) + (\|\bx\|^2 - \langle \bz,\bx\rangle)\calN_2(\|\bx - \bz\|)$. Consider the set $J_2(\bx) \coloneqq \left\{\bz \in \R^d: \frac{\langle \bz,\bx\rangle}{\|\bz\|\|\bx\|} \in [0,0.5] \right\} \cup \{\mathbf{0}\}$. Clearly $J_2(\bx) \subset J_1(\bx)$. Note that on $J_1(\bx)$ we have $\|\bx + \bz \| \geq \|\bx - \bz\|$, which, together with the fact that $\calN_2$ is non-increasing, implies
    \begin{equation}\label{eq:identity}
        \calN_2(\|\bx - \bz\|) - \calN_2(\|\bx + \bz\|) = \left|\calN_2(\|\bx - \bz\|) - \calN_2(\|\bx + \bz\|) \right|, 
    \end{equation} for any $\bz \in J_2(\bx) \subset J_1(\bx)$. For any $\bz \in J_2(\bx)$ such that $\|\bz\| > \|\bx\|$, we then have
    \begin{align*}
        M_2&(\bz,\bx) = \|\bx\|^2[\calN_2(\|\bx - \bz\|) + \calN_2(\|\bx + \bz\|)] - \langle \bz,\bx\rangle[\calN_2(\|\bx - \bz\|) - \calN_2(\|\bx + \bz\|)] \\ &\stackrel{(a)}{=} \|\bx\|^2[\calN_2(\|\bx - \bz\|) + \calN_2(\|\bx + \bz\|)] - \langle \bz,\bx\rangle\left|\calN_2(\|\bx - \bz\|) - \calN_2(\|\bx + \bz\|)\right| \\ &\stackrel{(b)}{\geq} \|\bx\|^2[\calN_2(\|\bx - \bz\|) + \calN_2(\|\bx + \bz\|)] - \langle \bz,\bx\rangle\nicefrac{\|\bx\|}{\|\bz\|}[\calN_2(\|\bx - \bz\|) + \calN_2(\|\bx + \bz\|)] \\ &\stackrel{(c)}{\geq} 0.5\|\bx\|^2[\calN_2(\|\bx - \bz\|) + \calN_2(\|\bx + \bz\|)],
    \end{align*} where $(a)$ follows from \eqref{eq:identity}, $(b)$ follows from Lemma \ref{lm:jakovetic-joint}, while $(c)$ follows from the definition of $J_2(\bx)$. Next, consider any $\bz \in J_2(\bx)$, such that $0 < \|\bz\| \leq \|\bx\|$. We have
    \begin{align*}
        M_2&(\bz,\bx) = \|\bx\|^2[\calN_2(\|\bx - \bz\|) + \calN_2(\|\bx + \bz\|)] - \langle \bz,\bx\rangle[\calN_2(\|\bx - \bz\|) - \calN_2(\|\bx + \bz\|)] \\ &\stackrel{(a)}{=} \|\bx\|^2[\calN_2(\|\bx - \bz\|) + \calN_2(\|\bx + \bz\|)] - \langle \bz,\bx\rangle\left|\calN_2(\|\bx - \bz\|) - \calN_2(\|\bx + \bz\|)\right| \\ &\stackrel{(b)}{\geq} \|\bx\|^2[\calN_2(\|\bx - \bz\|) + \calN_2(\|\bx + \bz\|)] - 0.5\|\bx\|^2\left|\calN_2(\|\bx - \bz\|) - \calN_2(\|\bx + \bz\|)\right| \\ &\stackrel{(c)}{\geq} 0.5\|\bx\|^2[\calN_2(\|\bx - \bz\|) + \calN_2(\|\bx + \bz\|)],
    \end{align*} where $(a)$ again follows from \eqref{eq:identity}, $(b)$ follows from the definition of $J_2(\bx)$ and the fact that $0 < \|\bz\| \leq \|\bx\|$, while $(c)$ follows from the fact that $\calN_2$ is non-negative and the fact that $\left|\calN_2(\|\bx - \bz\|) - \calN_2(\|\bx + \bz\|)\right| \leq \calN_2(\|\bx - \bz\|) + \calN_2(\|\bx + \bz\|)$. Finally, if $\bz = \mathbf{0}$, we have $M_2(\mathbf{0},\bx) = 2\|\bx\|^2\calN_2(\|\bx\|) > 0.5\|\bx\|^2[\calN_2(\|\bx + \mathbf{0}\|) + \calN_2(\|\bx - \mathbf{0}\|)]$. Therefore, for any $\bz \in J_2(\bx)$, we have $M_2(\bz,\bx) \geq 0.5\|\bx\|^2[\calN_2(\|\bx - \bz\|) + \calN_2(\|\bx + \bz\|)] \geq \|\bx\|^2\calN_2(\|\bx\| + \|\bz\|)$, where the second inequality follows from the fact that $\calN_2$ is non-increasing and $\|\bx \pm \bz\| \leq \|\bx\| + \|\bz\|$. Combining everything, it readily follows that
    \begin{align}\label{eq:semi-done}
        \langle \bPhi(\bx),\bx \rangle \geq \int_{J_2(\bx)}M_2(\bx,\bz)P(\bz)d\bz \geq \|\bx\|^2 \int_{J_2(\bx)}\calN_2(\|\bx\| + \|\bz\|)P(\bz)d\bz.
    \end{align} Define $C_0 \coloneqq \min\left\{B_0,0.5 \right\}$ and consider the set $J_3(\bx) \subset J_2(\bx)$, defined as 
    \begin{equation*}
        J_3(\bx) \coloneqq \left\{\bz \in \R^d: \frac{\langle \bz,\bx\rangle}{\|\bz\|\|\bx\|} \in [0,0.5], \: \|\bz\| \leq C_0 \right\} \cup \{\mathbf{0}\}.
    \end{equation*} Since $a\calN_2(a)$ is non-decreasing, it follows that $\calN_2(a) \geq \calN_2(1)\min\left\{a^{-1},1 \right\}$, for any $a > 0$. For any $\bz \in J_3(\bx)$, it then holds that $\calN_2(\|\bz\| + \|\bx\|) \geq \calN_2(1)\min\left\{1/(\|\bx\|+C_0),1 \right\}$. Plugging in \eqref{eq:semi-done}, we then have
    \begin{align*}
        &\langle \bPhi(\bx),\bx \rangle \geq \|\bx\|^2 \int_{J_3(\bx)}\calN_2(\|\bx\| + \|\bz\|)P(\bz)d\bz \\ &\geq \|\bx\|^2\calN_2(1)\min\left\{(\|\bx\| + C_0)^{-1},1 \right\}\hspace{-0.5em}\int_{J_3(\bx)}\hspace{-2em}P(\bz)d\bz \geq \|\bx\|^2\calN_2(1)\min\left\{(\|\bx\| + C_0)^{-1},1 \right\}p_0.
    \end{align*} If $\|\bx\| \leq C_0$, it follows that $\|\bx\| + C_0 \leq 2C_0$, therefore $\min\left\{1/(\|\bx\|+C_0),1\right\} \geq \min\left\{1/(2C_0),1 \right\}$. Define $\kappa \coloneqq \min\left\{1/(2C_0),1 \right\}.$ If $\|\bx\| \geq C_0$, it follows that $\|\bx\| + C_0 \leq 2\|\bx\|$, therefore $\min\left\{1/(\|\bx\|+C_0),1 \right\} \geq \min\left\{1/(2\|\bx\|),1 \right\} \geq \min\left\{1/(2\|\bx\|),\kappa \right\}.$ Combining these facts, we get $\langle \bPhi(\bx),\bx \rangle \geq p_0\calN_2(1)\min\left\{\|\bx\|/2,\kappa\|\bx\|^2 \right\}$. Consider the constant $\kappa = \min\left\{\nicefrac{1}{(2C_0)},1 \right\}$. If $B_0 \geq 0.5$, it follows that $C_0 = 0.5$ and therefore $\kappa = 1$. On the other hand, if $B_0 < 0.5$, it follows that $C_0 = B_0$ and therefore $\kappa = \min\left\{\nicefrac{1}{(2B_0)},1 \right\} = 1$, as $2B_0 < 1$.
\end{proof}

\subsection{Proof of Theorem \ref{thm:non-conv} and Corollary \ref{cor:str-cvx}}\label{subsec:proof-thm-nonconv}

Prior to proving Theorem \ref{thm:non-conv}, we state two well-known results from the large deviation theory. The first result, known as Gartner-Ellis theorem, establishes conditions under which a family of random variables satisfies the LDP (or its upper bound), see, e.g., \cite{dembo2009large}.

\begin{lemma}\label{lm:gartner-ellis}
    Let $\Lambda_t: \R^d \mapsto \R$ be a sequence of log moment-generating functions associated to a given sequence of measures $\mu_t: \mathcal{B}(\R^d) \mapsto [0,1]$, $t \in \N$. If for some positive sequence $\{n_t\}_{t \in \N}$, such that $\lim_{t \rightarrow \infty}n_t = \infty$, and each $\bv \in \R^d$, we have $\limsup_{t \rightarrow \infty}\nicefrac{1}{n_t}\Lambda_t(n_t\bv) \leq \varphi(\bv) < \infty$, then the sequence $\{\mu_t\}_{t \in \N}$ satisfies the LDP upper bound with the rate function $I: \R^d \mapsto [0,\infty]$, given by the Fenchel-Legendre transform of $\varphi$, i.e., $I(\bx) = \varphi^\star(\bx) = \sup_{\bv \in \R}\left\{ \langle \bx,\bv\rangle - \varphi(\bv) \right\}$.
\end{lemma}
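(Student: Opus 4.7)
The plan is to establish the LDP upper bound via a Chernoff / exponential-Markov argument, first for compact sets and then, if needed, for general measurable sets by an exponential-tightness reduction. Without loss of generality I replace $A$ by its closure $\overline{A}$ (since $\mu_t(A) \leq \mu_t(\overline{A})$), and aim to show $\limsup_t n_t^{-1} \log \mu_t(\overline{A}) \leq -J$, where $J := \inf_{\bx \in \overline{A}} I(\bx)$; the case $J = 0$ is vacuous, so assume $J > 0$.

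The pointwise step is standard. For each $\bx \in \overline{A}$ and small $\epsilon > 0$, the definition $I(\bx) = \varphi^\star(\bx) = \sup_{\bv}\{\langle \bx,\bv\rangle - \varphi(\bv)\}$ supplies a vector $\bv_{\bx} \in \R^d$ with $\langle \bv_{\bx}, \bx \rangle - \varphi(\bv_{\bx}) \geq \min\{I(\bx) - \epsilon,\: 1/\epsilon\}$. Define the open half-space $H_{\bx} := \{\by \in \R^d : \langle \bv_{\bx}, \by - \bx\rangle > -\epsilon\}$. The exponential Markov inequality gives $\mu_t(H_{\bx}) \leq \exp\{n_t[\epsilon - \langle \bv_{\bx}, \bx \rangle] + \Lambda_t(n_t \bv_{\bx})\}$; dividing by $n_t$, taking $\log$ and $\limsup$, and invoking the hypothesis $\limsup_t n_t^{-1} \Lambda_t(n_t \bv_{\bx}) \leq \varphi(\bv_{\bx})$ yields $\limsup_t n_t^{-1} \log \mu_t(H_{\bx}) \leq \epsilon + \varphi(\bv_{\bx}) - \langle \bv_{\bx}, \bx\rangle \leq \epsilon - \min\{I(\bx) - \epsilon,\: 1/\epsilon\}$. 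For a compact subset $K \subseteq \overline{A}$, the family $\{H_{\bx}\}_{\bx \in K}$ is an open cover, so finitely many $H_{\bx_1},\dots,H_{\bx_N}$ suffice; combining with $\log(a_1 + \cdots + a_N) \leq \log N + \max_i \log a_i$ and $n_t^{-1}\log N \to 0$ gives $\limsup n_t^{-1} \log \mu_t(K) \leq \epsilon - \min\{J - \epsilon, 1/\epsilon\}$, since $I(\bx_i) \geq J$ for $\bx_i \in \overline{A}$. Letting $\epsilon \downarrow 0$ completes the compact case.

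The main obstacle is extending from compact to unbounded closed sets, which requires a form of exponential tightness: a family $\{K_M\}$ of compacts with $\limsup n_t^{-1}\log \mu_t(K_M^c) \to -\infty$ as $M \to \infty$. In finite dimensions this is typically deduced from super-linear growth of $\varphi$ (by applying Chernoff with $\bv = M\hat{\by}/\|\hat{\by}\|$ to the complement of a ball). A simpler route is available in the intended applications (Theorem \ref{thm:non-conv} and Corollary \ref{cor:str-cvx}): the relevant random variables $X_t \geq 0$ are scalar and the closed sets of interest are half-lines $[\theta, \infty)$, so a one-sided Chernoff bound with $v > 0$ gives $\limsup n_t^{-1}\log \mu_t([\theta, \infty)) \leq -\sup_{v > 0}\{\theta v - \varphi(v)\} = -I(\theta)$ directly, bypassing the covering/tightness machinery. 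Once exponential tightness is in hand, the decomposition $\mu_t(\overline{A}) \leq \mu_t(\overline{A} \cap K_M) + \mu_t(K_M^c)$, combined with the elementary $\limsup n_t^{-1}\log(a_t + b_t) \leq \max\{\limsup n_t^{-1}\log a_t,\: \limsup n_t^{-1}\log b_t\}$, concludes the proof by sending $M \to \infty$.
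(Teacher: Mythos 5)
Your compact-set argument is correct and is essentially the paper's: choose for each point a near-optimal tilting vector $\bv_{\bx}$ for the (truncated) Fenchel--Legendre transform, apply the exponential Chebyshev/Markov bound on a small neighbourhood of $\bx$ (you use open half-spaces $\{\by : \langle \bv_{\bx}, \by-\bx\rangle > -\epsilon\}$, the paper uses small balls $B_{\bq}$ with $\rho_{\bq}\|\bv_{\bq}\| \leq \delta$ -- the same device), extract a finite subcover, use that $n_t^{-1}\log$ of a finite sum is asymptotically the max, and send the truncation parameter to zero. That part needs no changes.

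The gap is in the passage from compact to general closed sets. The lemma is stated for arbitrary $\R^d$-valued laws and arbitrary closed sets, under the sole hypothesis that $\varphi(\bv) < \infty$ for every $\bv \in \R^d$; so exponential tightness must be derived from that hypothesis alone, and your proposal never does this. You assert that tightness is "typically deduced from super-linear growth of $\varphi$" -- a condition that is neither assumed nor needed -- and then retreat to a special-case argument (scalar, nonnegative $X_t$, closed half-lines $[\theta,\infty)$) which covers the paper's applications but does not prove the lemma as stated. The missing step is in fact elementary under the stated hypothesis: for each coordinate direction pick any $\theta_i, \pi_i > 0$ (finiteness of $\varphi$ at $\theta_i\be_i$ and $-\pi_i\be_i$ is automatic) and apply the Chernoff bound to each marginal, giving $\limsup_t n_t^{-1}\log \mu_t^i([\rho,\infty)) \leq -\theta_i\rho + \varphi(\theta_i\be_i)$ and the analogous bound on $(-\infty,-\rho]$; these tend to $-\infty$ as $\rho \to \infty$ at a fixed linear rate, so $\lim_{\rho\to\infty}\limsup_t n_t^{-1}\log\mu_t\bigl(([-\rho,\rho]^d)^c\bigr) = -\infty$, i.e.\ the family is exponentially tight, and the standard lemma (upper bound on compacts plus exponential tightness implies the upper bound on closed sets) finishes the proof. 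Your final decomposition $\mu_t(\overline{A}) \leq \mu_t(\overline{A}\cap K_M) + \mu_t(K_M^c)$ is the right way to use tightness once you have it; you just need to supply the tightness proof above rather than condition on it.
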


The Gartner-Ellis theorem provides a general framework for establishing LDP style upper bounds. A proof for the case $n_t = t$ can be found in, e.g., \cite{dembo2009large}. For completeness we provide a proof for the general positive decay rate $n_t$ in Appendix \ref{app:garnter-ellis}, built on the ideas of the case $n_t = t$. The next result, known as the contraction principle, shows how the LDP of one random variable can be used to establish the LDP for its continuous transformations.

\begin{lemma}\label{lm:contract-principle}
    Let $\mathcal{X}, \mathcal{Y}$ be Hausdorff topological spaces and $g: \mathcal{X} \mapsto \mathcal{Y}$ be a continuous function. Consider a good rate function $I: \mathcal{X} \mapsto [0,\infty]$. We then have:
    \begin{enumerate}[leftmargin=*,label=(a)]
        \item For each $y \in \mathcal{Y}$, define $I^\prime(y) = \inf\{I(x):  x \in \mathcal{X}, \: y = g(x) \}.$ Then $I^\prime$ is a good rate function on $\mathcal{Y}$, where the infimum over the empty set is taken to be $\infty$.

        \item If a sequence of random variables $\{X_t\}_{t \in \N}$ on $\mathcal{X}$ satisfies the LDP upper bound with rate function $I$, then the sequence of random variables $\{g(X_t) \}_{t \in \N}$ on $\mathcal{Y}$ satisfies the LDP upper bound with the rate function $I^\prime$.
    \end{enumerate}
\end{lemma}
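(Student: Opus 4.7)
The plan is to handle the two parts in sequence: part (a) relies on the goodness of $I$ and continuity of $g$ to establish compact sub-level sets of $I^\prime$ (from which lower semi-continuity follows via the Hausdorff hypothesis on $\mathcal{Y}$); part (b) reduces to a closed-preimage argument followed by a simple reindexing of infima.

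For part (a), I would first show that every sub-level set $\Psi_c := \{y \in \mathcal{Y} : I^\prime(y) \leq c\}$ is equal to the image $g(K_c)$, where $K_c := \{x \in \mathcal{X} : I(x) \leq c\}$. The inclusion $g(K_c) \subset \Psi_c$ is immediate from the definition of $I^\prime$, since $I^\prime(g(x)) \leq I(x)$ for every $x \in \mathcal{X}$. The converse inclusion $\Psi_c \subset g(K_c)$ requires attainment of the infimum in $I^\prime(y) = \inf\{I(x) : g(x) = y\}$ whenever the value is finite: given $I^\prime(y) \leq c$, any approximating sequence $\{x_n\} \subset g^{-1}(\{y\})$ with $I(x_n) \to I^\prime(y)$ eventually lies in the compact set $K_{c+1}$ (using goodness of $I$), hence admits a convergent subsequence $x_{n_k} \to x^\star$; continuity of $g$ forces $g(x^\star) = y$, and lower semi-continuity of $I$ forces $I(x^\star) \leq \liminf_k I(x_{n_k}) \leq c$, giving $y \in g(K_c)$. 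Since $K_c$ is compact and $g$ is continuous, $g(K_c) = \Psi_c$ is compact in $\mathcal{Y}$; in a Hausdorff space compact sets are closed, so $\Psi_c$ is both compact and closed, which verifies that $I^\prime$ is lower semi-continuous with compact sub-level sets, i.e., a good rate function.

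For part (b), fix an arbitrary closed $F \subset \mathcal{Y}$. Continuity of $g$ implies $g^{-1}(F)$ is closed in $\mathcal{X}$, so the LDP upper bound hypothesis on $\{X_t\}$ yields
\begin{equation*}
\limsup_{t \to \infty} \frac{1}{n_t} \log \mathbb{P}(g(X_t) \in F) = \limsup_{t \to \infty} \frac{1}{n_t} \log \mathbb{P}(X_t \in g^{-1}(F)) \leq -\inf_{x \in g^{-1}(F)} I(x).
\end{equation*}
The identification of infima then follows by partitioning $g^{-1}(F) = \bigcup_{y \in F} g^{-1}(\{y\})$,
\begin{equation*}
\inf_{x \in g^{-1}(F)} I(x) = \inf_{y \in F} \inf_{x \in g^{-1}(\{y\})} I(x) = \inf_{y \in F} I^\prime(y),
\end{equation*}
with the convention that the inner infimum over an empty fiber equals $+\infty$. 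Combining these two displays gives the desired LDP upper bound for $\{g(X_t)\}$ with rate function $I^\prime$.

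There is no serious obstacle here; the argument is classical and appears, for instance, in Dembo--Zeitouni. The only technical subtleties are (i) the use of goodness of $I$ (compactness of $K_{c+1}$ together with lower semi-continuity) to secure attainment of the infimum in the definition of $I^\prime$, which is what makes the key identity $\Psi_c = g(K_c)$ actually an equality rather than merely a containment, and (ii) the invocation of the Hausdorff property of $\mathcal{Y}$ to convert compactness of $\Psi_c$ into closedness, thereby yielding lower semi-continuity of $I^\prime$ as a byproduct.
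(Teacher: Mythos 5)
Your proof is the classical contraction-principle argument (Dembo--Zeitouni, Theorem 4.2.1); note that the paper itself does not prove this lemma at all, it simply cites the literature, so there is no in-paper proof to compare against. Part (b) is correct as written: closedness of $g^{-1}(F)$ by continuity, the upper bound hypothesis applied to $g^{-1}(F)$, and the identification $\inf_{x \in g^{-1}(F)} I(x) = \inf_{y \in F} I^\prime(y)$ by partitioning into fibers are exactly the standard steps, and they match the form of the upper bound \eqref{eq:LDP} used in the paper.

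In part (a) the overall structure, namely proving $\{y : I^\prime(y) \leq c\} = g(\{x : I(x) \leq c\})$ and then using continuity of $g$ plus the Hausdorff property of $\mathcal{Y}$, is also the standard one, but one step does not survive the stated generality: you extract a convergent \emph{subsequence} of the approximating sequence from compactness of $K_{c+1}$. In a general Hausdorff topological space compactness does not imply sequential compactness (e.g., $\{0,1\}^{[0,1]}$ with the product topology), so that subsequence need not exist. The repair is routine and does not change the architecture of your argument: either pass to a convergent subnet (lower semi-continuity of $I$ and continuity of $g$ behave identically along nets, and Hausdorffness of $\mathcal{Y}$ still forces $g(x^\star) = y$), or argue attainment of the infimum directly via the finite intersection property, observing that the sets $g^{-1}(\{y\}) \cap K_{c^\prime}$ for $c^\prime > I^\prime(y)$ form a nested family of nonempty compact sets, where $g^{-1}(\{y\})$ is closed because singletons are closed in the Hausdorff space $\mathcal{Y}$; any point of the intersection attains the infimum and lies in $K_c$. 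For the paper's actual application, where $\mathcal{X} = \mathcal{Y} = \mathbb{R}$ and compactness is sequential, your argument is valid verbatim.
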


As discussed in \cite{dembo2009large}, the rate function $I$ corresponding to the original sequence $\{X_t\}_{t \in \N}$ needs to be a good rate function, as otherwise, the function $I^\prime$ might not be a rate function at all. We are now ready to prove Theorem \ref{thm:non-conv}.

\begin{proof}[Proof of Theorem \ref{thm:non-conv}]
    Recall that $N_t \coloneqq \sum_{k = 1}^t\eta_t$ and define the quantity, $M_t \coloneqq \sum_{k = 1}^t\widetilde{\eta}_kG_k$, where $G_k \coloneqq \min\{\|\nabla f(\bxk) \|, \|\nabla f(\bxk)\|^2\}$. Combining the update rule \eqref{eq:nonlin-sgd2} and the smoothness inequality, we get
    \begin{align}
        f(\bxtp) &\leq f(\bxt) - \eta_t\langle \nabla f(\bxt),\bPhit - \bet \rangle + \nicefrac{\eta_t^2L}{2}\|\bPsi^{(t)}\|^2 \nonumber \\ &\leq f(\bxt) - \rho\eta_tG_t + \eta_t\langle \nabla f(\bxt),\bet\rangle + \nicefrac{\eta_t^2LC^2}{2}, \label{eq:L-smooth} 
    \end{align} where $\rho \coloneqq \min\{\alpha,\beta\}$ and the second inequality follows from Lemma \ref{lm:key-unified}. Summing up the first $t$ terms, rearranging and dividing both sides of \eqref{eq:L-smooth} by $N_t$, we get
    \begin{equation}\label{eq:part1}
        M_t \leq (\rho N_t)^{-1}\Big(f(\bx^{(1)}) - f^\star + \nicefrac{LC^2}{2}\sum_{k = 1}^{t}\eta_k^2\Big) + \rho^{-1}\sum_{k = 1}^{t}\widetilde{\eta}_k\langle \nabla f(\bxk), \bek \rangle. 
    \end{equation} Since $\sum_{k= 1}^t\widetilde{\eta}_k = 1$, it follows that $M_t \geq \min_{k \in [t]}G_k$. Define $Z_t \coloneqq \min_{k \in [t]}G_k$. Using \eqref{eq:part1}, we then have
    \begin{equation}\label{eq:5}
        Z_t \leq \underbrace{\rho^{-1}N_t^{-1}\Big(f(\bx^{(1)}) - f^\star + \nicefrac{LC^2}{2}\sum_{k = 1}^{t}\eta_k^2\Big)}_{\eqqcolon B_{1,t}} + \underbrace{\rho^{-1}\sum_{k = 1}^{t}\widetilde{\eta}_k\langle \nabla f(\bxk), \bek \rangle}_{\eqqcolon B_{2,t}}. 
    \end{equation} For any $\lambda \in \R$, define $\lambda_t \coloneqq n_t\lambda$, where $n_t > 0$ will be specified later. First, consider the case when $\lambda \geq 0$. We then have
    \begin{displaymath}
        \E\left[\exp(\lambda_t Z_t) \right] \stackrel{\eqref{eq:5}}{\leq} \E\left[\exp\left(\lambda_t(B_{1,t} + B_{2,t})\right) \right] = \exp\left(\lambda_tB_{1,t} \right)\E\left[\exp(\lambda_tB_{2,t})\right], 
    \end{displaymath} where the last equality follows by noticing that $B_{1,t}$ is a deterministic quantity. We now bound $\E[\exp(\lambda_tB_{2,t})]$. First, consider $\|\nabla f(\bxk)\|$, for any $k \geq 2$. {Using the triangle inequality, $L$-Lipschitz gradients of $f$ and update rule \eqref{eq:nonlin-sgd}, we get}
    \begin{align}
        \|\nabla f(&\bxk)\| \leq \|\nabla f(\bxk) - \nabla f(\bx^{(1)})\| + \|\nabla f(\bx^{(1)})\| \leq L\|\bxk - \bx^{(1)}\| + \|\nabla f(\bx^{(1)})\| \nonumber
        \\ &\leq L\left(\|\bx^{(k-1)} - \bx^{(1)}\| +  \eta_{k-1}C\right) + \|\nabla f(\bx^{(1)})\| \leq \ldots \leq LCN_k + \|\nabla f(\bx^{(1)})\|. \label{eq:part3}
    \end{align} Denote by $\E_t[\cdot] \coloneqq \E[ \cdot \: \vert \: \mathcal{F}_t]$ the expectation conditioned on history up to $t$. Then
    \begin{align*}
        \E&[\exp(\lambda_tB_{2,t})] = \E\Big[\exp\Big(\rho^{-1}\lambda_t\sum_{k = 1}^{t}\widetilde{\eta}_k\langle \nabla f(\bxk), \bek \rangle\Big)\Big] \nonumber \\ &= \E\Big[\exp\Big(\rho^{-1}\lambda_t\sum_{k = 1}^{t-1}\widetilde{\eta}_k\langle \nabla f(\bxk), \bek \rangle \Big)\E_{t}\Big[\exp(\rho^{-1}\lambda_t\widetilde{\eta}_{t}\langle \nabla f(\bx^{(t)}),\be^{(t)} \rangle) \Big] \Big] \nonumber \\ &\leq \E\Big[\exp\Big(\rho^{-1}\lambda_t\sum_{k = 1}^{t-1}\widetilde{\eta}_k\langle \nabla f(\bxk), \bek \rangle \Big) \exp\left(2\rho^{-2}C^2\lambda_t^2\widetilde{\eta}_{t}^2\|\nabla f(\bx^{(t)})\|^2 \right) \Big],
    \end{align*} where the inequality follows from Lemma \ref{lm:error_component}. Applying \eqref{eq:part3} in the above inequality, repeating the arguments recursively and combining everything, we get
    \begin{equation}
    \label{eq:ncvx-zt-mgf}
        \E\left[\exp(\lambda_tZ_t) \right] \leq \exp\Big(\lambda_tB_{1,t} + 4\rho^{-2}C^2\lambda_t^2\Big({\|\nabla f(\bx^{(1)})\|^2}\sum_{k = 1}^{t}\widetilde{\eta}_k^2 + L^2C^2\sum_{k = 1}^{t}\widetilde{\eta}_k^2N_k^2\Big) \Big).
    \end{equation} Denote the log moment-generating function of $Z_t$ by $\Lambda_t$, i.e., $\Lambda_t(\lambda) \coloneqq \log \E[\exp(\lambda Z_t)]$. Taking the logarithm and dividing by $n_t$, we then get
    \begin{equation}\label{eq:upper-b}
        n_t^{-1}\Lambda_t(n_t\lambda) \leq \lambda B_{1,t} + 4n_t^{-1}\rho^{-2}C^2\lambda^2\Big({\|\nabla f(\bx^{(1)})\|^2}\sum_{k = 1}^{t}\widetilde{\eta}_k^2 + L^2C^2\sum_{k = 1}^{t}\widetilde{\eta}_k^2N_k^2\Big).
    \end{equation} Consider the right-hand side of \eqref{eq:upper-b}. From the definition of $B_{1,t}$, we have
    \begin{align}
    \label{eq:darboux}
        \lambda B_{1,t} = \rho^{-1}\lambda N_t^{-1}\Big(f(\bx^{(1)}) - f^\star + \nicefrac{LC^2}{2}\sum_{k = 1}^{t}\eta_k^2\Big).
    \end{align} From the choice of step-size $\eta_t = \nicefrac{a}{(t + 1)^\delta}$, $\delta  \in (\nicefrac{1}{2},1)$, and the Darboux sums, we get $N_t \geq a[(t+1)^{1-\delta} - 1]/(1-\delta)$ and $\sum_{k = 1}^{t}\eta_k^2 \leq a^2/(2\delta - 1)$. Therefore, it readily follows that $\lambda B_{1,t} = \mathcal{O}(t^{\delta - 1})$, implying that $\limsup_{t \rightarrow \infty}\lambda B_{1,t} = 0$. For ease of notation, let $R_1 \coloneqq 4\rho^{-2}C^2{\|\nabla f(\bx^{(1)})\|^2}$ and $R_2 \coloneqq 4\rho^{-2}C^4L^2$. Consider three different step-sizes. 
    \begin{enumerate}
        \item $\delta \in (\nicefrac{1}{2},\nicefrac{3}{4})$. First, consider the term $n_t\sum_{k = 1}^t\widetilde{\eta}_k^2N_k^2$. We then have
        \begin{align}
            n_t\sum_{k = 1}^t\widetilde{\eta}_k^2N_k^2 = n_tN_t^{-2}\sum_{k = 1}^t\eta_k^2N_k^2 &\leq \nicefrac{n_ta^2}{[(t+1)^{1-\delta}-1]^2}\sum_{k = 1}^t(k+1)^{2-4\delta} \label{eq:step1}
        \end{align} Using the Darboux sums and $\delta \in (\nicefrac{1}{2},\nicefrac{3}{4})$, we get $\sum_{k = 1}^t(k+1)^{2-4\delta} \leq (t+1)^{3-4\delta}/(3-4\delta)$. Plugging back into \eqref{eq:step1}, we get $n_t\sum_{k = 1}^t\widetilde{\eta}_kN_k^2 \leq \frac{a^2n_t(t+1)^{3-4\delta}}{(3-4\delta)[(t+1)^{1-\delta} - 1]^2}$. Choosing $n_t = t^{2\delta-1}$, it readily follows that $\limsup_{t \rightarrow \infty}n_t\sum_{k = 1}^t\widetilde{\eta}_k^2N_k^2 \leq a^2/(3-4\delta)$. Next, consider $n_t\sum_{k = 1}^t\widetilde{\eta}_k^2$. We then have $n_t\sum_{k = 1}^t\widetilde{\eta}_k^2 \leq \frac{(1-\delta)^2t^{2\delta-1}}{(2\delta - 1)[(t+1)^{1-\delta}-1]^2} = \mathcal{O}\left(t^{4\delta - 3}\right)$. From the choice $\delta \in (\nicefrac{1}{2},\nicefrac{3}{4})$, it follows that $\limsup_{t \rightarrow \infty}n_t\sum_{k = 1}^t\widetilde{\eta}_k^2 = 0$. As such, we get $\limsup_{t \rightarrow \infty} t^{1-2\delta}\Lambda_t(t^{2\delta-1}\lambda) \leq a^2R_2\lambda^2/(3-4\delta) \eqqcolon \varphi_1(\lambda)$.

        \item $\delta = \nicefrac{3}{4}$. In this case, from \eqref{eq:step1}, we can bound the term $n_t\sum_{k = 1}^t\widetilde{\eta}_k^2N_k^2$ as $n_t\sum_{k = 1}^t\widetilde{\eta}_k^2N_k^2 \leq \frac{n_ta^2}{[(t+1)^{\nicefrac{1}{4}} - 1]^2}\sum_{k = 1}^t\frac{1}{k+1} \leq \frac{n_ta^2\ln(t+1)}{[(t+1)^{\nicefrac{1}{4}}-1]^2}$. Similarly, we can bound $n_t\sum_{k = 1}^t\widetilde{\eta}_k^2$ by $n_t\sum_{k = 1}^t\widetilde{\eta}_k^2 \leq \frac{n_t}{8[(t+1)^{\nicefrac{1}{4}} - 1]^2}$. Combining, it can be readily seen that, for the choice $n_t = \frac{\sqrt{t}}{\ln(t)}$, we get $\limsup_{t \rightarrow \infty}\nicefrac{\ln(t)}{\sqrt{t}}\Lambda_t(\nicefrac{\lambda\sqrt{t}}{\ln(t)}) \leq a^2R_2\lambda^2 \eqqcolon \varphi_2(\lambda)$.

        \item $\delta \in (\nicefrac{3}{4},1)$. Following the same steps as in the previous case, it can be readily verified that, for the choice $n_t = t^{2(1-\delta)}$, we have 
        \begin{equation*}
            \limsup_{t \rightarrow \infty}t^{2(\delta - 1)}\Lambda_t(t^{2(1-\delta)}\lambda) \leq \left((1 - \delta)^2R_1 / (2\delta - 1) + a^2R_2/(4\delta - 3) \right)\lambda^2 \eqqcolon \varphi_3(\lambda).
        \end{equation*}
    \end{enumerate} Next, consider any $\lambda < 0$. In this case, we can trivially bound the log MGF as $\Lambda_t(n_t\lambda) \leq 0$. Defining $\varphi: \R \times (\nicefrac{1}{2},1) \mapsto [0,\infty)$ as $\varphi(\lambda,\delta) \coloneqq \widetilde{\varphi}(\lambda,\delta)$, if $\lambda \geq 0$, otherwise $\varphi(\lambda,\delta) \coloneqq 0$, where $\widetilde{\varphi}: \R_{+} \times (\nicefrac{1}{2},1) \mapsto [0,\infty)$ is given by 
    \begin{displaymath}
        \widetilde{\varphi}(\lambda,\delta) = \begin{cases} \varphi_1(\lambda), & \delta \in (\nicefrac{1}{2},\nicefrac{3}{4}) \\ 
        \varphi_2(\lambda), & \delta = \nicefrac{3}{4} \\
        \varphi_3(\lambda), & \delta \in (\nicefrac{3}{4},1)
        \end{cases},     
    \end{displaymath} it follows that $\limsup_{t \rightarrow \infty}n_t^{-1}\Lambda(n_t\lambda) \leq \varphi(\lambda)$, satisfying the conditions of Lemma \ref{lm:gartner-ellis}. Therefore, $Z_t = \min_{k \in [t]}\min\{\| \nabla f(\bxk)\|, \| \nabla f(\bxk)\|^2 \}$ satisfies the LDP upper bound with the rate function $I(x) = \widetilde{\varphi}^\star(x,\delta)$, if $x \geq 0$, otherwise $I(x) = +\infty$, where 
    \begin{displaymath}
        \widetilde{\varphi}^\star(x,\delta) = \begin{cases} 
        \frac{(3-4\delta)x^2}{4a^2R_2}, & \delta \in (\nicefrac{1}{2},\nicefrac{3}{4}) \\
        \frac{x^2}{4a^2R_2}, & \delta = \nicefrac{3}{4} \\
        \frac{(2\delta-1)(4\delta-3)x^2}{4(1-\delta)^2(4\delta-3)R_1 + 4a^2(2\delta - 1)R_2}, & \delta \in (\nicefrac{3}{4},1)
    \end{cases}.
    \end{displaymath} It can be noted that $I$ has compact sub-level sets, making it a good rate function. To provide a LDP upper bound for $X_t = \min_{k \in [t]}\|\nabla f(\bxk) \|^2$, we proceed as follows. Let $Y_t \coloneqq \min_{k \in [t]}\|\nabla f(\bxk)\|$ and note that $Z_t = \min\{Y_t,X_t\}$. It can then be seen that $X_t = h(Z_t)$, where $h: [0,\infty] \mapsto [0,\infty]$ is given by $h(x) = \max\{x,x^2\}$. Therefore, defining the continuous function $g: \R \mapsto \R$, given by $g(x) = h(x)$, if $x \geq 0$, otherwise $g(x) = x$, we can apply Lemma \ref{lm:contract-principle}, establishing a LDP upper bound for $X_t = g(Z_t)$, with decay rate $n_t$ and good rate function $I^\prime(y) = \inf\{I(x): x \in \R, \: g(x) = y\}$.
\end{proof}  

\begin{proof}[Proof of Corollary \ref{cor:str-cvx}]
    We start by recalling the definition of the Huber loss function \cite{huber_loss}, $H_{\lambda}: \R \mapsto [0,\infty)$, parametrized by $\lambda > 0$ and given by 
    \begin{equation*}
        H_{\lambda}(x) \triangleq \begin{cases}
            \frac{1}{2}x^2, & |x| \leq \lambda, \\
            \lambda|x| - \frac{\lambda^2}{2}, & |x| > \lambda.
        \end{cases}
    \end{equation*} By the definition of Huber loss, it is not hard to see that it is a convex, non-decreasing function on $[0,\infty)$. Moreover, by the definition of Huber loss, we have, for any $k \geq 1$
    \begin{equation}\label{eq:7}
    \begin{aligned}
        G_k = \min\{\|\nabla f(\bx^{(k)})\|,\|\nabla f(\bx^{(k)})\|^2 \} \geq H_{1}(\|\nabla f(\bx^{(k)})\|).
    \end{aligned}
    \end{equation} Next, using the properties of strong convexity, it can be shown that strong convexity implies the gradient domination property, i.e., $\|\nabla f(\bx) \| \geq \mu\|\bx - \bx^\star\|$, for any $\bx \in \R^d$. Plugging it in \eqref{eq:7} and defining the Polyak-Ruppert average $\widehat{\bx}^{(t)} \coloneqq \frac{1}{t}\sum_{k = 1}^t\bxk$, we get 
    \begin{equation}\label{eq:Huber-ineq}
        \frac{1}{t}\sum_{k = 1}^{t} G_k \geq \frac{1}{t}\sum_{k = 1}^{t}H_{1}(\mu\|\bx^{(k)} - \bx^\star\|) \geq \mu^2H_{1/\mu}(\|\widehat{\bx}^{(t)} - \bx^\star\|),
    \end{equation} where the first inequality follows from~\eqref{eq:7}, the gradient domination property and the fact that $H$ is non-decreasing, while the second inequality follows from the fact that $H$ is convex and non-decreasing, applying Jensen's inequality twice and noticing that $H_{\lambda}(\mu x) = \mu^2H_{\lambda/\mu}(x)$. Next, instead of dividing both sides of the inequality in \eqref{eq:part1} with $N_t$, we instead divide by $\frac{1}{t}$ and use the fact that the sequence of step-sizes is decreasing, to get
    \begin{equation*}
        \frac{\eta_t}{t}\sum_{k = 1}^tG_k \leq \frac{1}{t}\sum_{k = 1}^t\eta_kG_k \leq \frac{1}{\rho t}\Big(f(\bx^{(1)}) - f^\star + \nicefrac{LC^2}{2}\sum_{k = 1}^{t}\eta_k^2 + \sum_{k = 1}^{t}\eta_k\langle \nabla f(\bxk), \bek\rangle\Big).
    \end{equation*} Dividing both sides of the above inequality by $\eta_t$ and using \eqref{eq:Huber-ineq}, we get
    \begin{equation*}
        H_{1/\mu}(\|\widehat{\bx}^{(t)} - \bx^\star\|) \leq \frac{2t^{\delta-1}}{a\rho\mu^{2}}\Big(f(\bx^{(1)}) - f^\star + \nicefrac{LC^2}{2}\sum_{k = 1}^{t}\eta_k^2 + \sum_{k = 1}^{t}\eta_k\langle \nabla f(\bxk), \bek\rangle\Big).
    \end{equation*} The proof now follows the exact same steps as in Theorem \ref{thm:non-conv}, establishing a LDP upper-bound for the quantity $Z_t \coloneqq H_{1/\mu}(\|\widehat{\bx}^{(t)} - \bx^\star\|)$, with the rate function $I(x) = \widetilde{\varphi}^\star(x,\delta)$, if $x \geq 0$, otherwise $I(x) = +\infty$, where 
    \begin{displaymath}
        \widetilde{\varphi}^\star(x,\delta) = \begin{cases} 
        \frac{(3-4\delta)x^2}{4a^2R_2}, & \delta \in (\nicefrac{1}{2},\nicefrac{3}{4}) \\
        \frac{x^2}{4a^2R_2}, & \delta = \nicefrac{3}{4} \\
        \frac{(2\delta-1)(4\delta-3)x^2}{4(1-\delta)^2(4\delta-3)R_1 + 4a^2(2\delta - 1)R_2}, & \delta \in (\nicefrac{3}{4},1)
    \end{cases},
    \end{displaymath} where $R_1 \coloneqq 16\rho^{-2}\mu^{-4}C^2\|\nabla f(\bx^{(1)})\|^2$ and $R_2 \coloneqq 16\rho^{-2}\mu^{-4}C^4L^2$. It can be noted that $I$ has compact sub-level sets, making it a good rate function. To provide a LDP upper bound for $X_t = \|\widehat{\bx}^{(t)} - \bx^\star \|^2$, we proceed as follows. By the definition of Huber loss, it is not hard to see that $X_t = h(Z_t)$, where $h: [0,\infty] \mapsto [0,\infty]$ is given by 
    \begin{equation*}
        h(x) = \begin{cases}
            2x, & x \leq \frac{1}{2\mu^2} \\
            \mu^2\big(x + \frac{1}{2\mu^2}\big)^2, & x > \frac{1}{2\mu^2}
        \end{cases}.
    \end{equation*} Therefore, defining the continuous function $g: \R \mapsto \R$, given by $g(x) = h(x)$, if $x \geq 0$, otherwise $g(x) = x$, we can apply Lemma \ref{lm:contract-principle}, establishing a LDP upper bound for $X_t = g(Z_t)$, with decay rate $n_t$ and good rate function $I^\prime(y) = \inf\{I(x): x \in \R, \: g(x) = y\}$.
\end{proof}

\subsection{Proofs of Theorems \ref{thm:mse-ft}, \ref{thm:cvx-mse} and Corollary \ref{cor:mse}}\label{subsec:proofs-mse}

\begin{proof}[Proof of Theorem \ref{thm:mse-ft}]
    We start from the inequality \eqref{eq:L-smooth}, which gives
    \begin{equation*}
        f(\bxtp) \leq f(\bxt) - \rho\eta_tG_t + \eta_t\langle \nabla f(\bxt), \bet \rangle + \eta_t^2LC^2/2,
    \end{equation*} where $G_t \coloneqq \min\{\| \nabla f(\bxt)\|, \|\nabla f(\bxt)\|^2\}$. Take the conditional expectation, to get
    \begin{equation*}
        \E[f(\bxtp) \: \vert \: \mathcal{F}_t] \leq f(\bxt) - \rho\eta_tG_t + \eta_t^2LC^2/2.
    \end{equation*} Take the full expectation, sum up the first $t$ terms and divide by $N_t$, to get
    \begin{equation*}\label{eq:important}
        \rho Z_t \leq \rho\sum_{k = 1}^t\widetilde{\eta}_k\E G_k \leq N_t^{-1}\Big(f(\bx^{(1)}) - f^\star + LC^2/2\sum_{k = 1}^t\eta_k^2\Big),
    \end{equation*} where $Z_t \coloneqq \min_{k \in [t]}\E\min\{\|\nabla f(\bxk)\|,\|\nabla f(\bxk)\|^2 \}$. The result now readily follows by considering $N_t^{-1}$ and $\sum_{k = 1}^t\eta_k^2$ for different choices of step-size.
\end{proof}

\begin{proof}[Proof of Corollary \ref{cor:mse}]
    Let $X_t = \min_{k \in [t]}\|\nabla f(\bxk)\|^2$, $Y_t = \min_{k \in [t]}\|\nabla f(\bxk)\|$ and $Z_t = \min\{X_t,Y_t\}$. It then follows that
    \begin{align*}
        \E[X_t] &= \E[X_t\bi_{\{Y_t \leq 1\}} ] + \E[X_t\bi_{\{Y_t > 1\}}] = \E[Z_t\bi_{\{Y_t \leq 1\}}] + \E[X_t\bi_{\{Y_t > 1\}}] \\ &\leq \E[Z_t] + \sqrt{\E[X_t^2]\mathbb{P}(Y_t > 1)} = \E[Z_t] + \sqrt{\E[X_t^2]\mathbb{P}(X_t > 1)} ,
    \end{align*} where $\mathbb{I}_{A}$ is the indicator of event $A$, we used the definitions of $X_t,Y_t,Z_t$ in the second and H\"{o}lder's inequality in the third inequality. From the definition of $X_t$, we have $X_t^2 = \min_{k \in [t]}\|\nabla f(\bxk)\|^4 \leq C_1t^{4(1-\delta)}$, for some $C_1 > 0$, where we used \eqref{eq:part3} in the last inequality. To bound the second term, i.e., $\mathbb{P}(X_t > 1)$, we use Theorem 1 from \cite{armacki2023high}, which states that, for any $\delta \in (2/3,3/4)$ and any $\epsilon > 0$, we have 
    \begin{equation*}
        \mathbb{P}(X_t > \epsilon) \leq \exp\left(-\frac{t^{1-\delta}}{C_2}\big(\epsilon - C_3t^{\delta-1} + C_4t^{2-3\delta}  \big)\right),    
    \end{equation*} where $C_2 = \frac{2(1-\delta)}{a\beta}$, $C_3 = \frac{2(1-\delta)}{a\beta}\Big(f(\bx^{(1)})-f^\star + a^2C^2(L/2 + 8\|\nabla f(\bx^{(1)})\|^2) \Big)$ and $C_4 = \frac{16a^3C^4L^2}{\beta(1-\delta)(3-4\delta)}$.\footnote{Note that Theorem 1 in \cite{armacki2023high} contains higher-order terms, e.g., of the order $t^{2(\delta-1)}$ and $t^{4-6\delta}$, which are ignored, as they are not the dominating terms and for the ease of exposition. Also note that the constant $C_3$ in \cite{armacki2023high} contains the quantity $\inf_{\{\bx^\star: \nabla f(\bx^\star) = 0\}}\|\bx^{(1)}-\bx^\star\|^2$, however, this quantity can be replaced by $\|\nabla f(\bx^{(1)})\|^2$, following the same approach as in the proof of Theorem \ref{thm:non-conv}.} Noticing that $\delta-1,2-3\delta < 0$, since $\delta \in (2/3,3/4)$, it then follows that, for any $t \geq \max\{\sqrt[1-\delta]{4C_3},\sqrt[3\delta-2]{4C_4} \}$ and any $\delta \in (2/3,3/4)$, we have
    \begin{equation*}
        \mathbb{P}(X_t > 1) \leq \exp\left(-\nicefrac{t^{1-\delta}}{2C_2}\right),    
    \end{equation*} Combining everything, we get, for any $t \geq \max\{\sqrt[1-\delta]{4C_3},\sqrt[3\delta-2]{4C_4} \}$ and any $\delta \in (2/3,3/4)$
    \begin{equation*}
        \E[X_t] \leq \E[Z_t] + \sqrt{C_1}t^{2(1-\delta)}\exp(-\nicefrac{t^{1-\delta}}{2C_2}).
    \end{equation*} Using Theorem \ref{thm:mse-ft} to bound $E[Z_t]$, for the range $\delta \in (2/3,3/4)$, and noticing that this term is dominating, the result now readily follows.
\end{proof}

\begin{proof}[Proof of Theorem \ref{thm:cvx-mse}]
    From Lemma \ref{lm:key-unified}, we have
    \begin{align}
    \label{eq:gd-related-bd1}
    \begin{split}
        \E \Big[ \langle\nabla f(\bxt), & \boldsymbol{\Phi}^{(t)}\rangle  \Big] 
        \ge  \E \Big[ \langle\nabla f(\bxt), \boldsymbol{\Phi}^{(t)}\rangle  \bi_{\{\| \nabla f(\bxt) \|  \leq \alpha/\beta\}} \Big] \\ &= \beta \E \Big[ \|\nabla f(\bxt) \|^2 \bi_{\{\| \nabla f(\bxt ) \| \leq \alpha/\beta \}} \Big]
         \\ &= \beta  \Big( \E \Big[ \|\nabla f(\bxt) \|^2 \Big] -   \E \Big[ \|\nabla f(\bxt) \|^2 \bi_{\{\| \nabla f(\bxt) \| > \alpha/\beta\}} \Big] \Big),
    \end{split}
    \end{align} where $\bi_A$ is the indicator of event $A$. We now estimate the second expectation from the last line of the above relation. Using Hölder's inequality, we get 
    \begin{align}
    \label{eq:gd-related-bd2}
    \begin{split}
        \E \Big[ \|\nabla f(\bxt) \|^2 \bi_{\{\| \nabla f(\bxt) \| > \alpha/\beta \}} \Big] \leq \sqrt{ \E \big[ \| \nabla f(\bxt) \|^4 \big] \mathbb{P} \big( \| \nabla f(\bxt) \| > \alpha/\beta \big) }
    \end{split}
    \end{align}
    From Theorem 2 in \cite{armacki2023high}, convexity and $L$-smoothness of $f$, it follows that 
    \begin{equation}\label{eq:gd-tail-bd}
        \mathbb{P} \Big(\|\nabla f(\bx  ^t) \| > \alpha/\beta \Big) 
        \leq \mathbb{P} \Big( f(\bxt) - f^\star > \alpha^2/2\beta^2L \Big) \leq ee^{-\frac{B \alpha^2}{2L\beta^2} (t + 1)^{\min \{2a \mu \nu, 2\delta - 1\}}},
    \end{equation}
    for some problem related constants $\nu, B > 0$. Taking the full expectation in the first line in \eqref{eq:L-smooth} and combining with \eqref{eq:gd-related-bd1}, \eqref{eq:gd-related-bd2} and \eqref{eq:gd-tail-bd}, we have
    \begin{align*}
        \E[f(\bxtp)] &\leq \E[f(\bxt)] - \beta\eta_t\E \big[ \|\nabla f(\bxt) \|^2\big] + \frac{\eta^2_tLC^2}{2} \\ &+  \beta\eta_t e^{\frac{1}{2}-\frac{B \alpha^2}{4L\beta^2} (t + 1)^{\min \{2a \mu \nu, 2\delta - 1\}}}\sqrt{ \E \big[ \| \nabla f(\bxt) \|^4 \big] }  
    \end{align*} Next, from \eqref{eq:part3} and the choice of step-size, we know that $\|\nabla f(\bxt)\|^4 \leq C_1t^{4(1-\delta)}$, where $C_1 > 0$ is some problem related constant. Combining everything, we finally get 
    \begin{align*}
        \E[f(\bx^{t + 1})] 
        & \leq \E[f(\bxt)] - \frac{a \beta } {(t+1)^\delta} \E[\| \nabla f(\bxt) \|^2 ] + \frac{a^2 L C^2}{2 ( t+ 1)^{2\delta}}  \\
        & + a \beta C_2 (t+1)^{2 -3\delta} \exp  \left( \frac{1}{2} -\frac{B \alpha^2}{4L\beta^2 } (t + 1)^ {\min \{ 2a \mu \nu, 2\delta - 1\} } \right),
    \end{align*} where $C_2 > 0$ is some problem related constant. Using $\mu$-strong convexity and Lemma 4 (pp. 45) in \cite{polyak1987introduction} in the above relation, we obtain the desired result. 
\end{proof}

\subsection{Proof of Theorem \ref{thm:ncvx-as}}\label{subsec:proofs-as}

\begin{proof} 
    We first prove the claim for non-convex costs. Define
    \begin{align*}
        \lambda_t(\delta) = \begin{cases}
            (t+1)^{\delta - \nicefrac{1}{2}}, & \delta \in (\nicefrac{1}{2}, \nicefrac{3}{4})  \\
            (t+1)^{\nicefrac{1}{4} - \epsilon}, & \delta = \nicefrac{3}{4} \text{ and } \epsilon \in (0, \nicefrac{1}{4})  \\ 
            (t+1)^{1 - \delta}, & \delta \in (\nicefrac{3}{4}, 1) 
        \end{cases}.
    \end{align*} We will prove that $\lambda_t(\delta) (t+1)^{-\epsilon_1} Z_t \xrightarrow{\text{a.s.}}  0$, for all $\epsilon_1 < \min\{ 1-\delta, \delta-\nicefrac{1}{2}, \nicefrac{1}{4}-\epsilon\}$, where $Z_t = \min_{k \in [t]}\min\{\|\nabla f(\bxk)\|,\|\nabla f(\bxk)\|^2 \}$. We begin by showing that $\E \big[e^{\lambda_t(\delta)Z_t}\big] \leq C_1$, for some $C_1 > 0$. From \eqref{eq:ncvx-zt-mgf} and using $\lambda_t = \lambda_t(\delta)$, it follows that   
    \begin{align*}
     \E\big[ \exp(\lambda_t(\delta) Z_t) \big]
    & \le  \exp\Big[  (\rho N_t)^{-1}  \Big( f(\bx^{(1)}) - f^\star + LC^2/2\sum_{k = 1}^{t}\eta_k^2 \Big) \lambda_t(\delta)  \\
    & + 4 \rho^{-2} C^2 \Big( {\|\nabla f(\bx^{(1)})\|^2}N_t^{-2}\sum_{k=1}^t \eta_k^2  + L^2C^2 \sum_{k=1}^t \widetilde{\eta}_k^2 N_k^2  \Big) \lambda_t^2(\delta) \Big]
    \end{align*}
    Substituting the bounds from \eqref{eq:darboux} and \eqref{eq:step1} into the above relation gives
    \begin{align*}
        \ln \E & \big[ e^{\lambda_t(\delta) Z_t} \big]
        \le \underbrace{\frac{(1-\delta)\lambda_t(\delta)}{a\rho[(t + 1)^{1 - \delta} - 1]}}_{\eqqcolon R_{1, t}} \left( f(\bx^{(1)}) - f^\star + \frac{a^2 L C^2}{2(2 \delta - 1) }\right)  \\ 
        & + \underbrace{\frac{4 C^2 {\|\nabla f(\bx^{(1)})\|^2} (1-\delta)^2\lambda_t^2 (\delta)}{\rho^2(2\delta-1)[(t + 1)^{1- \delta} - 1]^2}}_{\eqqcolon R_{2, t}} + \underbrace{\frac{4 a^2 C^4 L^2\lambda_t^2(\delta) \sum_{k=1}^t (k + 1)^{2 - 4\delta}}{\rho^{2}[(t + 1)^{1 - \delta} - 1]^2}}_{\eqqcolon R_{3, t}}.
    \end{align*} Consider different choices of $\delta$. For $\delta \in (\nicefrac{1}{2}, \nicefrac{3}{4})$, one has $\delta - \nicefrac{1}{2} < 1 - \delta$, implying $R_{1,t}, R_{2, t} < \infty$, as well as  $\sum_{k=1}^t (k+1)^{2-4\delta} \le (t+1)^{3 - 4\delta}/(3 - 4\delta)$, thus $R_{3, t} < \infty$. For $\delta = \nicefrac{3}{4}$, we have $R_{1, t}, R_{2,t} \rightarrow 0$ and $\sum_{k=1}^t (k+1)^{2-4\delta} \le \ln (t + 1)$, hence $R_{3, t} = \mathcal{O}((t+1)^{-2\epsilon} \ln (t + 1)) \rightarrow 0$. For $\delta \in (\nicefrac{3}{4}, 1)$, $R_{1, t}, R_{2,t} = \mathcal{O}(1)$ and $\sum_{k=1}^t (k+1)^{2 - 4\delta} \le 1/(4\delta - 3)$, therefore $R_{3, t} = \mathcal{O}(1)$. Combing the three cases, it readily follows that $\E [e^{\lambda_t(\delta) Z_t}] \leq C_1$, for some constant $C_1 > 0$. Next, fix $\epsilon_1 < \min\{ 1-\delta, \delta-\nicefrac{1}{2}, \nicefrac{1}{4}-\epsilon\}$, $\varepsilon > 0$ and define the event $E_t^\varepsilon = \{ \lambda_t(\delta)(t+1)^{-\epsilon_1}Z_t \ge \varepsilon \}$. We then have $\sum_{t = 1}^\infty \mathbb{P}(E_t^{\varepsilon}) \le \sum_{t = 1}^\infty e^{-\varepsilon(t+1)^{\epsilon_1}} \E \big[e^{\lambda_t(\delta)Z_t}\big] < \infty.$ By Borel-Cantelli lemma, for any $\varepsilon > 0$, there exists some $T_\varepsilon$, such that $\forall t \ge T_\varepsilon$, we have $\lambda_t(\delta)(t+1)^{-\epsilon_1} Z_t < \varepsilon$ almost surely, proving the first part. For the second part, first take the conditional expectation in \eqref{eq:L-smooth}, to get
    \begin{align}
    \label{eq:cvx-main-cond}
    \begin{split}
        \E \big[ f (\bxtp) \mid \mathcal{F}_{t} \big] 
        & \leq f(\bxt) - \eta_t\langle \nabla f(\bxt), \bPhi^{(t)} \rangle + \eta_t^2LC^2/2. 
    \end{split}
    \end{align}
    From Lemma 3.4 in \cite{armacki2023high}, there exists some constant $\nu > 0$, such that  
    \begin{align}
    \label{eq:gd_acute_angle}
        \langle \nabla f(\bxt), \bPhi^{(t)} \rangle \ge \nu ( t+ 1)^{\delta- 1} \| \nabla f(\bx^{(t)} \|^2. 
    \end{align} Combing \eqref{eq:cvx-main-cond},  \eqref{eq:gd_acute_angle} and $\mu$-strong convexity, it follows that
    \begin{align}
    \label{eq:cvx-mse-mrec}
        \E[f(\bxtp) - f^\star \mid \mathcal{F}_t ] \le \Big(1 - \frac{2a \mu \nu }{t + 1} \Big) \big(f(\bxt) - f^\star \big) + \frac{a^2 LC^2}{2(t + 1)^{2\delta}}.
    \end{align} 
    Let $\tau < \min \{2a \mu \nu, 2\delta - 1\}$. Since $\tau \le 1$,  $(t + 2)^\tau$ is concave in $t$ and we have $(t + 2)^\tau \le (t + 1)^\tau + \tau (t + 1)^{\tau  - 1} = (t + 1)^\tau[ 1 + \tau(t + 1)^{-1}]$, as well as $(t + 2)^\tau = (t + 1)^\tau\big( 1 + \frac{1}{t + 1}\big)^\tau \le e^\tau (t + 1)^\tau$. Using the two relations, letting $v^t = (t + 1)^\tau(f(\bxt) - f^\star)$, and multiplying both sides of \eqref{eq:cvx-mse-mrec} by $(t + 2)^\tau$, we obtain $\E [v^{t + 1} \mid \mathcal{F}_t] \le \left(1 - \frac{2a\mu \nu - \tau}{t + 1}\right) v^t + \frac{e^\tau a^2 LC^2}{2(t + 2)^{2\delta - \tau}}$. Applying Lemma 10 from \cite{polyak1987introduction} (pp. 49-50) leads to the desired result.  
\end{proof}

\section{Proof of Generalized Gartner-Ellis Theorem}\label{app:garnter-ellis} In this section we provide a proof of Lemma \ref{lm:gartner-ellis}. We begin by defining the concept of \emph{exponential tightness}.

\begin{definition}
    Let $\cal X$ be a topological space and $\cal B$ be the associated Borel $\sigma$-algebra. Assume that $\cal B$ contains all the compact subsets of $\cal X$. A family of probability measures $\{\mu_\epsilon \}$ on $\cal X$ is exponentially tight if for every $\alpha < +\infty$, there exists a compact set $K_\alpha \subset \cal X$, such that $\limsup_{\epsilon \rightarrow 0} \epsilon\log \mu_\epsilon(K^c_\alpha) < -\alpha$.
\end{definition}

Next, we state an intermediate result used in the proof. The proof is omitted and can be found in, e.g., \cite[Lemma 1.2.18]{dembo2009large}. 

\begin{lemma}\label{lm:intermed}
    Let $\{\mu_t \}_{t \in \N}$ be an exponentially tight family. If the large deviations upper-bound holds for all compact sets, then it also holds for all closed sets.
\end{lemma}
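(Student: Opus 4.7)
The plan is to fix an arbitrary closed set $F \subset \mathcal{X}$ and an arbitrary $\alpha < \infty$, then combine the hypothesized compact-set upper bound on a truncation of $F$ with the exponential-tightness tail bound outside the truncating compact set, and finally let $\alpha \to \infty$.

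More concretely, by exponential tightness, pick a compact set $K_\alpha \subset \mathcal{X}$ (reparametrized to the paper's decay rate $n_t \to \infty$) such that $\limsup_{t \to \infty} n_t^{-1} \log \mu_t(K_\alpha^c) \leq -\alpha$. Since $F$ is closed and $K_\alpha$ is compact, the intersection $F \cap K_\alpha$ is a closed subset of a compact set, hence itself compact. First, I would apply the hypothesis (LDP upper bound on compact sets) to $F \cap K_\alpha$, obtaining
\begin{equation*}
    \limsup_{t \to \infty} \frac{1}{n_t} \log \mu_t(F \cap K_\alpha) \leq -\inf_{x \in F \cap K_\alpha} I(x) \leq -\inf_{x \in F} I(x),
\end{equation*}
where the second inequality follows because $F \cap K_\alpha \subset F$.

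Next, I would combine this with the tightness tail estimate via the elementary bound $\mu_t(F) \leq \mu_t(F \cap K_\alpha) + \mu_t(K_\alpha^c)$ and the standard ``log-of-sum'' inequality $\log(a+b) \leq \log 2 + \max(\log a, \log b)$. Dividing by $n_t$ and taking $\limsup$, the $\log 2 / n_t$ term vanishes since $n_t \to \infty$, yielding
\begin{equation*}
    \limsup_{t \to \infty} \frac{1}{n_t} \log \mu_t(F) \leq \max\!\left\{-\inf_{x \in F} I(x),\; -\alpha\right\}.
\end{equation*}
Since $\alpha$ was arbitrary, letting $\alpha \to \infty$ gives the desired bound $\limsup_{t \to \infty} n_t^{-1} \log \mu_t(F) \leq -\inf_{x \in F} I(x)$.

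The argument is essentially bookkeeping once the decomposition $F = (F \cap K_\alpha) \cup (F \cap K_\alpha^c)$ is set up, so there is no real obstacle; the only subtlety worth stating carefully is the justification that $F \cap K_\alpha$ is compact (requiring $\mathcal{X}$ to be Hausdorff, which is the standard LDP setting and matches the hypothesis of Lemma \ref{lm:contract-principle}) so that the compact-set hypothesis is genuinely applicable, and the correct handling of the $\max$ when passing through the $\limsup$ of a logarithm of a sum.
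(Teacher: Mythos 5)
Your proof is correct: the paper itself omits the argument and simply cites \cite{dembo2009large} (Lemma 1.2.18), and your decomposition $\mu_t(F)\leq \mu_t(F\cap K_\alpha)+\mu_t(K_\alpha^c)$, the compactness of $F\cap K_\alpha$ as a closed subset of a compact set, the log-of-sum bound, and letting $\alpha\to\infty$ is exactly that standard argument, correctly adapted to the general decay rate $n_t$. No gaps.
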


We are now ready to prove Lemma \ref{lm:gartner-ellis}.

\begin{proof}[Proof of Lemma \ref{lm:gartner-ellis}]
    The proof largely follows the idea of Cramer's theorem, e.g., \cite[Theorem 2.2.30]{dembo2009large}. In order to establish the LDP upper bound, it suffices to show that, for every $\delta > 0$ and closed set $F \subset \R^d$ 
    \begin{equation*}
        \limsup_{t \rightarrow \infty} \frac{1}{n_t}\log \mu_t(F) \leq \delta - \inf_{\bx \in F}I^{\delta}(\bx),
    \end{equation*} where $I^\delta(\bx) = \min\left\{\varphi^\star(\bx) - \delta, \frac{1}{\delta} \right\}$ is the truncated $\delta$-rate function, see, e.g., \cite{dembo2009large}. We first prove the inequality for compact sets. For a compact set $G \subset \R^d$ and any $\bq \in G$, choose $\bv_{\bq} \in \R^d$, so that $\langle\bq,\bv_{\bq}\rangle - \varphi(\bv_{\bq}) \geq I^\delta(\bq)$, noting that such choice is possible, since $I^\delta(\bq) \leq \varphi^\star(\bq) = \sup_{\bv \in \R^d}\left\{\langle \bq,\bv\rangle - \varphi(\bv) \right\}$. Next, for each $\bq$, choose $\rho_{\bq} > 0$, such that $\rho_{\bq}\|\bv_{\bq}\| \leq \delta$, and define $B_{\bq} = \{\bx: \|\bx - \bq\| \leq \rho_{\bq}\}$. For $t \in \N$, $\bv \in \R^d$ and measurable set $A \subset \R^d$, we then have 
    \begin{equation*}
        \mu_t(A) = \mathbb{P}(\bX_t \in A) \leq \mathbb{P}\left(\langle \bv, \bX_t\rangle \geq c_{\bv,A}\right) \leq \exp\left(-c_{\bv,A}\right)\E\left[\exp\left( \langle\bv, \bX_t\rangle \right) \right],    
    \end{equation*} where $c_{\bv,A} = \inf_{\by \in A} \langle\bv, \by\rangle$. Defining $c_{\bv_{\bq},B_{\bq}} = \inf_{\bx \in B_{\bq}} \langle\bv_{\bq},\bx\rangle$ and applying the inequality to $B_{\bq}$, we get, for any $\bq \in G$ 
    \begin{equation*}
        \mu_{t}(B_{\bq}) \leq \exp\left(-n_tc_{\bv_{\bq},B_{\bq}}\right)\E\left[\exp\left( n_t\langle\bv_{\bq}, \bX_t\rangle \right) \right].    
    \end{equation*} Similarly, we have
    \begin{equation*}
        -\inf_{\bx \in B_{\bq}}\langle\bv_{\bq},\bx\rangle = -\inf_{\bx \in B_{\bq}}\langle\bv_{\bq},\bx - \bq\rangle - \langle \bq,\bv_{\bq}\rangle \leq \delta - \langle \bq,\bv_{\bq}\rangle,
    \end{equation*} where the last inequality follows from $\|\bx - \bq\|\leq \rho_{\bq}$, for any $\bx \in B_{\bq}$ and $\rho_{\bq}\|\bv_{\bq}\| \leq \delta$. Using this bound, we then get
    \begin{equation}\label{eq:upper_bdd}
        \frac{1}{n_t}\log\mu_{t}(B_{\bq}) \leq \delta - \langle\bq,\bv_{\bq}\rangle + \frac{1}{n_t}\Lambda(n_t\bv_{\bq}).
    \end{equation} Since the set $G$ is compact, we can extract a finite sub-cover of $\cup_{\bq \in G}B_{\bq}$, e.g., $B_{\bq_i}$, $i = 1,\ldots,N$, for some $N \geq 1$, therefore 
    \begin{equation*}
        \frac{1}{n_t}\log\mu_{t}(G) \leq \frac{1}{n_t}\log\mu_{t}(\cup_{i = 1}^NB_{\bq_i}) \leq \frac{1}{n_t}\log\Big(\sum_{i = 1}^N\mu_t(B_{\bq_i})\Big).
    \end{equation*} Combining everything, we get
    \begin{align*}
        \limsup_{t \rightarrow \infty}&\frac{1}{n_t}\log\mu_t(G) \leq \limsup_{t \rightarrow \infty}\frac{1}{n_t}\log\Big(\sum_{i = 1}^N\mu_t(B_{\bq_i})\Big) = \max_{i \in [N]}\limsup_{t \rightarrow \infty}\frac{1}{n_t}\log\mu_t(B_{\bq_i}) \\ &\leq \max_{i \in [N]}\delta - \langle \bq_i,\bv_{\bq_i} \rangle + \limsup_{t \rightarrow\infty}\frac{1}{n_t}\Lambda(n_t\bv_{\bq_i}) \leq \max_{i \in [N]}\delta - \langle\bq_i,\bv_{\bq_i}\rangle + \varphi(\bv_{\bq_i}),
    \end{align*} where we used the fact that $\limsup_{\epsilon \rightarrow 0}\epsilon\log\left(\sum_{i = 1}^Na^i_\epsilon\right) = \max_{i \in [n]}\limsup_{\epsilon \rightarrow 0}\epsilon\log a^i_\epsilon$, we use \eqref{eq:upper_bdd} in the second inequality and use $\limsup_{t \rightarrow\infty}\nicefrac{1}{n_t}\Lambda_t(n_t\bv) \leq \varphi(\bv)$ in the last one. Rearrange, to get 
    \begin{equation*}
        \limsup_{t \rightarrow \infty}\frac{1}{n_t}\log\mu_t(G) \leq \delta - \min_{i \in [N]}\langle\bq_i,\bv_{\bq_i}\rangle - \varphi(\bv_{\bq_i}) \leq \delta - \min_{i \in [N]}I^\delta(\bq_i),   
    \end{equation*} where the last inequality follows from the choice of $\bv_{\bq_i}$. Since $\bq_i \in G$, taking the infimum over the entire $G$ gives the result. 

    Next, in order to extend the result to closed sets, Lemma \ref{lm:intermed} states that it suffices to show that $\{\mu_t\}_{t \in \N}$ is exponentially tight. To that end, let $\be_i, \: i = 1,\ldots,d$ denote the unit vectors in $\R^d$. Since $\varphi(\bv) < \infty$ for all $\bv \in \R^d$, it follows that there exist $\theta_i, \pi_i > 0$ such that $\varphi(\theta_i\be_i) < \infty$ and $\varphi(-\pi_i\be_i) < \infty$, for all $i = 1,\ldots,d$. Therefore,
    \begin{align*}
        \mu_t^i\left((-\infty,-\rho]\right) = \mathbb{P}\left(X^i_t \leq -\rho \right) &\leq \exp\left(-n_t\pi_i\rho\right)\E\left[\exp(-n_t\pi_iX^i_t) \right] \\ &= \exp\left(-n_t\pi_i\rho + \Lambda_t(-n_t\pi_i\be_i) \right),
    \end{align*} and, similarly,
    \begin{equation*}
        \mu_t^i\left([\rho,\infty)\right) \leq \exp\left(-n_t\theta_i\rho + \Lambda_t(n_t\theta_i\be_i) \right),
    \end{equation*} for all $i = 1,\ldots,d$, where $\mu^i_t$ are the laws of (the marginals of) coordinates of the random vector $\bX_t = \begin{bmatrix}X_t^1 & \ldots & X_t^d \end{bmatrix}^\top$. Hence, for any $i = 1,\ldots,d$, we have
    \begin{align*}
        \lim_{\rho \rightarrow \infty}\limsup_{t \rightarrow \infty}\frac{1}{n_t}\log\mu^i_t((-\infty,-\rho]) &= \lim_{\rho \rightarrow \infty}\limsup_{t \rightarrow \infty}\Big( -\pi_i\rho + \frac{1}{n_t}\Lambda_t(-n_t\pi_i\be_i)\Big) \\ &\leq -\infty + \varphi(-\pi_i\be_i) = -\infty.
    \end{align*} Similarly, we have, for any $i = 1,\ldots,d$
    \begin{equation*}
        \lim_{\rho \rightarrow \infty}\limsup_{t \rightarrow \infty}\frac{1}{n_t}\log\mu^i_t((-\infty,-\rho]) = -\infty.
    \end{equation*} Therefore, combining everything, we have
    \begin{align*}
        \lim_{\rho \rightarrow \infty}&\limsup_{t \rightarrow \infty}\frac{1}{n_t}\log\mu_t\left(([-\rho,\rho]^d)^c\right) \\ &\leq \lim_{\rho \rightarrow \infty}\limsup_{t \rightarrow \infty}\frac{1}{n_t}\log\left(\sum_{i = 1}^d\left[\mu_t^i((-\infty,-\rho]) + \mu_t^i([\rho,\infty))\right]\right) = -\infty,
    \end{align*} implying that $\{\mu_t\}_{t \in \N}$ is an exponentially tight sequence of measures.
\end{proof}

\end{document}